\documentclass{article}
\pdfoutput=1


\PassOptionsToPackage{sort}{natbib}
\usepackage[preprint, nonatbib]{neurips_2020}




\usepackage{microtype}
\usepackage{graphicx}
\usepackage{subfigure}
\usepackage{booktabs} 
\usepackage{amssymb}
\usepackage{bm}
\usepackage{bbm}
\usepackage{amsmath}  
\usepackage{amsthm}
\usepackage{xcolor}
\usepackage{textcomp}
\usepackage{multirow}
\usepackage{mathtools}
\usepackage{algorithm}
\usepackage[noend]{algorithmic}
\usepackage{caption}
\usepackage{xspace}
\usepackage{wrapfig}
\usepackage{verbatim}

\usepackage{tikz}
\usetikzlibrary{arrows}
\usetikzlibrary{positioning}

\tikzset{
  treenode/.style = {align=center, inner sep=0pt, text centered,
    font=\sffamily},
  arn_n/.style = {treenode, circle, black, font=\sffamily\bfseries, draw=black,
    fill=white, text width=1.5em},
  arn_r/.style = {treenode, circle, black, font=\sffamily\bfseries, draw=black,
    fill=white, text width=1.0em},
  arn_x/.style = {treenode, rectangle, draw=black,
    minimum width=0.5em, minimum height=0.5em}
}

\usepackage{hyperref}


\usepackage{bm}
\usepackage{enumitem}

\newtheorem{theorem}{Theorem}
\newtheorem{lemma}{Lemma}
\newtheorem{corollary}{Corollary}

\newtheorem{definition}{Definition}

\newtheorem{proposition}{Proposition}




\newcommand{\E}[2]{\mathbb{E}_{#1}\left[#2\right]}
\newcommand{\Ehat}[1]{\hat{\mathbb{E}}\left[#1\right]}


\newcommand{\ind}[1]{\mathbbm{1}\left\{#1\right\}}



\newcommand{\supp}[0]{\text{supp}}
\newcommand{\dist}[0]{d_{\mathcal{Z}}}
\newcommand\independent{\protect\mathpalette{\protect\independenT}{\perp}}
\def\independenT#1#2{\mathrel{\rlap{$#1#2$}\mkern2mu{#1#2}}}
\newcommand{\sign}[1]{\text{sign}(#1)}
\newcommand{\gdep}[0]{G_{\text{dep}}}
\newcommand{\ydep}[1]{Y^{\text{dep}}(#1)}



\newcommand{\D}[0]{\mathcal{D}}

\newcommand{\lf}[0]{\lambda}

\newcommand{\R}[0]{\mathcal{R}}
\newcommand{\X}[0]{\mathcal{X}}



\ifdefined\ShowNotes
  \newcommand{\colornote}[3]{{\color{#1}\bf{#2 #3}\normalfont}}
\else
  \newcommand{\colornote}[3]{}
\fi

\definecolor{darkred}{rgb}{0.7,0.1,0.1}
\definecolor{darkgreen}{rgb}{0.1,0.5,0.1}
\definecolor{cyan}{rgb}{0.7,0.0,0.7}
\definecolor{dblue}{rgb}{0.2,0.2,0.8}
\definecolor{maroon}{rgb}{0.76,.13,.28}
\definecolor{burntorange}{rgb}{0.81,.33,0}
\definecolor{royalpurple}{rgb}{0.47,.31,0.66}

\ifdefined\ShowNotes
  \newcommand{\num}[1]{{\color{red}\bf{#1}\normalfont}}
\else
  \newcommand{\num}[1]{#1}
\fi

\newcommand{\sysx}{\textsc{Epoxy}\xspace}

\newcommand{\spam}{\textbf{Spam}}
\newcommand{\spouse}{\textbf{Spouse}}
\newcommand{\weather}{\textbf{Weather}}

\newcommand{\commercial}{\textbf{Commercial}}
\newcommand{\tennis}{\textbf{Tennis}}
\newcommand{\basketball}{\textbf{Basketball}}

\newcommand{\avgliftoverlmall}{\num{4.1}}

\newcommand{\avgliftovertl}{\num{12.8}}
\newcommand{\avgliftovertsft}{\num{13.1}}
\newcommand{\avgliftoverfixedr}{\num{3.0}}
\newcommand{\avgliftdnoverepoxy}{\num{0.7}}
\newcommand{\avgliftdnovertsft}{\num{13.8}}

\newif\ifarxiv
\arxivtrue

\title{Train and You'll Miss It: Interactive Model Iteration with Weak Supervision and Pre-Trained Embeddings}

\author{
Mayee F. Chen$^*$ \quad Daniel Y. Fu$^{*\dagger}$ \quad Frederic Sala \quad Sen Wu \quad Ravi Teja Mullapudi \\
\textbf{Fait Poms \quad Kayvon Fatahalian \quad Christopher R\'e} \\
\texttt{\{mfchen, danfu, fredsala, senwu, fpoms, kayvonf, chrismre\}@cs.stanford.edu,} \\
\texttt{mullapudi@cs.cmu.edu}
}

\begin{document}

\maketitle


\begin{abstract}

Our goal is to enable machine learning systems to be trained interactively.
This requires models that perform well and train quickly, without large
amounts of hand-labeled data.
We take a step forward in this direction by borrowing from weak supervision
(WS), wherein models can be trained with noisy sources of signal
instead of hand-labeled data.
But WS relies on training downstream deep networks to
extrapolate to unseen data points, which can take hours or days.
Pre-trained embeddings can remove this requirement.
We do not use the embeddings as features as in transfer learning (TL),
which requires fine-tuning for high performance, but instead use them to define
a distance function on the data and extend WS source votes to
nearby points.
Theoretically, we provide a series of results studying how performance
scales with changes in source coverage, source accuracy, and the
Lipschitzness of label distributions in the embedding space, and
compare this rate to standard WS without extension and TL without
fine-tuning.
On six benchmark NLP and video tasks, 
our method outperforms WS without extension by \avgliftoverlmall\ points,
TL without fine-tuning by \avgliftovertl\
points,
and traditionally-supervised deep networks by \avgliftovertsft\ points,
and comes within \avgliftdnoverepoxy\ points of state-of-the-art
weakly-supervised deep
networks---all while training in less
than \num{half a second}.

\end{abstract}


\section{Introduction}
\label{sec:intro}

The introduction of the interactive console in 1962 revolutionized how
people write computer programs~\cite{corbato1962experimental}. 
Machine learning similarly stands to benefit from enabling
\textit{programmatically-interactive} model iteration cycles---training models,
inspecting their results, and fixing failure modes in seconds, instead of hours
or days.
This requires models that perform well and train quickly, without requiring large
amounts of hand-labeled data.
Modern deep learning models can achieve high performance, but only at the
cost of heavy training over large amounts of labeled data to optimize feature representations,
precluding the possibility of such interactive timescales.
In this paper, we build models that can be iterated on interactively and
achieve high performance without having to optimize features through
training.

To do so, we borrow from work in weak supervision (WS), which enables more
interactive interfaces for model iteration.
WS methods automatically
generate probabilistic labels for training data from multiple weak
sources---such as heuristics,
external
knowledge bases, and user-defined functions \cite{gupta2014improved, Ratner19,
dehghani2017neural,dehghani2017learning,
jia2017constrained,mahajan2018exploring,niu2012deepdive,
karger2011iterative, dawid1979maximum, mintz2009distant, zhang:cacm17,
hearst1992automatic} instead of relying on hand labels.
WS has been used throughout industry~\cite{sheng2020gmail, re2019overton,
bach2018snorkel, shu2020learning} and academia~\cite{fries2018weakly,saab2020weak}
to enable more interactive model iteration; instead of 
adjusting model behavior by hand-labeling training data, users instead write 
heuristics or tweak existing sources to change the training labels.

The WS probabilistic labels can be used directly as
final model predictions, but can have trouble extrapolating to unseen data
points.
WS frameworks often train downstream deep networks to achieve good performance
in these cases.
Deep networks can pull out more general signals from the data than are encoded
in the weak sources, which often have low coverage over the dataset and thus do not
extrapolate well on their own.
Figure~\ref{fig:generalization} (middle top) shows a representative example.
In text applications, deep networks can find synonyms to
user-provided key words or relax word order, whereas user-provided heuristic
functions often encode specific phrases.
But training these deep networks can take hours or
days, slowing down model iteration past interactive
timescales.

How can we get the interactivity benefits of programmatic WS
without having to train deep networks for good performance?
The popularity of transfer learning (TL) suggests that using deep networks
pre-trained on large datasets can reduce
training costs~\cite{devlin2018bert, deng2009imagenet, kolesnikov2019large}.
But TL often requires fine-tuning 
to achieve high
accuracies---since pre-trained embeddings may not contain
enough information to be globally
discriminative (Figure~\ref{fig:generalization} middle
bottom).
Although fine-tuning is cheaper than training from
scratch, it still requires many iterations of gradient descent.

\begin{figure*}[t]
  \centering
  \includegraphics[width=5.5in]{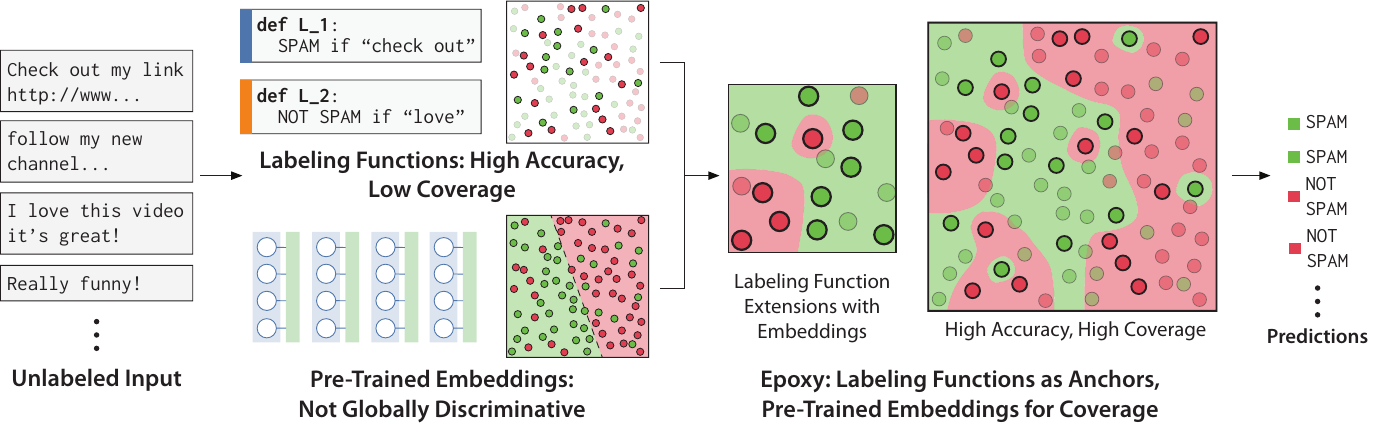}
  \caption{
    Weak sources may have high accuracy but only cover a limited subset of the
    data.
    Pre-trained embeddings may not be globally discriminative without
    fine-tuning.
    \sysx\ combines both by using pre-trained embeddings to
    locally extend labeling sources, achieving high overall performance.
  }
  \label{fig:generalization}
\end{figure*}

We propose a different mechanism for using pre-trained embeddings to enable
high-quality WS without training a deep network.
Although pre-trained embeddings might not be sufficiently discriminative to
use as features alone, we observe that they can often be used to
create a sufficiently-reliable local notion of distance between points (essentially
a type of kernel).
We use this distance notion to smoothly transfer source votes to points that are
nearby in embedding space (Figure~\ref{fig:generalization} right)---exploiting
the idea that nearby points in a relevant embedding space should have similar
labels.
This mechanism is similar to semi-supervised techniques like label
propagation~\cite{zhur2002learning, iscen2019label}, but we use the machinery
of WS to anchor on weakly-labeled points instead of assuming gold
labels (and we do not tune the embeddings).
Perhaps surprisingly, our approach, named \sysx, can sometimes recover the
performance of state-of-the-art weakly-supervised downstream deep networks, and
outperform traditionally-supervised deep
networks---without having to rely on an
expensive training procedure.

Theoretically, we provide a sequence of results evaluating how much signal we
can extract from pre-trained embeddings using
\sysx. First, we define a notion of
probabilistic Lipschitzness~\cite{kushagra2015} to describe the label space's smoothness, with which we can tightly characterize the improvement in rate of
convergence and generalization error of \sysx\ in the size of the (unlabeled)
data over WS without source extensions. We show that improvement depends on two factors: the increase in coverage from extending weak labels, and the accuracy of the extension.
Next, we show how the Lipschitzness of the task label distribution is related to the
intrinsic quality of the model used to produce the pre-trained embedding, and we use this connection to bound how close we are to the performance of such models.
Finally, we describe the conditions under which our method
outperforms TL without fine-tuning; we do so by viewing
the weak sources as an ensemble of specialized models.

We empirically validate \sysx\ on six benchmark NLP and video applications
that have been used to evaluate previous state-of-the-art weak supervision
frameworks (and thus have weak sources readily
available)~\cite{Ratner18, fu2020fast}.
\sysx\ outperforms WS without source extensions and without training a
downstream deep network by
\avgliftoverlmall\ points, and outperforms
non-finetuned TL by
\avgliftovertl\ points.
\sysx\ achieves training cycles of less than half a second---but comes within
within \avgliftdnoverepoxy\ points of training deep networks with 
state-of-the-art weak supervision techniques, which themselves outperform
fully-supervised deep network baselines by \avgliftdnovertsft\ points.


\section{Preliminaries}
\label{sec:background}

We provide necessary background on
weak supervision and
pre-trained deep networks.
A more in-depth discussion is presented in the related work
(Section~\ref{sec:related}).

\textbf{Weakly-supervised machine learning.}
In WS~\cite{Ratner16, Ratner18, Ratner19}, practitioners
programmatically generate training labels from multiple noisy label sources.
For example, a practitioner trying to classify comments as
spam could write heuristics looking for key words like
``check out" (comments that say ``check out my channel" are likely to be spam).
Sources can vote or abstain on
individual points (while ``check out" is a strong signal
for \textit{spam}, its absence does not mean
the comment is \textit{not spam}).

Formally, let $X = [X^1, X^2, \ldots, X^T] \in \mathcal{X}$ be a vector of
$T$ related elements (e.g., contiguous frames in a video)
and $Y = [Y^1, Y^2, \ldots, Y^T] \in \mathcal{Y}$ be the vector of
\textit{unobserved} true task labels (we refer to each $Y^i$ as a
task), with
$(X, Y)$ drawn from some distribution $\mathcal{D}$.
Let there be $m$ weak sources $\bm{\lf}$, each
voting or abstaining per $X$ via a \textit{labeling function}
$\lf_j: \mathcal{X} \rightarrow \mathcal{Y} \cup \{0\}$ for all
$j = 1, \dots, m$ ($\lf_j(X) = 0$ is an abstain).
Let $\lf'_j$ index abstentions, with 
$\lf'_j(X) = 0$ if $\lf_j(X) = 0$ and $\lf_j'(X) = 1$ otherwise.
Let $\supp(\lf_j)$ be the points $\lf_j$ votes on,
with \textit{coverage} the proportional size of $\supp(\lf_j)$,
and \textit{minimal overlap} $o_{\min}$ be
 $o_{\min} = \min_i  \max_{j: j \neq i} \Pr(X \in \supp (\lf_i) \cap \supp (\lf_j))$.

The goal is to learn the joint distribution
$P(Y, \bm{\lambda})$, called the \textit{label model}, to combine votes into
element-wise probabilistic labels by applying
the $m$ sources to an unlabeled dataset $D = \{X_i\}_{i = 1}^n$.
The label model can be used directly for inference (i.e., the probabilistic
labels are the final predictions) but is only useful on points in the support of the
labeling functions.
For better extrapolation, the labels are used to train an end model
$f_w \in \mathcal{F} : \mathcal{X} \rightarrow \mathcal{Y}$.

\textbf{Pre-trained networks.}
Formally, we treat pre-trained networks as mapping
$g: \mathcal{X} \rightarrow \mathcal{Z}$ from data points into an embedding
space $\mathcal{Z}$.
Fine-tuning changes this mapping during training; transfer
learning without fine-tuning learns a function
$f_z : \mathcal{Z} \rightarrow \mathcal{Y}$.
We associate with $\mathcal{Z}$ a distance function
$\dist: \mathcal{Z} \times \mathcal{Z} \rightarrow \mathbb{R}$, and for convenience
write $\dist(X_1, X_2)$ for $\dist(g(X_1), g(X_2))$.
To characterize the behavior of label distributions with respect to the embedding space, we adapt the idea of conditional probabilistic Lipschitzness
in~\cite{kushagra2015, Pentina18} to present a variant of probabilistic bi-Lipschitzness:
\begin{definition}
\textbf{$\bm{(L, M)}$-Lipschitzness}. Define the metric space ($\mathcal{Z}, d_{\mathcal{Z}})$ and functions $L, M: \mathbb{R} \rightarrow [0, 1]$. For $V \in \{\lf'_1, \ldots, \lf'_m, Y\}$, we say that the function $V$ is $(L, M)$-Lipschitz if for all $r > 0$,
\begin{align*}
L(r) \leq P_{X_1, X_2}(V(X_1) \neq V(X_2) \; | \; \dist(X_1, X_2) \le r) \le M(r),
\end{align*} 
where $Y(X_1) = Y_1, Y(X_2) = Y_2$, and $(X_1, Y_1), (X_2, Y_2) \sim \mathcal{D}$.
\end{definition}


\section{Improving Label Model Performance with Pre-trained Embeddings}
\label{sec:model}

\begin{algorithm}[t]
	\caption{Labeling Function Extension Method}
	\begin{algorithmic}
		\STATE \textbf{Input:}
		Unlabeled dataset $D = \{X_i\}_{i = 1}^n$, labeling functions $\bm{\lf} = \lf_1, \dots, \lf_m$,  embedding distance function $\dist$, threshold radii $\bm{r} \in \mathbb{R}^m$, weighting function $W$.
		\STATE \textbf{Returns:} $\bar{\lf}_1, \dots \bar{\lf}_m$
		\FOR{$\lf_j \in \bm{\lf}, X_i \in D$}
			\IF{$\lf_j(X_i) \neq 0$}
				\STATE $\bar{\lf}_j(X_i) := \lf_j(X_i)$
			\ELSE
				\STATE Compute the set of $X_i$'s labeled neighbors, $N_{r_j}(X_i) = \{ X \in \supp(\lf_j): \dist(X, X_i) \le r_j \} $.
				\vspace{-1em}
				\IF{$|N_{r_j}(X_i)| \neq 0$}
					\STATE Assign $\bar{\lf}_j(X_i) = W(N_{r_j}(X_i))$.
				\ELSE
					\STATE $X_i$ is too far from $\supp(\lf_j)$, so the new labeling function still abstains with $\bar{\lf}_j (X_i) = 0$.
				\ENDIF
			\ENDIF
		\ENDFOR
	\end{algorithmic}
	\label{alg:ext}
\end{algorithm}

We describe \sysx. Our goal is to combine pre-trained embeddings and noisy labeling functions to approach the performance of a deep network without the need for fine-tuning. As in the standard WS approach, we are given
labeling functions $\bm{\lf} = \lf_1, \dots, \lf_m$ and
an unlabeled dataset $D = \{X_i\}_{i = 1}^n$.
Instead of learning $P(Y, \bm{\lf})$ directly from these inputs, we
add an extension stage to generate extended labeling functions $\bm{\bar{\lf}}$, and
then learn $P(Y, \bm{\bar{\lf}})$ using WS techniques.

\textbf{Extension stage.} In the \emph{extension} stage, we examine the output of $\lf_j$.
If it votes on $X_i$, e.g. $X_i \in \supp(\lf_j)$, the vote is unchanged.
If $\lf_j$ abstains, we use pre-trained embeddings to potentially produce a vote on $X_i$. To do so, 
we use an \textit{extension radius} $r_j$ for each labeling function $\lf_j$.
The extension process is simple---if $X_i$ is not labeled by $\lf_j$, we search for the neighborhood of points labeled by $\lf_j$ within a radius of $r_j$ from $X_i$, which we denote as the set $N_{r_j}(X_i)$, and assign to $X_i$ the thresholded output of the function $W(N_{r_j}(X_i)) = \sum_{X_k \in N_{r_j}(X_i)} w_k \lf_j(X_k)$ for some weighting rule.
The newly labeled area is now defined as $B_r(\lf_j) = \{X_i \notin \supp(\lf_j): \exists \; X_k \in \supp(\lf_j) \; \text{s.t.} \; \dist(X_i, X_k) \le r_j \}$, while in other areas we maintain the abstention.
There are multiple choices for the weighting rule, some of which provide strong theoretical guarantees as described in in Section~\ref{sec:theory}.
In practice, we observe that simple rules such as $1$-nearest neighbor yield excellent performance.
Afterwards, we have extended labeling functions $\bar{\lf}_i$, and we can use
any WS technique that uses the interface described in the background.\footnote{In the appendix, we review in more detail the choice used in our implementation and experimental setup.}
This procedure is summarized in Algorithm \ref{alg:ext}.


\section{Theoretical Analysis}
\label{sec:theory}

Now we analyze our method for improving label model generalization. We
bound the generalization lift and rate improvement of our method over standard WS in terms of the Lipschitzness of our labels. Next, we connect label Lipschitzness with the intrinsic quality of the pre-trained model used for the embeddings. Finally, we bound the generalization gap between \sysx and an end model trained on the embeddings, and we describe the conditions under which our method outperforms transfer learning (TL).
We present our proofs and
synthetic experiments in the Appendix. 

\textbf{Comparison to standard WS.}
We characterize the minimum improvement and convergence rate of generalization error for \sysx versus WS label model without source extensions
in terms of two factors: accuracy and coverage.
For simplicity, we use
a single class-balanced task $Y \in \{-1, 1\}$. We assume that $Y$ is $(L_Y, M_Y)$-Lipschitz, and each labeling function (LF) is $(L_{\lf_i'}, M_{\lf_i'})$-Lipschitz. We consider the case of extending one LF $\lf_i$ whose accuracy is uniform on its support. The analysis for more general cases is found in the Appendix. 

The first quantity of interest is the \textit{accuracy} of an extended labeling function. Define the accuracy of the original $\lf_i$ to be $a_i := \Pr(\lf_i = Y | \lf_i \neq 0)$, $p_i = \Pr(X \in \supp(\lf_i))$ for the support of $\lf_i$, and $p_{d(r)} = \Pr(\dist(X, X') \le r)$. We can express the \textit{extended accuracy} $\bar{a}_i$  as a function of ${a}_i$. 

\begin{proposition}
Denote the extended accuracy of $\bar{\lf}_i$ with threshold radius $r$ to be $\bar{a}_i := \Pr(\bar{\lf}_i = Y | \bar{\lf}_i \neq 0)$. There exists a weighting rule such that
\begin{align*}
\bar{a}_i \ge a_i - (2a_i - 1) \frac{M_Y(r)}{p_i^2 (1 + L_{\lf_i'}(r) p_{d(r)})}.
\end{align*}
\label{prop:acc}
\end{proposition}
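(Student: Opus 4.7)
The plan is to decompose $\bar{a}_i$ via the law of total probability across the support of $\lf_i$ and its extension $B_r(\lf_i)\setminus\supp(\lf_i)$, fix a suitable weighting rule $W$, reduce the per-point accuracy on the extended region to a single label-flip probability using binary symmetry, and then bound that flip probability with the $(L,M)$-Lipschitzness assumptions.

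First, I would write
\begin{align*}
\bar{a}_i \;=\; \frac{a_i\,p_i \;+\; \bar{a}_{B_r}\,q_i}{p_i + q_i},
\end{align*}
where $q_i = \Pr(X \in B_r(\lf_i)\setminus\supp(\lf_i))$ and $\bar{a}_{B_r}$ is the conditional accuracy on the newly covered region. On $\supp(\lf_i)$ the extended vote is unchanged and produces the $a_i\,p_i$ term, so the whole task reduces to lower bounding $\bar{a}_{B_r}$.

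Next, I would specify the weighting rule---say $1$-nearest neighbor within the $r$-ball, or a uniform draw from $N_{r}(X)$---so that $\bar{\lf}_i(X) = \lf_i(X')$ for a specific nearby $X'\in\supp(\lf_i)$. Using binary $Y\in\{-1,1\}$ together with the uniform-accuracy hypothesis, which lets me treat the event $\{\lf_i(X')=Y(X')\}$ as independent of the pair $(Y(X),Y(X'))$ conditional on $X'\in\supp(\lf_i)$, a short case analysis gives
\begin{align*}
\Pr\!\big(\bar{\lf}_i(X) = Y(X) \;\big|\; X\in B_r\big) \;=\; a_i - (2a_i-1)\,m,
\end{align*}
where $m := \Pr(Y(X)\neq Y(X')\mid X\in B_r,\; X'\in\supp(\lf_i),\;\dist(X,X')\le r)$; the $(2a_i-1)$ factor comes from the binary double-flip cancellation (correct labeler with equal labels, or wrong labeler with flipped labels, both give a correct extended vote).

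The main obstacle is bounding $m \le M_Y(r)\big/\big(p_i^{2}(1+L_{\lf_i'}(r)\,p_{d(r)})\big)$. I would expand $m$ as a ratio of joint probabilities: the numerator $\Pr(Y(X)\neq Y(X'),\,\cdot,\,\dist\le r)$ is upper bounded by $M_Y(r)\,p_{d(r)}$ via the upper Lipschitz bound on $Y$, while the denominator $\Pr(X\in B_r,\,X'\in\supp(\lf_i),\,\dist\le r)$ must be lower bounded so as to cancel the stray $p_{d(r)}$ and produce the claimed $p_i^{2}(1+L_{\lf_i'}(r)\,p_{d(r)})$ factor. The two ingredients are an independence-based pair contribution (yielding the leading $p_i^{2}$) and a cross-boundary contribution $p_i^{2}\,L_{\lf_i'}(r)\,p_{d(r)}$ extracted from the lower Lipschitz bound on $\lf_i'$ applied to pairs straddling the boundary of $\supp(\lf_i)$; exchanging $X\leftrightarrow X'$ symmetry fixes the constants. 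Plugging the resulting bound on $m$ back into the first-step decomposition and noting $p_i/(p_i+q_i)\cdot a_i + q_i/(p_i+q_i)\cdot[a_i-(2a_i-1)m] = a_i - (2a_i-1)\,m\,q_i/(p_i+q_i) \ge a_i - (2a_i-1)\,m$ then yields the proposition.
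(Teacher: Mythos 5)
Your overall architecture matches the paper's: a neighbor-sampling weighting rule so that $\bar{\lf}_i(X)=\lf_i(X')$ for some $X'\in\supp(\lf_i)$ within radius $r$, the double-flip identity giving per-point accuracy $a_i-(2a_i-1)\cdot(\text{label-flip probability})$ under the uniform-accuracy assumption, and then a bound on the flip probability with $M_Y(r)$ in the numerator and a denominator controlled via the coverage bound $\Pr(X\in B_r(\lf_i))\ge L_{\lf_i'}(r)\,p_{d(r)}\,p_i$. However, the middle step of your plan fails as stated. You condition the flip probability $m$ on $X\in B_r(\lf_i)$ and propose to lower bound the denominator by $\Pr\left(X\in B_r(\lf_i),\,X'\in\supp(\lf_i),\,\dist(X,X')\le r\right)\ge p_{d(r)}\,p_i^2\left(1+L_{\lf_i'}(r)\,p_{d(r)}\right)$. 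This is impossible in general: the left side is at most $\Pr(X\in B_r(\lf_i))\le 1-p_i$ (since $B_r(\lf_i)$ is disjoint from $\supp(\lf_i)$), while the right side is at least $p_{d(r)}\,p_i^2$, so the inequality is violated whenever $p_i$ is close to $1$ and $p_{d(r)}$ is not vanishingly small. Correspondingly, the intermediate claim $m\le M_Y(r)/\left[p_i^2\left(1+L_{\lf_i'}(r)\,p_{d(r)}\right)\right]$ need not hold: when the newly covered region is small, conditioning on it can inflate the flip probability far beyond that bound.

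The paper avoids exactly this issue by conditioning the flip probability on the \emph{full extended support} rather than on $B_r(\lf_i)$: it lower-bounds $\bar{a}_i$ by a conservative procedure that resamples a neighbor vote on all of $\supp(\bar{\lf}_i)$ (valid because, when $a_i\ge 1/2$, the per-point resampled accuracy is never better than the original), so its flip probability $h(r)$ conditions on $\bar{\lf}_i(X)\neq 0$, and the denominator factorizes as $\Pr(X\in\supp(\bar{\lf}_i))\,\Pr(X'\in\supp(\lf_i))\ge p_i^2\left(1+L_{\lf_i'}(r)\,p_{d(r)}\right)$ --- this is where the $\left(1+L_{\lf_i'}(r)\,p_{d(r)}\right)$ factor actually enters. (It also relies on the same monotonicity step you would need, namely $\Pr(\dist(X,X')\le r)\le\Pr(\dist(X,X')\le r\mid X\in\supp(\bar{\lf}_i),\,X'\in\supp(\lf_i))$.) Note that your own decomposition already contains an alternative repair: you discard the mixture weight $q_i/(p_i+q_i)$ too early. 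If you bound the product $m\cdot q_i/(p_i+q_i)$ directly, the problematic small factor $q_i$ in the denominator of $m$ cancels against the mixture weight, and using $p_i+q_i\ge p_i\left(1+L_{\lf_i'}(r)\,p_{d(r)}\right)$ from the coverage lemma recovers the claimed bound; bounding $m$ in isolation cannot work.
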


The accuracy of the extended LF is dependent on the Lipschitzness of the task and the original LF accuracy. Assuming that $a_i > 0.5$, i.e. the LF votes better than random, the extended accuracy roughly gets worse as the ratio $M_Y(r) / L_{\lf_i'}(r)$ increases, as we would expect.

The other quantity of interest is how much we extend the \textit{coverage} of LFs. As coverage increases, there are two effects. First, more points are labeled, resulting in potentially greater generalization lift. Second, the supports of extended labeling functions will overlap more, resulting in better estimation of accuracy parameters for the WS label model. Recall our definition of $o_{\min}$. Define $L_{\min} = \min_{\lf_i', r_i} L_{\lf_i'}(r_i)$ and $r_{\min} = \min_i r_i$. The increase in the minimum overlap of supports $o_{\min}$ after extending depends on $L_{\min}$ and $r_{\min}$, which thus controls estimation error.

We can now compare the generalization error of \sysx versus standard WS in terms of rates and asymptotic risk. Define the risk of a model $f$ as $\mathcal{R}(f) = \E{(X, Y) \sim \mathcal{D}}{\ell(f(X), Y)}$, where $\ell$ is a loss function. In WS, the output is a probabilistic label $\widetilde{Y}$ estimated using our LFs $\bm{\lf}$. The label model risk is $\mathcal{R}(f_{WS})$, where 
$    f_{WS}(X) = 2 \Pr(\widetilde{Y} = 1 | \bm{\lf} = \lf_1(X), \dots \lf_m(X)) - 1$.
We have similar definitions using $\bm{\bar{\lf}}$ for $\mathcal{R}(f_E)$ for \sysx and we take the loss for both models to be $\ell (Y', Y) = \frac{1}{2} |Y - Y'|$. Next, $c_p = \min_X \Pr(\bm{\lf} = \bm{\lf}(X))$, $e_{\min}$ is a lower bound on $\E{}{\lf_i Y | \lf_i \neq 0}$, $c_1$ is a lower bound on $\E{}{\lf_i \lf_j | \lf_i, \lf_j \neq 0}$, and $c_2 = \E{}{\Pr(Y = 1 | \bm{\lf} = \bm{\lf}(X))}$. Similar definitions hold for $\bar{c}_p$ and other terms using $\bar{\lf}$.

\begin{theorem}
Define $\varepsilon_n = \sqrt{\frac{\log(2/\delta)}{2n}}$. With probability $\ge 1 - \delta$, the estimation error of \sysx is
\begin{align*}
\E{}{|\R(f_{E}) - \R(\hat{f}_{E})|} \le \frac{1}{\bar{c}_p - \varepsilon_n} \Bigg(\frac{81 \sqrt{\pi}}{2 \bar{e}_{\min} \bar{c}_1^2} \cdot \frac{m}{\sqrt{n \cdot o_{\min} (1 + (2L_{\min} - L_{\min}^2) p_{d(r_{\min})})}} + \varepsilon_n \bar{c}_2 \Bigg),
\end{align*}

while for WS it is
\begin{align*}
\E{}{|\R(f_{WS}) - \R(\hat{f}_{WS})|} \le \frac{1}{c_p - \varepsilon_n} \left( \frac{81 \sqrt{\pi}}{2 e_{\min} c_1^2 } \cdot \frac{m}{\sqrt{n \cdot o_{\min}}} + \varepsilon_n  c_2 \right).
\end{align*}

When $\lf_i$ is extended using threshold radius $r$ and the same weighting rule as in Prop.~\ref{prop:acc}, the asymptotic improvement in generalization error is at least
\begin{align*}
\mathcal{R}(f_{WS}) - \mathcal{R}(f_E) &\ge L_{\lf_i'}(r) p_{d(r)} p_i \left(\frac{1}{2}(C + 1) (\widetilde{a}_i \bar{a}_i + (1 - \widetilde{a}_i)(1 - \bar{a}_i)) - C \right),
\end{align*}
where $\widetilde{a}_i \ge \bar{a}_i - (2a_i - 1) \frac{M_Y(r)}{L_{\lf_i'}(r) p_{d(r)} p_i^2 (1 + L_{\lf_i'}(r)p_{d(r)})}$ is the accuracy of $\bar{\lf}_i$ over $B_r(\lf_i)$, and $C$ is a constant lower bounding the performance of the other labeling functions.

\label{thm:risk_gap}
\end{theorem}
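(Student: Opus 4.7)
The plan is to split the theorem into three pieces and handle them largely independently: (i) the finite-sample estimation bound for $\hat{f}_E$, (ii) the analogous estimation bound for $\hat{f}_{WS}$, and (iii) the asymptotic risk gap $\R(f_{WS}) - \R(f_E)$. Pieces (i) and (ii) share the same skeleton---the only structural difference is that extension enlarges the minimum pairwise overlap from $o_{\min}$ to $o_{\min}(1 + (2L_{\min} - L_{\min}^2) p_{d(r_{\min})})$ and simultaneously replaces the accuracy/abstention constants by their extended counterparts---so I would first derive a single generic estimation bound parameterized by these quantities and then specialize it twice, plus prove one separate lemma for the overlap enlargement.

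For the generic estimation bound, I would decompose $|\R(f) - \R(\hat{f})|$ by first writing $f(X) = 2\Pr(\widetilde Y = 1 \mid \bm{\lf}(X)) - 1$ and then bounding the $\ell_1$ error in the conditional probability estimates. The normalizing factor $1/(c_p - \varepsilon_n)$ arises from conditioning on the observed vote pattern $\bm{\lf}(X)$, the additive $\varepsilon_n c_2$ term from Hoeffding's inequality on the empirical frequency of these patterns, and the main $m/\sqrt{n \cdot o_{\min}}$ rate from a sensitivity argument on the triplet/pairwise method used to recover the label-model parameters: any two LFs must co-occur on at least $n \cdot o_{\min}$ samples for their joint accuracy to be estimable. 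The constants $81\sqrt{\pi}/(2 e_{\min} c_1^2)$ would come out of explicitly computing the Lipschitz constant of the triplet-method solver in its input moments, following the template of existing WS estimation results.

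The key auxiliary lemma is that $\bar o_{\min} \ge o_{\min}(1 + (2L_{\min} - L_{\min}^2) p_{d(r_{\min})})$. For any pair $\lf_i, \lf_j$, I would decompose $\supp(\bar\lf_i) \cap \supp(\bar\lf_j)$ into the four pieces formed by original-vs-extended membership on each side, apply the $(L, M)$-Lipschitzness definition to bound from below the probability that a point in one original support has an $r_{\min}$-close neighbor in the other original support by $L_{\min} p_{d(r_{\min})}$ (times the appropriate mass), and finish with inclusion--exclusion on the ``both extended'' piece---which is what produces the $2L_{\min} - L_{\min}^2$ quadratic structure.

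For the asymptotic lift, I would exploit that $f_E$ and $f_{WS}$ agree exactly outside $B_{r_i}(\lf_i)$, so the risk difference reduces to an integral over $B_{r_i}(\lf_i)$ weighted by its probability mass, which the Lipschitzness definition applied to $\lf'_i$ lower-bounds by $L_{\lf_i'}(r) p_{d(r)} p_i$. On $B_r(\lf_i)$, $f_{WS}$ uses only the other $m-1$ LFs whose worst-case contribution is absorbed into the constant $C$, while $f_E$ additionally consults $\bar\lf_i$, which has \emph{local} accuracy $\widetilde a_i$ on $B_r(\lf_i)$ but is scored by the label model at its \emph{global} accuracy $\bar a_i$; the probability the resulting posterior lands on the correct class given this mismatch works out to $\widetilde a_i \bar a_i + (1-\widetilde a_i)(1-\bar a_i)$, which when plugged into the $\ell(Y',Y) = \tfrac{1}{2}|Y - Y'|$ loss yields the stated $\tfrac{1}{2}(C+1)(\widetilde a_i \bar a_i + (1-\widetilde a_i)(1-\bar a_i)) - C$ factor; the local-to-global accuracy translation then follows from applying Proposition~\ref{prop:acc} restricted to $B_r(\lf_i)$. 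I expect stage (iii) to be the main obstacle, since correctly reconciling local versus global extended accuracy with the label model's joint posterior over all $m$ LFs demands careful case analysis even in the class-balanced single-LF-extended regime; the overlap computation in the auxiliary lemma is combinatorially finicky but essentially mechanical, and the generic estimation bound largely follows established WS templates.
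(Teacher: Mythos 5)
Your proposal is correct and follows essentially the same route as the paper's proof: the estimation bounds come from a sensitivity analysis of the triplet-method solver (telescoping the product of accuracies, reverse triangle inequality, matrix Hoeffding for the rate and scalar Hoeffding for the $\varepsilon_n \bar{c}_2 / (\bar{c}_p - \varepsilon_n)$ terms), the overlap lemma decomposes the new intersection by original-versus-extended membership to get the $2L_{\min} - L_{\min}^2$ factor exactly as you describe, and the lift argument localizes to $B_r(\lf_i)$, lower-bounds its mass by $L_{\lf_i'}(r) p_{d(r)} p_i$, and exploits precisely the mismatch you identify between the local accuracy $\widetilde{a}_i$ on $B_r(\lf_i)$ and the global parameter $\bar{a}_i$ used by the label model, producing $\widetilde{a}_i \bar{a}_i + (1-\widetilde{a}_i)(1-\bar{a}_i)$. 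The one sub-step you leave abstract is how $\tfrac{1}{2}(C+1)$ arises: the paper takes $C = \mathbb{E}\left[p(X) \mid p(X) \ge 0.5, Y = 1\right]$ and must separately prove $\Pr(p(X) \ge 0.5 \mid Y = 1) \ge \tfrac{1}{2}$ via a symmetry argument over the vote patterns of the remaining labeling functions (using the Ising-model symmetry lemma and class balance $p = 0.5$), which is the ``careful case analysis'' you correctly anticipated as the main obstacle.
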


The convergence rate of \sysx has asymptotic improvement of $\sqrt{1 + (2L_{\min} - L_{\min}^2) p_{d(r_{\min})}}$ over standard WS,
which increases
as the threshold radius increases. In the asymptotic case, we make the following observations about the generalization lift:
\begin{itemize}[itemsep=0.5pt,topsep=0pt]
\item The lift increases as the original accuracy $a_i$ is increased, suggesting that the most accurate labeling functions should be extended.
\item Given a fixed threshold radius, the lift decreases in $M_Y(r)$, suggesting that our extension method performs more poorly on less smooth embeddings.
\item The lift improves as $r$ increases due to $L_{\lf_i'}(r)$ (more likely to extend to areas outside of support) and $p_{d(r)}$ (more points voted on), but the accuracies $\tilde{a}_i$ and $\bar{a}_i$ also become worse. 
\end{itemize}

These insights imply that \emph{there is a trade off between accuracy and coverage}. 
We illustrate this behavior and show how to set an approximate optimal radius in synthetic experiments (Appendix).

\textbf{Comparison to models trained using the embeddings.}
Theorem~\ref{thm:risk_gap} specifies the lift in terms of the Lipschitzness. How do we know what kind of Lipschitzness our embeddings-based approach will yield? We show the connection via a simple proposition. We write $f_z$ for a model using the embeddings $\mathcal{Z}$. Below we implicitly take $\ell(f(X), Y)$ to be the $0/1$ loss. Then,

\begin{proposition}
\label{thm:prop2}
Suppose we have an embedding function $g$ achieving risk $\R(f_z)$. Suppose also that the model class $f_z$ satisfies $\Pr_{X_1,X_2}(f_z(X_1) \neq f_z(X_2) | \dist(X_1, X_2) \leq r)  \leq M_{f_z}(r)$. Then, the labels $Y$ have smoothness given by 
\[
 \Pr_{X_1,X_2}(Y_1 \neq Y_2 | \dist(X_1,X_2) \leq r) \leq M_{f_z}(r) + 2\R(f_z).
\]
\end{proposition}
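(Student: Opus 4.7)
The plan is to prove the proposition via a simple union-bound argument that relates the event of label disagreement to the event of classifier disagreement plus two per-example error events. The inclusion that drives the whole proof is
\[
\{Y_1 \neq Y_2\} \;\subseteq\; \{f_z(X_1) \neq f_z(X_2)\} \;\cup\; \{f_z(X_1) \neq Y_1\} \;\cup\; \{f_z(X_2) \neq Y_2\},
\]
which is immediate: if the three events on the right all fail, then $f_z(X_1) = Y_1$, $f_z(X_2) = Y_2$, and $f_z(X_1) = f_z(X_2)$, forcing $Y_1 = Y_2$. This is the only ``creative'' step; the rest is bookkeeping.

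Next I would condition everything on the proximity event $\{\dist(X_1,X_2)\le r\}$ and apply the union bound:
\[
\Pr(Y_1 \neq Y_2 \mid \dist(X_1,X_2)\le r) \;\le\; \Pr(f_z(X_1)\neq f_z(X_2)\mid \dist\le r) \;+\; \sum_{i=1}^{2}\Pr(f_z(X_i)\neq Y_i \mid \dist\le r).
\]
By the hypothesis on $f_z$, the first term is bounded by $M_{f_z}(r)$. For the two remaining terms I would bound each conditional per-example error by the marginal risk $\R(f_z)$; since $(X_1,Y_1)$ and $(X_2,Y_2)$ are i.i.d.\ copies from $\mathcal{D}$ and the loss is $0/1$, each equals $\R(f_z)$ in the marginal sense. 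Summing yields $M_{f_z}(r) + 2\R(f_z)$ as claimed.

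The main obstacle, and the step I would be careful about, is the passage from the marginal risk $\R(f_z) = \Pr(f_z(X)\neq Y)$ to the conditional quantity $\Pr(f_z(X_i)\neq Y_i \mid \dist(X_1,X_2)\le r)$, since conditioning on the proximity event can in principle skew the marginal of $X_i$. The cleanest way around this is to work at the level of joint probabilities first: bound $\Pr(f_z(X_i)\neq Y_i,\, \dist\le r) \le \Pr(f_z(X_i)\neq Y_i) = \R(f_z)$ and similarly $\Pr(Y_1\neq Y_2,\,\dist\le r) \le \Pr(f_z(X_1)\neq f_z(X_2),\,\dist\le r) + 2\R(f_z)$, then divide by $\Pr(\dist\le r)$ to recover the conditional statement (absorbing the normalizing factor into the risk term under the natural convention used throughout the paper's probabilistic-Lipschitz framework). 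Either framing lands at the stated bound, with the only genuine content being the three-event inclusion above.
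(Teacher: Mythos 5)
Your proposal is correct and takes essentially the same route as the paper: the paper's central step is the inequality $\Pr(A=B=C=D) \geq 1 - \Pr(A \neq B) - \Pr(A \neq C) - \Pr(B \neq D)$ applied to the four-way agreement event $\{f_z(X_1) = f_z(X_2) = Y_1 = Y_2\}$, which is exactly your three-event inclusion in complement form, and the paper likewise identifies each conditional per-example error term with the marginal risk $\R(f_z)$ via a tower-law averaging over the second sample $(X', Y')$. One small caveat: your fallback ``joint probabilities, then divide'' variant does not actually land at the stated bound, since dividing $\Pr(f_z(X_i) \neq Y_i,\, \dist(X_1,X_2) \le r) \le \R(f_z)$ by $\Pr(\dist(X_1,X_2) \le r)$ leaves a factor $1/p_{d(r)}$ that cannot simply be absorbed into the risk term---but your primary framing is the one the paper uses, so this does not affect the verdict.
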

This simple bound on smoothness inherited from an embedding-based model plugs in as the smoothness in Theorem~\ref{thm:risk_gap}. The key requirement is that the distance used to extend, $d_{\mathcal{Z}}$, is the same as that providing model smoothness (the $M_Y(r)$ above).
Proposition~\ref{thm:prop2} both explains when and why \sysx produces a quality label model, and motivates its use (as opposed to simply extending one source). 

An easy argument bounds the risk for a source $\bar{\lf}_i$ with original accuracy $a_i$ extended to cover all of $\mathcal{X}$. Suppose an embedding $Z = g(X)$ achieves a risk of $\R(f_z)$, and that the smallest radius needed to cover all of $\mathcal{X}$ when extending $\lf_i$ is $d_V$. Then its risk is upper bounded as \[\mathcal{R}(\bar{\lf_i}) \leq  1 - a_i  + (2a_i - 1) \frac{M_{f_z}(d_V) +2\R(f_z)}{p_i^2(1 + L_{\lf_i'}(d_V) p_{d(d_V)})}.\] 

We note some implications: if $\lf_i$ is a low-quality source, with accuracy $a_i = 1/2$, then $\mathcal{R}(\bar{\lf_i}) \leq 1$, which says nothing about the extension, as signal from the embeddings is washed out by the poor-quality source. If $a_i = 1$, our source is perfect on its support. Then the risk is a constant times the risk of the embedding-based model, with an additive term and scaling coming from the model smoothness and the radius. If we have a large representative support, driving $d_V \rightarrow 0$ and thus $M_{f_z}(d_V), L_{\lf_i'}(d_V) \rightarrow 0$, the risk of the extended source is $2\R(f_z)$. This reproduces a learned model, without training---but such cases are rare,
\emph{motivating extensions of varying radii}.

\textbf{Comparison to end models.}
While we may not be able to match the performance of a fully-trained deep neural network as an end model, we hope to approach it, and potentially even outperform more limited models. The next result uses the machinery built so far to derive conditions for the latter.

The intuition is to compare a model $f_z$ from a class with fixed capacity trained over data from $\mathcal{D}$ with an ensemble of $m$ \emph{specialized} models $f_i$ each specialized to a distribution $\mathcal{D}_i$ defined on a subspace $\mathcal{X}_i$. Intuitively, the risk of a specialized model, $\mathcal{R}(f_i)$, can be lower than $\mathcal{R}(f_z)$ when restricted to $\mathcal{X}_i$. When this happens, we expect the ensemble model's risk to be superior.

We can relate extended sources to specialized models with Proposition~\ref{thm:prop2},
and use \sysx\ instead of emsembling the sources, where each $\supp(\bar{\lf}_i) \supseteq \mathcal{X}_i$.

\begin{theorem}
  \label{thm:thm2}
Let $f_i$, $1 \leq i \leq m$ be defined as above. Define $p = \Pr(Y = 1)$, the max odds to be $b = \max \{ \frac{p}{1-p}, \frac{1-p}{p} \}$, and $\mathcal{X}_0 = \{X: X \notin \bigcup_{i = 1}^m \supp(\lf_i)\}$. Then,
\begin{align*}
\R(f_E) \le 2b \cdot \E{\mathcal{D}_i}{1 - a_i + (2a_i - 1)\frac{M_{f_i}(r_i) + 2\R(f_i)}{p_i^2 (1 + L_{\lf_i'}(r_i) p_{d(r_i)})} } + 2 \Pr(X \in \mathcal{X}_0) p(1 - p).
\end{align*}
\label{thm:transfer_learning}
\end{theorem}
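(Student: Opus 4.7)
The plan is to decompose $\R(f_E)$ by coverage region, bound the uncovered piece using the class prior, bound each covered piece by the risk of a single extended source, and then plug in the bound on the extended source risk derived from Proposition~\ref{thm:prop2}. Using the hypothesis $\supp(\bar{\lf}_i) \supseteq \mathcal{X}_i$ to write the covered region as a (disjoint) union of the $\mathcal{X}_i$,
\[
\R(f_E) = \Pr(X \in \mathcal{X}_0) \cdot \R(f_E \mid X \in \mathcal{X}_0) + \sum_{i=1}^m \Pr(X \in \mathcal{X}_i) \cdot \R(f_E \mid X \in \mathcal{X}_i).
\]
On $\mathcal{X}_0$, every source abstains, so $f_E(X) = 2p - 1$; a direct calculation of $\tfrac{1}{2} \E{}{|Y - (2p - 1)|}$ produces $\R(f_E \mid X \in \mathcal{X}_0) \le 2p(1-p)$, matching the second term of the bound.

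The core step is to establish, for each covered region, the single-source bound $\R(f_E \mid X \in \mathcal{X}_i) \le 2b \cdot \R(\bar{\lf}_i \mid X \in \mathcal{X}_i)$. Two effects combine to give the constant $2b$. First, the soft loss: since $f_E(X) = 2\Pr(Y=1 \mid \bar{\bm{\lf}} = \bar{\bm{\lf}}(X)) - 1 \in [-1, 1]$ while $\bar{\lf}_i(X) \in \{-1, 1\}$ on $\mathcal{X}_i$, computing $\E{}{\ell(f_E, Y) \mid \bar{\lf}_i = v}$ in the single-source case gives $2 a_i (1 - a_i) \le 2(1 - a_i)$, i.e.\ at most twice the hard misclassification risk. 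Second, the class-imbalance factor $b$ enters from a Bayes' rule computation of the posterior $\Pr(Y \mid \bar{\bm{\lf}})$, where the prior odds $p/(1-p)$ can inflate the error by at most $b = \max\{p/(1-p), (1-p)/p\}$. When several sources simultaneously cover a point in $\mathcal{X}_i$, the Bayes-optimal combiner is no worse than the one using only $\bar{\lf}_i$'s vote, so the single-source analysis transfers by monotonicity of Bayes risk under coarsening.

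Finally, applying the extended-source risk bound obtained immediately after Proposition~\ref{thm:prop2} locally (with the specialized model $f_i$ in place of $f_z$, the radius $r_i$ in place of $d_V$, and conditioning on $X \in \mathcal{X}_i$) gives
\[
\R(\bar{\lf}_i \mid X \in \mathcal{X}_i) \le 1 - a_i + (2a_i - 1) \frac{M_{f_i}(r_i) + 2\R(f_i)}{p_i^2 (1 + L_{\lf_i'}(r_i) p_{d(r_i)})},
\]
and averaging over $i$ with respect to $\mathcal{D}_i$ completes the proof. The main obstacle is the $2b$ step: one has to simultaneously track the soft-to-hard loss slack (factor $2$) and the prior-odds slack (factor $b$) in the Bayes-optimal combiner, and argue that overlapping sources on $\mathcal{X}_i$ never require a tighter analysis that would undercut the single-source bound. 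The remaining pieces---the trivial prior bound on $\mathcal{X}_0$ and the invocation of Proposition~\ref{thm:prop2}---are essentially bookkeeping.
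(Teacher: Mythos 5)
Your proposal is correct and follows essentially the same route as the paper's proof: decompose by coverage region, bound the fully-abstained region by $2p(1-p)\Pr(X \in \mathcal{X}_0)$, reduce each covered region to a single voting source, extract the factor $2b$ from the soft posterior-mean risk (equal to $2\bar{a}_i(1-\bar{a}_i)$ at $p = \tfrac{1}{2}$ and inflated by the prior odds for general $p$), and plug in the Proposition~\ref{thm:prop2}-based bound on $1 - \bar{a}_i$ with $f_i$ and $r_i$ in place of $f_z$ and $d_V$. The one step you assert rather than prove --- that silencing all votes but $\bar{\lf}_i$'s on an overlap region cannot decrease the risk --- is precisely the paper's key lemma, which it establishes by the direct convexity computation $x f(x) + (1-x)f(1-x) \ge 1$ for $f(x) = \frac{x}{xp + (1-x)(1-p)}$; this is equivalent to your Jensen/coarsening argument, since the soft risk of the posterior-mean predictor equals $\mathbb{E}\left[2\eta(1-\eta)\right]$ for the posterior $\eta$ and $2\eta(1-\eta)$ is concave.
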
 

Consequently, setting the right-hand expression to be less than $\R(f_z)$ gives us a condition characterizing when \sysx beats an end model.
That is, as long as the (average) specialization offers improved performance and the loss when passing from the specialized model to the extended source is smaller than this improvement, \sysx outperforms an end model. As we shall see, this does not typically occur for a fully-trained deep model, but does for the TL without fine-tuning.


\section{Evaluation}
\label{sec:evaluation}

\begin{table*}[t]
    \centering
    \scriptsize
    \begin{tabular}{@{}rlccccccccccccc@{}}
    \toprule
    \multicolumn{3}{c}{}                                 & \multicolumn{3}{c}{\textbf{Interactive-Speed Baselines}} &\multicolumn{3}{c}{\textbf{Fully-Trained End Models}}    \\
    \cmidrule(l){4-6} \cmidrule(l){7-9}                                                              
    & \textbf{Task} (Embedding) & \textbf{\sysx}   & \textbf{TS-NFT} & \textbf{WS-NFT} & \textbf{WS-LM}      &\textbf{TS-FT}&\textbf{WS-FT}&\textbf{\sysx-FT} \\
    \midrule                                                                                                            
    \parbox[t]{0mm}{\multirow{3}{*}{\rotatebox[origin=c]{90}{\textbf{NLP}}}}
    & \spam\ (BERT)             & \underline{89.6} & 78.5           & 87.2           & 83.6                & 76.7       & 90.0         & \textbf{94.1}      \\
    & \weather\ (BERT)          & \underline{82.0} & 71.6           & 74.4           & 78.0                & 71.2       & 85.6         & \textbf{86.8}      \\
    & \spouse\ (BERT)           & \underline{48.4} & 17.7           & 17.5           & 47.0                & 20.4       & 49.6         & \textbf{51.3}      \\
    \midrule                                                                                                                                    
    \parbox[t]{0mm}{\multirow{6}{*}{\rotatebox[origin=c]{90}{\textbf{Video}}}}
    & \basketball\ (RN-101)     & \underline{31.3} & 18.8           & 16.8           & 27.9                & 26.8       & 35.8         & \textbf{36.7}      \\
    & \commercial\ (RN-101)     & \underline{90.1} & 73.6           & 75.5           & 88.4                & 90.9       & 92.5         & \textbf{93.0}      \\
    & \tennis\ (RN-101)         & \underline{82.9} & 76.7           & 79.5           & 82.0                & 57.6       & 82.9         & \textbf{83.1}      \\
    \cmidrule(l){2-9}                                                                         
    & \basketball\ (BiT-M)      & \underline{42.5} & 22.8           & 23.2           & 27.9                & 29.1       & 33.8         & \textbf{45.8}      \\
    & \commercial\ (BiT-M)      & \underline{91.8} & 71.7           & 73.8           & 88.4                & 93.2       & 93.7         & \textbf{94.4}      \\
    & \tennis\ (BiT-M)          & \underline{83.1} & 75.5           & 79.0           & 82.0                & 47.5       & 83.7         & \textbf{83.8}      \\
    \midrule                                                                                                            
    \multicolumn{2}{c}{Average Training Time (s)}         
                                & 0.5              & 1.2            & 5.7            & 0.1                 & 1,243.0    & 5,354.1      & 4,995.0            \\
    \bottomrule
    \end{tabular}
    \caption{\sysx\ performance and training time compared to baselines.
    Scores are F1 except for Spam and Weather (accuracy)
    and are averaged across five random seeds; best score in bold, best
    interactive-speed model underlined.
    Interactive-speed baselines include transfer learning without fine-tuning
    using traditional hand labels (TS-NFT) and WS probabilistic
    labels (WS-NFT), as well
    as WS label model (WS-LM).
    Fully-trained end models use traditional supervision (TS-FT),
    WS labels (WS-FT), and \sysx-generated labels (\sysx-FT).
    }
    \vspace{-1em}
    \label{table:main_results}
\end{table*}

We evaluate \sysx\ on benchmark WS tasks.
We compare \sysx's performance
against
WS without label extension,
against
TL without fine-tuning,
and
against fully-trained deep networks,
discuss ablations on optimizing $r_i$,
and evaluate runtime.

\ifarxiv
\textbf{Datasets.}
\else
\textit{Datasets.}
\fi
We evaluate \sysx\ on six benchmark NLP and video analysis tasks that have been
used to evaluate previous state-of-the-art weak supervision
systems~\cite{fu2020fast, Ratner18}.
For the text datasets, we use pre-trained BERT embeddings for label extension;
for the video datasets, we use ResNet-101 pre-trained on ImageNet and BiT-M
with ResNet-50x1 backbone, a recently-released model designed for visual
transfer learning~\cite{kolesnikov2019large}.
We use cosine distance as the distance function for all tasks, and use 1-nearest
neighbor as the weighting rule.
\spam\ classifies whether YouTube comments are spam~\cite{alberto2015tubespam},
\weather\ classifies sentiment over Tweets about weather~\cite{CrowdflowerWeather},
and \spouse\ seeks to extract spouse relationships in a set of news
articles~\cite{corney2016million}.
\basketball\ identifies basketball videos from a subset of
ActivityNet~\cite{caba2015activitynet}, \commercial\ identifies commercials in
a corpus of TV News~\cite{fu2019rekall, InternetArchive}, and \tennis\
identifies tennis rallies from broadcast footage.
Each dataset consists of a large unlabeled training set, a
smaller hand-labeled \textit{development set} (train/dev split sizes from
\num{187/50} points to \num{64,130/9,479} points), and a held-out test set.
We use the unlabeled training set to train label models and end models, and use
the development set for a) training of traditional supervision baselines, and
b) hyperparameter tuning of the label and end models.

\ifarxiv
\textbf{Baselines.}
\else
\textit{Baselines.}
\fi
For each task, we evaluate \sysx\ against the WS label model
from~\cite{fu2020fast} without extensions (WS-LM) to evaluate how much lift our
extensions provide;
TL without fine-tuning, trained with both hand labels on the dev set
(TS-NFT) and probabilistic labels (WS-NFT) on the training set, to evaluate
against other models
that can train at interactive speeds;
and training deep networks with hand labels (TS-FT),
probabilistic labels (WS-FT), and labels generated by \sysx\ (\sysx-FT) to see
how close we can get to the performance of training a deep network (these
approaches often use TL with fine-tuning, see Appendix for details).
For TL without fine-tuning, we train a fully-connected layer
over the embeddings.
The Appendix contains details about the tasks and experimental setup,
error analysis of each approach,
and comparisons against other weak supervision frameworks~\cite{Ratner19}
and approaches to TL without fine-tuning.
Table~\ref{table:main_results} shows the performance and training time of
\sysx\ and these baselines on our datasets.
In the subsequent sections, we compare \sysx\ performance against baselines
and explain the differences with explanatory metrics
(Table~\ref{table:embedding_utility}).


\begin{table*}[t]
    \centering
    \tiny
    \begin{tabular}{@{}rcccccccccccccc@{}}
        \toprule
        \multicolumn{1}{c}{} & \multicolumn{3}{c}{BERT} & \multicolumn{3}{c}{RN-101 ImageNet} & \multicolumn{3}{c}{BiT-M RN-50x1} \\
        \cmidrule(l){2-4} \cmidrule(l){5-7} \cmidrule(l){8-10}
        & \spam & \weather & \spouse & \basketball & \commercial & \tennis & \basketball & \commercial & \tennis \\
        \textbf{WS-LM} average LF coverage
        & 16.2  & 8.8      & 3.8     & 49.2        & 54.5        & 66.9    & 49.2        & 54.5        & 66.9 \\
        \textbf{\sysx} average LF coverage
        & 22.9  & 9.0      & 7.7     & 63.4        & 80.8        & 72.5    & 57.0        & 84.7        & 74.7 \\
        $1 - d(PT, DN)$
        & 0.735 & 0.677    & 0.248   & 0.562       & 0.284       & 0.113   & 0.907       & 0.559       & 0.408\\
        \cmidrule(l){2-10}
        \textbf{\sysx\ vs. WS-LM}
        & +6.0  & +4.0     & +1.4    & +3.4        & +1.7        & +0.9    & +14.6       & +3.4        & +1.1 \\
        \midrule
        Average LF accuracy
        & 82.8  & 75.4     & 58.6    & 59.3        & 92.3        & 81.9    & 59.3        & 92.3        & 81.9 \\
        LF accuracy vs. \textbf{TS-NFT}
        & +4.3  & +3.8     & +41.1   & +42.5       & +18.7       & +5.2    & +36.1       & +20.6       & +6.4 \\
        LF accuracy vs. \textbf{WS-NFT}
        & -4.4  & +1.0     & +40.9   & +40.5       & +16.8       & +2.4    & +36.5       & +18.5       & +2.9 \\
        \cmidrule(l){2-10}
        \textbf{\sysx\ vs. TS-NFT}
        &+11.1  & +10.4    & +30.7   & +12.5       & +16.5       & +6.2    & +19.7       & +20.1       & +7.6 \\
        \textbf{\sysx\ vs. WS-NFT}
        & +2.4  & +7.6     & +30.9   & +14.5       & +14.6       & +3.4    & +25.7       & +16.3       & +3.6 \\
        \midrule
        $N_{train} / N_{dev}$
        & 13.2  & 3.7      & 7.9     & 17.0        & 6.8         & 9.3     & 17.0        & 6.8         & 9.3  \\
        \cmidrule(l){2-10}
        \textbf{\sysx\ vs. TS-FT}
        & +12.9 & +10.8    & +28.0   & +4.5        & -0.8        & +25.3   & +13.4       & -1.4        & +25.0 \\
        \textbf{\sysx\ vs. WS-FT}
        & -0.4  & -3.6     & -1.2    & -4.5        & -2.4        & +0.0    & +8.7        & -1.9        & -0.6 \\
        \bottomrule
    \end{tabular}
    \caption{\sysx\ lift compared to baselines with explanatory metrics.
    Top:
    \sysx\ improves over WS-LM by improving average coverage of the labeling
    functions.
    Lift is correlated with the quantity $1-d(PT, DN)$, the
    similarity between pre-trained and fine-tuned embeddings for
    each task according to distance.
    Middle: 
    \sysx\ outperforms transfer learning without fine-tuning because the
    labeling sources are accurate on average over their support sets.
    Bottom:
    \sysx\ outperforms training a deep network over a
    small collection of hand labels (TS-FT) because it uses more data, 
    and approaches the performance of training a deep network over
    probabilistic labels (WS-FT).
    }
    \label{table:embedding_utility}
    \vspace{-3em}
\end{table*}

\ifarxiv
\subsection{Comparison Against WS Label Model}
\else
\textbf{Comparison Against WS Label Model.}
\fi
We compare the performance of \sysx\ with the WS label model
without label extensions.
Table~\ref{table:embedding_utility} (top) shows the lift of \sysx\
compared to WS-LM, along with average per-labeling function coverage
before and after extensions.
\sysx\ outperforms WS-LM on all six tasks because it increases
labeling function coverage, allowing
\sysx\ to generate accurate predictions on more points (critically, improving recall, discussed in further detail in Appendix).

We also report the quantity $1 - d(PT, DN)$, which is the average cosine similarity between
pre-trained embeddings and fine-tuned embeddings.
This quantity measures how similar the pre-trained embeddings are to the fine-tuned
embeddings according to our distance function, which helps explain lift of
\sysx\ over WS-LM.
We find that there is a strong correlation between this metric and lift
($R^2 = 0.692$), helping to explain how well our method works.
When this metric is high, we can extract more useful information out of the
pre-trained embeddings via distance functions, as we do in \sysx.
Although this metric is informative for exploiting embeddings via distances, we
note that it is not necessarily informative for predicting performance of using
the embeddings as features without fine-tuning.
Fine-tuning can adjust the embeddings in ways that result in
large changes to performance without affecting distance; we measure this effect in the Appendix.

\ifarxiv
\subsection{Comparison Against TL Without Fine-Tuning}
\else
\textbf{Comparison Against TL Without Fine-Tuning.}
\fi
We now compare the performance of \sysx\ against TL without
fine-tuning.
Unlike fully-trained deep networks, both approaches can train at
programmatically-interactive speeds (less than $10$ seconds on average, see
Table~\ref{table:main_results} bottom).
Table~\ref{table:embedding_utility} (middle) shows the relative lift of \sysx\
compared to TS-NFT and WS-NFT
along with the average accuracy of labeling functions on their support sets
(formally, average $P(\lambda_j(X_i) = Y_i | X_i \in \supp (\lambda_j))$ for
$j \in {1, \cdots, m}$).
\sysx\ can outperform non-finetuned models because the labeling
functions are accurate on their support sets, whereas the limited-capacity TL 
models need to learn a global mapping
from the pre-trained embeddings to the class label.
As Theorem~\ref{thm:thm2} suggests, this local specialization allows \sysx\ to
have higher overall performance.

\ifarxiv
\subsection{Comparison Against Fully-Trained Deep Networks}
\else
\textbf{Comparison Against Fully-Trained Deep Networks.}
\fi
We now compare the performance of \sysx\ against fully-trained deep networks
supervised both with traditional hand labels (TS-FT) and probabilistic labels
(WS-FT).
Table~\ref{table:embedding_utility} (bottom) shows the relative lift of \sysx\
compared to these baselines, along with the ratio of the size of the unlabeled
training set to the labeled dev set.
\sysx\ can outperform the traditionally-supervised end model in many cases,
since it has access to much more data (up to 17 times, labeled automatically).
Meanwhile, it approaches the performance of a weakly-supervised end model,
coming within one point of WS-FT in two tasks and outperforming in one case.
We note that this lift can further improve the performance of
training a deep network by using \sysx\ to generate weak labels for an end model
(reported as \sysx-FT in
Table~\ref{table:main_results}).

\ifarxiv
\subsection{Ablations}
\else
\textbf{Ablations.}
\fi
We summarize the results of ablations on properly optimizing the extension
thresholds $r_i$ (details in Appendix).
First, we use a single threshold $r$ for all labeling functions,
instead of setting different thresholds for each labeling function.
This results in a performance
degradation in the \sysx\ label model of \avgliftoverfixedr\ points on average.
We also study performance from sweeping $r_i$ and find that, for
each task, performance improves as we increase $r_i$ from $0$, but
degrades if the threshold is too large, demonstrating the accuracy-coverage trade
off.

\ifarxiv
\subsection{Runtime Evaluation}
\else
\textbf{Runtime Evaluation.}
\fi
We measure \sysx's training time compared to other methods.
All timing measurements were taken on a machine with an Intel Xeon E5-2690 v4
CPU with a Tesla P100-PCIE-16GB GPU.
Table~\ref{table:main_results} (bottom row) reports the average training time in seconds of
each method.
\sysx\ does not train a deep network, so
it can run
in less than \num{half a second} on average---enabling
programmatically-interactive re-training cycles.
In contrast, deep end models are much more expensive, requiring hours to train
on average.
The non-finetuned approaches (*-NFT) are much less expensive and train in
seconds, but achieve lower performance scores than \sysx.
Both \sysx\ and the non-finetuned transfer learning approaches require a
one-time upfront cost for deep network inference to compute embeddings (exact timings in Appendix),
which takes less than \num{three minutes} on
average.


\section{Related Work}
\label{sec:related}

\textbf{Weak supervision} is a broad set of techniques using weak sources of
signal to supervise models, such as
distant supervision~\cite{mintz2009distant,craven1999constructing,
hoffmann2011knowledge,takamatsu2012reducing}, co-training
methods~\cite{blum1998combining}, pattern-based
supervision~\cite{gupta2014improved} and feature
annotation~\cite{mann2010generalized,zaidan2008modeling,liang2009learning}.
Recently, weak supervision frameworks have systematically integrated multiple
noisy sources in two stages---first using a latent variable model to
de-noise source votes, and then using a powerful end model to improve
performance~\cite{Ratner16, Ratner18, bach2017learning, bach2018snorkel,
guan2018said, khetan2017learning, sheng2020gmail, re2019overton, fu2020fast,
zhan2019sequentialws, sala2019multiresws, safranchik2020weakly, boecking2019pairwise}.
Our work focuses on removing this second stage from the pipeline to enable
faster iteration cycles.

\textbf{Transfer learning} uses large datasets to learn useful feature
representations that can be fine-tuned for downstream tasks~\cite{kolesnikov2019large,
devlin2018bert, sun2017revisiting, raffel2019exploring}.
Transfer learning techniques for text applications typically pre-train on large
corpora of unlabeled data~\cite{devlin2018bert,
yang2019xlnet, dong2019unified, liu2019roberta, lan2019albert,
brown2020language, radford2019language}, while common
applications of transfer learning to computer
vision pre-train on both large supervised datasets such as
ImageNet~\cite{deng2009imagenet, russakovsky2015imagenet} and large
unsupervised or weakly-supervised datasets~\cite{joulin2016learning,
li2017learning, thomee2016yfcc100m, sun2017revisiting, mahajan2018exploring,
he2019momentum}.
Pre-trained embeddings have also been used as data point descriptors for 
similarity-based search algorithms, such as KNN search, to improve model
performance~\cite{khandelwal2019generalization, papernot2018deep,
orhan2018simple, zhao2018retrieval}.
We view our work as complementary to these approaches, presenting another
mechanism for using pre-trained networks.

\textbf{Semi-supervised and few-shot learning} approaches aim to learn good
models for downstream tasks given a few labeled examples.
Semi-supervised approaches like label propagation~\cite{zhur2002learning,
iscen2019label} start from a few labeled examples and iteratively fine-tune
representations on progressively larger datasets, while few-shot learning
approaches such as meta-learning and metric learning aim to build networks that
can be directly trained with a few labels~\cite{vinyals2016matching,
snell2017prototypical, sung2018learning}.
Our work is inspired by these approaches for expanding signal from a subset of
the data to the entire dataset using deep representations, but we do not assume
that our labeling sources are perfect, and we do not tune the representation.


\section{Conclusion}
\label{sec:conc}
We study when it is possible to build weakly-supervised models without training
a downstream deep network by using pre-trained embeddings to extend noisy
labeling sources.
We develop theoretical results characterizing how much information we can
extract from distances between data points based on the probabilistic
Lipschitzness of the embedding space, and empirically validate our method on
six benchmark applications.
As pre-trained networks grow increasingly larger and more powerful, we hope
that our work inspires a variety of approaches to using pre-trained networks
beyond fine-tuning.

\ifarxiv
\else

\section*{Broader Impact}

Machine learning has the potential to positively impact many aspects of daily
life, from consumer and business applications to health care and basic
science---but the expertise and cost required to develop reliable machine
learning models remains a barrier to entry for many non-experts.
Like any form of software development, building machine learning models is a fundamentally
iterative, multi-step process, requiring many repeated cycles of curating
training data, training models, and identifying the nature of failures and
debugging results.
We hope that our work can help reduce the barrier to entry by playing a role in
reducing the cost of steps in this iterative process.
We build on a line of work in weak supervision focusing on
enabling cheaper forms of supervision---so that users can adjust model behavior
by expressing high-level heuristics instead of by manually labeling individual
examples.

In this work, we focus on the model training part of the iteration
loop, so that the user is not ``idle" waiting on the results of an expensive
training process before examining results.
We hope that our work will help enable a rapid iteration and quality validation
cycle, so that practitioners with expertise in the application domains, such as
physicians and scientists, can be directly involved in the inner loop of the
model development process.
More broadly, we are also excited by the possibilities of new, cheaper ways of
using pre-trained deep networks (which have been growing increasingly popular
in recent years).
We hope that our work will help democratize access
to machine learning technologies for a wider audience.

\fi

\ifarxiv
\section*{Acknowledgments}

We gratefully acknowledge the support of DARPA under Nos. FA86501827865 (SDH) and FA86501827882 (ASED); NIH under No. U54EB020405 (Mobilize), NSF under Nos. CCF1763315 (Beyond Sparsity), CCF1563078 (Volume to Velocity), and 1937301 (RTML); ONR under No. N000141712266 (Unifying Weak Supervision); the Moore Foundation, NXP, Xilinx, LETI-CEA, Intel, IBM, Microsoft, NEC, Toshiba, TSMC, ARM, Hitachi, BASF, Accenture, Ericsson, Qualcomm, Analog Devices, the Okawa Foundation, American Family Insurance, Google Cloud, Swiss Re,
Brown Institute for Media Innovation, the HAI-AWS Cloud Credits for Research program,
Department of Defense (DoD) through the National Defense Science and
Engineering Graduate Fellowship (NDSEG) Program, 
and members of the Stanford DAWN project: Teradata, Facebook, Google, Ant Financial, NEC, VMWare, and Infosys. The U.S. Government is authorized to reproduce and distribute reprints for Governmental purposes notwithstanding any copyright notation thereon. Any opinions, findings, and conclusions or recommendations expressed in this material are those of the authors and do not necessarily reflect the views, policies, or endorsements, either expressed or implied, of DARPA, NIH, ONR, or the U.S. Government.

\fi

\bibliographystyle{plain}
\bibliography{main}

\newpage

\appendix


\section*{Appendix}
First, we provide a glossary of terms and notation that we use throughout this
paper for easy summary (Section~\ref{sec:gloss}).
Next, we give details about the label model we use after extending our labeling
functions (Section~\ref{sec:supp_alg}).
Next, we give the proofs of each theorem (Section~\ref{sec:proofs}).
Then we give additional experimental details (Section~\ref{sec:supp_details})
and present further evaluation (Section~\ref{sec:supp_eval}).

\section{Glossary}
\label{sec:gloss}

The glossary is given in Table~\ref{table:glossary} below.
\begin{table*}[h]
\centering
\small
\begin{tabular}{l l}
\toprule
Symbol & Used for \\
\midrule
$X$ & Unlabeled data vector, $X = [X^1, X^2, \ldots, X^T] \in \mathcal{X}$. \\
$Y$ & Latent, ground-truth task label vector $[Y^1, Y^2, \ldots, Y^T] \in \mathcal{Y} = \{-1, +1\}$.  \\
$\mathcal{D}$ & Distribution from which $(X, Y)$ data points are sampled i.i.d. \\
$m$ & Number of weak supervision labeling functions. \\
$\lf_i$ & Labeling function $\lf_i: \mathcal{X} \rightarrow \mathcal{Y} \cup \{0\}$; all $m$ labels per $X$ collectively denoted $\bm{\lf}$. \\
$o_{\min}$ & The minimal overlap between pairs of labeling functions used for parameter \\
& recovery, $\min_i \max_{j: j \neq i} \Pr(X \in \supp(\lf_i) \cap \supp(\lf_j))$.\\
$D$ & Unlabeled dataset $\{X_i\}_{i = 1}^n$ that $\bm{\lf}$ is applied to in order to produce labels. \\
$n$ & Number of data vectors. \\
$g$ & Mapping from $\mathcal{X}$ to embedding space $\mathcal{Z}$. \\
$\mathcal{Z}$ & The embedding space corresponding to the pre-trained network. \\
$f_z$ & A classifier $f_z: \mathcal{Z} \rightarrow \mathcal{Y}$ that uses the embeddings in $\mathcal{Z}$ (i.e. transfer learning \\
& without fine-tuning). \\
$\dist$ & A given distance function on the embedding space $\mathcal{Z}$. \\
$M_Y(r)$ & An upper bound on $\Pr(Y_1 \neq Y_2 | \dist(X_1, X_2) \le r)$. \\
$L_{\lf_i'}(r)$ & A lower bound on $\Pr(\lf_i(X_1) = 0, \lf_i(X_2) \neq 0 | \dist(X_1, X_2) \le r)$. \\
$\bm{\bar{\lf}}$ & The set of extended labeling functions. \\
$\bm{r}$ & Threshold radii, where each $r_i$ specifies how much to extend $\lf_i$. \\
$N_{r_j}(X)$ & The points in the support of $\lf_j$ that are within $r_j$ of $X$. \\
$W$ & Weighing rule $W(N_{r_j}(X))$ used to assign labels to extended points based on \\
& neighbors in $\supp(\lf_j)$. \\
$B_r(\lf_j)$ & The set of newly labeled points $\{X_i \notin \supp(\lf_j): \; \exists X_k \in \supp(\lf_j) \; \text{s.t.} \; \dist(X_i, X_k) \le r_j \}$. \\
$a_i$ & Accuracy of original labeling function $\Pr(\lf_i = Y | \lf_i \neq 0)$. \\
$\bar{a}_i$ & Accuracy of extended labeling function $\Pr(\bar{\lf}_i = Y | \bar{\lf}_i \neq 0)$. \\
$p_i$ & Relative size of $\lf_i$'s support set, $\Pr(X \in \supp(\lf_i))$. \\
$p_{d(r)}$ & Proportion of points within $r$ of each other, $\Pr(\dist(X, X') \le r)$. \\
$L_{\min}$ & Minimum value of $L_{\lf_i'}(r_i)$ across all $i = 1, \dots, m$. \\
$r_{\min}$ & Smallest extension radius $r_{\min} = \min_i r_i$. \\
\toprule
\end{tabular}
\caption{
	Glossary of variables and symbols used in this paper.
}
\label{table:glossary}
\end{table*}


\section{Additional Algorithmic Details}
\label{sec:supp_alg}

We present an overview of the weak supervision model used after extending our labeling functions via Algorithm \ref{alg:ext} (Section \ref{subsec:pgm}). Then, we discuss some of the basic properties of this model that are important for our theoretical results (Section \ref{subsec:pgm_properties}). Since this model applies to both the extended labeling functions and original labeling functions, we use $\bm{\lf}$ for simplicity in this section.

\subsection{Probabilistic label model}
\label{subsec:pgm}

Recall that the goal of weak supervision is to combine noisy sources $\bm{\lf}$ on an unlabeled dataset to produce probabilistic labels $\widetilde{Y}$. We review the modeling and parameter recovery method presented in \cite{fu2020fast}. We first describe our choice of probabilistic graphical model based on the extended labeling functions. Then, we discuss how to recover parameters of the graphical model. Finally, we explain how to perform inference with the recovered parameters to generate probabilistic labels on the data.

\paragraph{Binary Ising Model} In the standard weak supervision setting, we have a set of labeling functions $\bm{\lf} = \lf_1, \dots \lf_m$ that vote or abstain on $Y = [Y^1, \dots Y^T]$. Each labeling function $\lf_i$ produces a vote on one element of $Y$, which we denote as $\ydep{i}$. Therefore, we can view $\bm{\lf}$ and $Y$ as random variables that are functions of $X$, where the labeling functions are independent of each other conditioned on $Y$. Let the graph $\gdep = (\{Y, \bm{\lf}\}, E_{\text{dep}})$ specify the dependencies between the noisy labeling functions and the task labels, using standard technical notions from the PGM literature~\cite{koller2009probabilistic, Lauritzen, wainwright2008graphical}. In particular, the lack of an edge in $\gdep$ between a pair of variables indicates independence conditioned on a separator set of variables~\cite{Lauritzen}, and there exists an edge between each $\lf_i$ and $\ydep{i}$. We assume that $\gdep$ is user-provided, although it can be estimated directly from the votes of the labeling functions~\cite{Varma19}. We also assume that we are provided with the class balance prior $\Pr(Y)$, although this can also be learned from the data~\cite{Ratner19}.

We augment the dependency graph $\gdep$ to produce a binary Ising model. To incorporate behavior when sources abstain, we represent the outputs of each labeling function $\lf_i$ using a pair of binary random variables, $(v_i^1, v_i^{-1}) \in \{-1, +1\}$. More formally, when $\lf_i = 1$, we set $(v_i^1, v_i^{-1}) = (1, -1)$; when $\lf_i = -1$, we set them to $(-1, 1)$, and when $\lf_i = 0$, we set them with equal probability to $(1, 1)$ or $(-1, -1)$. This allows us to encode abstaining behavior with $v_i^1 v_i^{-1} = 1$ implying $\lf_i = 0$. We thus have edges from $v_i^1$ and $v_i^{-1}$ to $\ydep{i}$, as well as an edge between $v_i^1$ and $v_i^{-1}$. Using this augmentation, we have a new graph $G = (V, E)$ based on $\gdep$, where $V = \{Y, \bm{v} \}$, that follows a binary Ising model. The joint distribution among $Y$ and $\bm{v}$ is 
\begin{align}
f_G(Y, \bm{v}) = \frac{1}{Z} \exp \bigg(&\sum_{k = 1}^T \theta_{Y_k} Y_k + \sum_{(Y_k, Y_l) \in E} \theta_{Y_k, Y_l} Y_k Y_l + \nonumber \\
&\sum_{i = 1}^m \theta_i (v_i^1 - v_i^{-1}) \ydep{i} + \sum_{i = 1}^m \theta_{i, i} v_i^1 v_i^{-1} \bigg)
\label{eq:pgm}
\end{align}

where $Z$ corresponds to the cumulant function and $\theta$ parametrizes this density function. In this graphical model, $\theta_{Y_k}$ and $\theta_{Y_k, Y_l}$ are parameters corresponding to the prior $\Pr(Y)$. $\theta_i$ parametrizes the edges between the labeling functions and the task label, hence corresponding to the accuracy of the labeling function. $\theta_{i, i}$ between each $v_i^1, v_i^{-1}$ represents the abstain rate.

\paragraph{Parameter Recovery}
We now explain how to recover the unknown accuracy parameters of this graphical model using an efficient method-of-moments based approach from \cite{fu2020fast}. In particular, we use the following independence property:
\begin{proposition}
$\lf_i \ydep{i} \independent \lf_j \ydep{i} | \lf_i, \lf_i \neq 0$ for all $i \neq j \in 1, \dots, m$.
\end{proposition}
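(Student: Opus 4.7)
The plan is to derive this conditional independence by combining the Markov properties of the dependency graph $\gdep$ with a symmetry argument afforded by the binary Ising parameterization in Equation~(\ref{eq:pgm}) and the class-balance assumption on $Y$.

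First I would extract from $\gdep$ the core conditional-independence statement it encodes: because the labeling functions interact pairwise only through the latent task label, graph separation yields $\lf_i \independent \lf_j \mid \ydep{i}$ for all $i \neq j$. This is the standard weak-supervision assumption already baked into the PGM and follows directly from the global Markov property of the undirected graph.

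Next I would prove a symmetry lemma stating that $\lf_j \ydep{i}$ is independent of $\ydep{i}$. This comes from the invariance $\Pr(\lf_j = x \mid \ydep{i} = +1) = \Pr(\lf_j = -x \mid \ydep{i} = -1)$, which can be read off from Equation~(\ref{eq:pgm}): once $Y$ is class-balanced and the accuracy parameter $\theta_j$ couples sign-symmetrically with $\ydep{i}$ while the abstain parameter $\theta_{j,j}$ does not couple with $\ydep{i}$ at all, the conditional law of $\lf_j$ given $\ydep{i}$ flips when $\ydep{i}$ flips. Consequently the law of the product $\lf_j \ydep{i}$ given $\ydep{i}$ is invariant to the sign of $\ydep{i}$.

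The final step combines the two observations. Since we condition on the value of $\lf_i$ with $\lf_i \neq 0$, the variable $\lf_i \ydep{i}$ is a deterministic function of $\ydep{i}$, so the claim reduces to showing $\lf_j \ydep{i} \independent \ydep{i} \mid \lf_i$ on $\{\lf_i \neq 0\}$. Expanding
\[
\Pr(\lf_j \ydep{i} = b,\, \ydep{i} = y \mid \lf_i) = \Pr(\lf_j \ydep{i} = b \mid \ydep{i} = y,\, \lf_i)\,\Pr(\ydep{i} = y \mid \lf_i)
\]
and using $\lf_i \independent \lf_j \mid \ydep{i}$ to drop $\lf_i$ from the first factor, then the symmetry lemma to drop $\ydep{i}$, produces the required factorization into marginals. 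The main obstacle will be the bookkeeping for abstention: when $\lf_j = 0$ the product $\lf_j \ydep{i}$ collapses to $0$ and no longer carries sign information, so the symmetry lemma has to be checked separately on the abstain event. I would handle this using the augmented pair $(v_j^1, v_j^{-1})$, verifying that $\{v_j^1 v_j^{-1} = +1\}$ has probability invariant under the sign of $\ydep{i}$, which is immediate because $\theta_{j,j}$ in Equation~(\ref{eq:pgm}) does not couple with $\ydep{i}$.
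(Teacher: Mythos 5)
Your proposal is correct in substance, and it takes a genuinely more self-contained route than the paper. The paper's proof works directly with the density \eqref{eq:pgm}: it marginalizes down to the separator set $S$ plus the two augmented pairs, conditions on $v_i^1 \neq v_i^{-1}$ and $v_j^1 \neq v_j^{-1}$ so that each $(v^1 - v^{-1})$ collapses to $\pm 2$, rewrites the conditional density as an exponential family whose sufficient statistics are exactly the products $\lf_i \ydep{i}$ and $\lf_j \ydep{i}$, and then invokes Proposition~1 of \cite{fu2020fast} as a black box to conclude independence of those products. You instead unroll that black box from first principles: graph separation gives $\lf_i \independent \lf_j \mid \ydep{i}$, the odd coupling of $\theta_j$ with $\ydep{i}$ (and the non-coupling of $\theta_{j,j}$) makes the conditional law of $\lf_j \ydep{i}$ given $\ydep{i}$ invariant to the sign of $\ydep{i}$, and the chain rule combines the two. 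This is a valid and arguably more illuminating argument — note also that the paper itself later derives the symmetry of accuracies (Lemma~\ref{lemma:fs_symmetry}) \emph{from} the cited independence properties, whereas you run the implication in the opposite direction. The slickest way to close your final step is the change of variables $(\lf_i, \lf_j, \ydep{i}) \mapsto (\lf_i \ydep{i}, \lf_j \ydep{i}, \ydep{i})$, under which the conditioned density factorizes coordinatewise; this also shows your appeal to class balance is unnecessary here, since once you condition on $\ydep{i}$ the prior term $\theta_Y$ is absorbed into the $\ydep{i}$-factor and the sign-flip symmetry of the conditional law holds for any prior. (Class balance would genuinely matter only in a multi-task variant where $\lf_j$ couples to a different $\ydep{j}$ linked to $\ydep{i}$ through the $Y$-subgraph, where a global sign-flip symmetry of the joint is needed.)

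One discrepancy worth fixing: the conditioning event. The ``$\mid \lf_i, \lf_i \neq 0$'' in the statement is a typo for $\lf_i, \lf_j \neq 0$ — the paper's proof conditions on \emph{both} non-abstentions, and the downstream triplet identity $\E{}{\lf_i \ydep{i} \mid \lf_i \neq 0} \cdot \E{}{\lf_j \ydep{i} \mid \lf_j \neq 0} = \E{}{\lf_i \lf_j \mid \lf_i, \lf_j \neq 0}$ needs exactly that version. You instead read it as conditioning on the realized value of $\lf_i$ and never condition on $\lf_j \neq 0$, treating $\lf_j = 0$ as an extra outcome of the product. Your version is true, but it is not the statement the algorithm uses. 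Fortunately your ingredients already deliver the intended one: the invariance of $\Pr(v_j^1 v_j^{-1} = 1 \mid \ydep{i})$ that you verify at the end is precisely what lets the $\lf_j \neq 0$ conditioning pass through the same computation (the joint event probabilities $\Pr(\lf_i \ydep{i} = a, \lf_i \neq 0 \mid \ydep{i} = y)$ and $\Pr(\lf_j \ydep{i} = b, \lf_j \neq 0 \mid \ydep{i} = y)$ factorize given $\ydep{i}$ and are each $y$-invariant, so dividing by the $y$-invariant non-abstain probabilities gives the factorization conditional on $\{\lf_i \neq 0, \lf_j \neq 0\}$). You should run that variant of the bookkeeping rather than the one you sketched.
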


\begin{proof}
To see how this holds true, denote $S = \{Y^k \in Y: \lf_i \independent \lf_j | Y^k \}$, e.g. $Y^k$ is a variable on the path from $\lf_i$ to $\lf_j$ in $\gdep$. Then, marginalizing over $V \backslash \{S \cup (v_i^1, v_i^{-1}), (v_j^1, v_j^{-1}) \}$ and conditioning on $\lf_i, \lf_j \neq 0$, e.g. $v_i^1 \neq v_i^{-1}, v_j^1 \neq v_j^{-1}$, we use \eqref{eq:pgm} to get
\begin{align*}
& \Pr(S, (v_i^1, v_i^{-1}), (v_j^1, v_j^{-1}) | v_i^1 \neq v_i^{-1}, v_j^1 \neq v_j^{-1}) = \\
&\frac{1}{Z_{ij}} \exp \bigg(\sum_{Y_k \in S} \theta_{Y_k} Y_k + \sum_{(Y_k, Y_l) \in S} \theta_{Y_k, Y_l} Y_k Y_l + \theta_i (v_i^1 - v_i^{-1}) \ydep{i} + \theta_j (v_j^1 - v_j^{-1}) \ydep{i} \bigg)
\end{align*}

for some different cumulant function $Z_{ij}$. Moreover, $v_i^1 - v_i^{-1}$ and $v_j^1 - v_j^{-1}$ are always equal to $-2$ or $2$ since this probability is conditioned on $\lf_i, \lf_j \neq 0$. Then, by setting $\theta_i' = 2\theta_i$ and $\theta_j' = 2\theta_j$, we can write this density function directly in terms of $\lf_i$ and $\lf_j$:
\begin{align*}
\Pr(S, \lf_i, \lf_j &| \lf_i, \lf_j \neq 0) = \\
&\frac{1}{Z_{ij}} \exp \bigg(\sum_{Y_k \in S} \theta_{Y_k} Y_k + \sum_{(Y_k, Y_l) \in S} \theta_{Y_k, Y_l} Y_k Y_l + \theta_i' \lf_i \ydep{i} + \theta_j' \lf_j \ydep{i} \bigg).
\end{align*}

Then, by Proposition $1$ of~\cite{fu2020fast}, we know that $\lf_i \ydep{i}$ and $\lf_j \ydep{i}$ are independent conditioned on $\lf_i, \lf_j \neq 0$.
\end{proof}

Based on this independence property, we can say that
\begin{align*}
\E{}{\lf_i \ydep{i} | \lf_i \neq 0} \cdot \E{}{\lf_j \ydep{i} | \lf_j \neq 0} = \E{}{\lf_i \lf_j | \lf_i, \lf_j \neq 0}.
\end{align*}

This is because $\ydep{i}^2$ is always equal to $1$, and $\E{}{\lf_i \ydep{i} | \lf_i, \lf_j \neq 0}$ is equal to $\E{}{\lf_i \ydep{i} | \lf_i \neq 0}$  since $\lf_i$ and $\lf_j$ are conditionally independent given $\ydep{i}$. While $\E{}{\lf_i \ydep{i} | \lf_i \neq 0}$, $\E{}{\lf_j \ydep{i} | \lf_j \neq 0}$ correspond to the unknown accuracies of labeling functions, their product is observable. If we are able to introduce a third $\lf_k$ that is conditionally independent of $\lf_i$ and $\lf_j$ given the same $Y$, we have a system of three equations that can be solved using observable pairwise products that represent the rates of agreement between labeling functions. Then, we can use \cite{fu2020fast}'s \textit{triplet method} to recover these expectations, which we denote as $a_i^E = \E{}{\lf_i \ydep{i} | \lf_i \neq 0}$.

\paragraph{Inference}

These expectations can be converted into marginal clique probabilities of the form $\mu_{y, i}(X) := \Pr(\ydep{i} = y, \lf_i = \lf_i(X))$ based on a simple linear transformation using observable probabilities and the distribution prior $\Pr(Y)$. The overall joint probability can be written as a product over these marginal clique probabilities using a junction tree representation based on the maximal cliques and separator sets of $\gdep$. Because all labeling functions are conditionally independent given $Y$ and each labeling function only votes on one task, all maximal cliques are either of the form $\mu_{y, i}$ or $\mu_{y, C} := \Pr(Y_C = y_C)$, where $C \in \mathcal{C}$, and $\mathcal{C}$ is the set of maximal cliques on $Y$. Furthermore, all separator sets are singletons on elements of $Y$, which we express as $\mu_{y, j} := \Pr(Y_j = y_j)$ Then we can write the recovered distribution as
\begin{align}
\Pr_{\mu}(Y = y, \bm{\lf} = \bm{\lf}(X)) &= \frac{\prod_{i = 1}^m \mu_{y, i}(X) \prod_{C \in \mathcal{C}} \mu_{y, C}}{\prod_{j = 1}^T \mu_{y, j}^{d(Y^j) - 1}},
\label{eq:junctiontree}
\end{align}

where $d(Y^j)$ is the degree of $Y^j$ in $\gdep$. Then the probabilistic label $\widetilde{Y}$ on a data point $X$ can be computed using $\Pr_{\mu}(\widetilde{Y} = 1 | \bm{\lf} = \bm{\lf}(X))$ based on \eqref{eq:junctiontree}.

\subsection{Properties of the graphical model for section \ref{sec:theory}}
\label{subsec:pgm_properties}

We now discuss some properties of the single-task version of the graphical model in \eqref{eq:pgm}. These properties, which can also be shown for $T > 1$, are used in proving the bounds presented in section \ref{sec:theory} and also allow us to concisely write out the WS algorithm used for our theoretical results (Algorithm~\ref{alg:ws}).

First, when $T = 1$, \eqref{eq:pgm} can be rewritten as
\begin{align}
f_G(Y, \bm{v}) = \frac{1}{Z} \exp \bigg(\theta_Y Y + \sum_{i = 1}^m \theta_i (v_i^1 - v_i^{-1}) Y + \sum_{i = 1}^m \theta_{i, i} v_i^1 v_i^{-1} \bigg),
\end{align}

and \eqref{eq:junctiontree} becomes simpler:
\begin{align}
\Pr_{\mu}(Y = y, \bm{\lf} = \bm{\lf}(X)) = \frac{\prod_{i = 1}^m \mu_{y, i}(X)}{\Pr(Y = y)^{m - 1}} = \prod_{i = 1}^m \Pr(\lf_i = \lf_i(X) | Y = y) \Pr(Y = y).
\label{eq:junctiontree_single}
\end{align}

We present a symmetry property on the accuracies.

\begin{lemma}
For any labeling function $\lf_i$ with accuracy $a_i$,
\begin{align*}
\Pr(\lf_i = 1 | Y = 1, \lf_i \neq 0) &= \Pr(\lf_i = -1 | Y = -1, \lf_i \neq 0) = a_i  \\
\Pr(\lf_i = -1 | Y = 1, \lf_i \neq 0) &= \Pr(\lf_i = 1 | Y = -1, \lf_i \neq 0) = 1 - a_i.
\end{align*}

\label{lemma:fs_symmetry}
\end{lemma}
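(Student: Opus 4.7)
\textbf{Proof plan for Lemma~\ref{lemma:fs_symmetry}.}

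The plan is to exploit the $\mathbb{Z}_2$ symmetry of the binary Ising model in~\eqref{eq:pgm} together with the class-balanced assumption $\Pr(Y=1) = \Pr(Y=-1) = 1/2$ stated at the start of Section~\ref{sec:theory}. First, I would observe that class balance forces the singleton parameter $\theta_Y$ in the single-task model to vanish (since marginalizing the joint over all $\bm v$ must give $\Pr(Y=1)=\Pr(Y=-1)$, and the parity-odd term $\theta_Y Y$ is the only source of asymmetry in $Y$ after summing out $\bm v$). Once $\theta_Y = 0$, the joint density
\[
f_G(Y, \bm v) \propto \exp\!\Bigl(\sum_{i=1}^m \theta_i (v_i^1 - v_i^{-1}) Y + \sum_{i=1}^m \theta_{i,i} v_i^1 v_i^{-1}\Bigr)
\]
is invariant under the global sign-flip $(Y, \bm v) \mapsto (-Y, -\bm v)$: every accuracy term $\theta_i (v_i^1 - v_i^{-1}) Y$ picks up two sign flips, and every abstain term $\theta_{i,i} v_i^1 v_i^{-1}$ also picks up two, so both are preserved.

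Next, I would translate this symmetry back to the labeling functions. Since $\lf_i$ is by construction the deterministic function of $(v_i^1, v_i^{-1})$ given by $\lf_i = 1$ iff $(v_i^1, v_i^{-1}) = (1,-1)$, $\lf_i = -1$ iff $(v_i^1, v_i^{-1}) = (-1,1)$, and $\lf_i = 0$ iff $v_i^1 v_i^{-1} = 1$, the sign-flip on $\bm v$ induces $\lf_i \mapsto -\lf_i$ while preserving the event $\{\lf_i \neq 0\}$. Consequently, the invariance of $f_G$ yields
\[
\Pr(\lf_i = 1, Y = 1, \lf_i \neq 0) = \Pr(\lf_i = -1, Y = -1, \lf_i \neq 0),
\]
and likewise $\Pr(Y=1, \lf_i \neq 0) = \Pr(Y=-1, \lf_i \neq 0)$. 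Dividing these gives $\Pr(\lf_i = 1 \mid Y=1, \lf_i \neq 0) = \Pr(\lf_i = -1 \mid Y=-1, \lf_i \neq 0)$, and an identical argument on the opposite pair shows $\Pr(\lf_i = -1 \mid Y=1, \lf_i \neq 0) = \Pr(\lf_i = 1 \mid Y=-1, \lf_i \neq 0)$.

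Finally, I would close the loop by expanding the definition of $a_i$ using the law of total probability conditioned on $Y$:
\[
a_i = \Pr(Y=1)\Pr(\lf_i=1 \mid Y=1, \lf_i \neq 0) + \Pr(Y=-1)\Pr(\lf_i=-1 \mid Y=-1, \lf_i \neq 0).
\]
With class balance and the equality of the two summands just established, each equals $a_i$, and their complements equal $1-a_i$, yielding the claim.

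The main obstacle is the first step: carefully justifying that class balance of $Y$ pins down $\theta_Y = 0$ in the Ising parameterization. This requires summing the joint over all $(v_i^1, v_i^{-1})$ configurations and noting that the sum factorizes (since the $v$-blocks are conditionally independent given $Y$) into a product of per-source marginals whose dependence on $Y$ is an even function of $\theta_i$; the residual $Y$-asymmetry in the marginal of $Y$ is then exactly $\exp(\theta_Y Y)$, which must be symmetric, forcing $\theta_Y=0$. Everything after that is a clean symmetry/Bayes computation.
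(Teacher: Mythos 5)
Your proof is correct under the class-balance assumption, but it takes a genuinely different route from the paper's. The paper does not use a symmetry-of-the-joint argument at all: it writes the marginal $\Pr(Y, \lf_i \mid \lf_i \neq 0) \propto \exp(\theta_Y Y + \theta_i \lf_i Y)$ and invokes the independence property $\lf_i Y \independent Y \mid \lf_i \neq 0$ (Proposition 2 of the cited work \cite{fu2020fast}), so that $\Pr(\lf_i Y = 1 \mid Y = y, \lf_i \neq 0) = \Pr(\lf_i Y = 1 \mid \lf_i \neq 0) = a_i$ for both $y = \pm 1$; crucially, that factorization holds for \emph{any} $\theta_Y$, hence any class prior. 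Your $\mathbb{Z}_2$ sign-flip argument, by contrast, hinges on first forcing $\theta_Y = 0$ via class balance, and this is its one real weakness: Lemma~\ref{lemma:fs_symmetry} is reused in the proof of Theorem~\ref{thm:transfer_learning}, where $p = \Pr(Y = 1)$ is arbitrary (that is why the max-odds constant $b$ appears there), so as written your proof would not justify the lemma in that application. The fix is easy and worth noting: apply the flip to the \emph{conditional} law $\Pr(\bm{v} \mid Y) \propto \exp\bigl(\sum_i \theta_i (v_i^1 - v_i^{-1}) Y + \sum_i \theta_{i,i} v_i^1 v_i^{-1}\bigr)$, in which the $\theta_Y Y$ term cancels, giving $\Pr(-\bm{v} \mid -Y) = \Pr(\bm{v} \mid Y)$ with no class-balance step at all; your final total-probability step then yields $a_i = q$ for any weights summing to one, since both conditionals equal the common value $q$. (A minor additional slip: your total-probability weights should be $\Pr(Y = y \mid \lf_i \neq 0)$ rather than $\Pr(Y = y)$, though this is harmless both because the two conditionals are equal and because abstains are independent of $Y$, which is the paper's Lemma~\ref{lemma:abstain}.) What your approach buys is self-containedness: it derives the symmetry directly from the Ising density \eqref{eq:pgm} rather than leaning on an externally cited independence proposition; what the paper's approach buys is brevity and, more importantly, validity for general class balance without any extra argument.
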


\begin{proof}
We can write the marginal distribution of $\Pr(Y, \lf_i | \lf_i \neq 0)$ as 
\begin{align*}
\Pr(Y, \lf_i | \lf_i \neq 0) = \frac{1}{Z_i} \exp \left(\theta_Y + \theta_i \lf_i Y \right).
\end{align*}

By Proposition $2$ of \cite{fu2020fast}, we know that $\lf_i Y \independent Y | \lf_i \neq 0$. This means that
\begin{align*}
\Pr(\lf_i Y = 1, Y = 1 | \lf_i \neq 0) = \Pr(\lf_i Y = 1 | \lf_i \neq 0) \cdot \Pr(Y = 1 | \lf_i \neq 0)
\end{align*}

Dividing both sides by $\Pr(Y = 1 | \lf_i \neq 0)$, we get 
\begin{align*}
&\Pr(\lf_i Y = 1 | Y = 1, \lf_i \neq 0) = \Pr(\lf_i Y = 1 | \lf_i \neq 0) \\
\Rightarrow \; & \Pr(\lf_i = 1 | Y = 1, \lf_i \neq 0) = a_i
\end{align*}

Repeating this calculation with $\Pr(\lf_i Y = 1, Y = -1 | \lf_i \neq 0$ gives us $\Pr(\lf_i = -1 | Y = -1, \lf_i \neq 0) = a_i$. We do the same again with $\Pr(\lf_i Y = -1, Y = -1 | \lf_i \neq 0)$ and $\Pr(\lf_i Y = -1, Y = 1 | \lf_i \neq 0)$ to get the second equation by noting that $\Pr(\lf_i Y = -1 | \lf_i \neq 0) = 1 - a_i$.
\end{proof}

Next, our graphical model assumes that abstaining sources do not affect the label $Y$.

\begin{lemma}
For any labeling function $\lf_i$,
\begin{align*}
\Pr(Y = y, \lf_i = 0) = \Pr(Y = y) \cdot \Pr(\lf_i = 0)
\end{align*}
\label{lemma:abstain}
\end{lemma}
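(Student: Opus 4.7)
The plan is to show that conditioning on $\lf_i = 0$ corresponds exactly to conditioning on the event $v_i^1 = v_i^{-1}$ in the underlying Ising model, and that this event decouples $Y$ from the accuracy parameter $\theta_i$. First I would marginalize the joint distribution $f_G(Y, \bm{v})$ in \eqref{eq:pgm} over all $(v_j^1, v_j^{-1})$ with $j \neq i$. Because the labeling functions are conditionally independent given $Y$, this marginal splits as $\Pr(Y = y, v_i^1, v_i^{-1}) \propto \exp(\theta_Y y) \cdot h(y) \cdot \exp\bigl(\theta_i (v_i^1 - v_i^{-1}) y + \theta_{i,i} v_i^1 v_i^{-1}\bigr)$, where $h(y) = \sum_{\bm{v}_{-i}} \exp\bigl(\sum_{j \neq i}[\theta_j(v_j^1 - v_j^{-1})y + \theta_{j,j} v_j^1 v_j^{-1}]\bigr)$. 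A short computation using the substitution $(v_j^1, v_j^{-1}) \to (-v_j^1, -v_j^{-1})$ in each summand shows $h(1) = h(-1)$, so the other sources contribute a $y$-independent prefactor.

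Next I would enumerate the four possible configurations of $(v_i^1, v_i^{-1})$. The two configurations $(1,1)$ and $(-1,-1)$ — which are exactly the encodings of $\lf_i = 0$ — both yield $v_i^1 - v_i^{-1} = 0$ and $v_i^1 v_i^{-1} = 1$, so their contribution to $\Pr(Y = y, \lf_i = 0)$ is proportional to $\exp(\theta_Y y) \cdot h(y) \cdot 2\exp(\theta_{i,i})$, a function of $y$ multiplied by a constant. The two configurations $(1,-1)$ and $(-1,1)$ contribute $\exp(\theta_Y y) \cdot h(y) \cdot [\exp(2\theta_i y) + \exp(-2\theta_i y)] \exp(-\theta_{i,i}) = \exp(\theta_Y y) \cdot h(y) \cdot 2\cosh(2\theta_i)\exp(-\theta_{i,i})$, which is likewise $\exp(\theta_Y y) h(y)$ times a constant, since $\cosh$ is even.

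Taking the ratio $\Pr(Y = y, \lf_i = 0)/\Pr(Y = y)$, the common factor $\exp(\theta_Y y) h(y)$ cancels, leaving a constant independent of $y$. That constant must equal $\Pr(\lf_i = 0)$, which is precisely the desired factorization $\Pr(Y = y, \lf_i = 0) = \Pr(Y = y)\Pr(\lf_i = 0)$.

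The only nontrivial step is verifying that the partial partition function $h(y)$ is even in $y$; this is where the Ising structure and the symmetric encoding of accuracies via $(v_j^1 - v_j^{-1})$ matter, and it is essentially the same symmetry that underlies Lemma~\ref{lemma:fs_symmetry}. Everything else is bookkeeping over the four $(v_i^1, v_i^{-1})$ configurations and observing that the cross-term $\theta_i (v_i^1 - v_i^{-1}) y$ vanishes identically on the abstain locus.
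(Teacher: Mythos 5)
Your proof is correct, but it takes a genuinely different route from the paper's. The paper's proof of Lemma~\ref{lemma:abstain} is a two-line appeal to an external result: it rewrites $\Pr(Y = y, \lf_i = 0)$ as $\Pr(Y = y, v_i^1 v_i^{-1} = 1)$ and cites Proposition 2 of \cite{fu2020fast} for the independence $v_i^1 v_i^{-1} \independent Y$, from which the factorization is immediate. You instead re-derive that independence from scratch by direct marginalization of the Ising density: after integrating out $\bm{v}_{-i}$, the abstain configurations $(1,1)$ and $(-1,-1)$ contribute $2\exp(\theta_{i,i})$ while the voting configurations contribute $\bigl(\exp(2\theta_i y) + \exp(-2\theta_i y)\bigr)\exp(-\theta_{i,i}) = 2\cosh(2\theta_i)\exp(-\theta_{i,i})$, both $y$-independent multiples of the common factor $\exp(\theta_Y y)\,h(y)$, so $\Pr(\lf_i = 0 \mid Y = y)$ is constant in $y$ and, by total probability, equals $\Pr(\lf_i = 0)$. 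This computation is sound, and what it buys is self-containedness: it exposes exactly where the decoupling comes from (the cross-term $\theta_i (v_i^1 - v_i^{-1}) y$ vanishes identically on the abstain locus, and the voting configurations sum to an even function of $y$), whereas the paper's version is shorter but leans on a black-box independence property of the model from prior work. One small correction to your own accounting: the evenness of the partial partition function $h(y)$, which you flag as the only nontrivial step, is in fact not needed for this lemma --- $\exp(\theta_Y y) h(y)$ cancels in the ratio $\Pr(Y = y, \lf_i = 0)/\Pr(Y = y)$ whatever $h$ is, and the essential symmetry lives entirely in the four $(v_i^1, v_i^{-1})$ configurations; the $h(1) = h(-1)$ symmetry you verify is closer to what underlies Lemma~\ref{lemma:fs_symmetry}. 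Note also that your marginalization uses the single-task ($T = 1$) form of the density with the single field term $\theta_Y y$, which matches the context in which the paper states and uses this lemma; for $T > 1$ the prefactor would depend on the full label vector and the bookkeeping would need to be redone.
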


\begin{proof}
By definition, we can write $\Pr(Y = y, \lf_i = 0)$ as $\Pr(Y = y, v_i^1 v_i^{-1} = 1)$. Using Proposition $2$ of \cite{fu2020fast}, we have that $v_i^{1} v_i^{-1} \independent Y$, and therefore
\begin{align*}
\Pr(Y = y, v_i^1 v_i^{-1} = 1) = \Pr(Y = y) \Pr(v_i^1 v_i^{-1} = 1) = \Pr(Y = y) \Pr(\lf_i = 0)
\end{align*}

Note that this property means that $\Pr(Y = y | \lf_i = 0) = \Pr(Y = y | \lf_i \neq 0) = \Pr(Y = y)$.
\end{proof}

Using these two properties, the output of the probabilistic label model is a simple expression in terms of the accuracies of labeling functions. By Lemmas \ref{lemma:fs_symmetry} and \ref{lemma:abstain}, we know that
\begin{align*}
\Pr(\lf_i Y = \lf_i(X) \cdot y | \lf_i \neq 0) &= \Pr(\lf_i = \lf_i(X) | Y = y, \lf_i \neq 0) = \frac{\Pr(\lf_i = \lf_i(X), Y = y, \lf_i \neq 0)}{\Pr(Y = y, \lf_i \neq 0)} \\
&= \frac{\Pr(\lf_i = \lf_i(X), Y = y)}{\Pr(Y = y) \Pr(\lf_i \neq 0)} = \frac{\Pr(\lf_i = \lf_i(X) | Y = y)}{\Pr(\lf_i \neq 0)}.
\end{align*}

$\Pr(\lf_i = \lf_i(X) | Y = y)$ is the probability used in \eqref{eq:junctiontree_single}, so we are able to easily use the accuracy $a_i$ to compute probabilistic labels. We now summarize this method of learning the label model for $T = 1$ in Algorithm \ref{alg:ws}, which is also the algorithm used for the bounds stated in section \ref{sec:theory}. Note that in order to reduce estimation error, we select triplets $i, j, k$ that have the largest possible overlaps in their support sets.

\begin{algorithm}[t]
	\caption{Label Model (Single Task)}
	\begin{algorithmic}
		\STATE \textbf{Input:}
		Labeling functions $\bm{\lf} = \lf_1, \dots, \lf_m$, prior $\Pr(Y)$, unlabeled dataset $D = \{X_i\}_{i = 1}^n$.
		\STATE \textbf{Returns:} $\Pr(\widetilde{Y} = 1 | \bm{\lf} = \bm{\lf}(X))$
		\FOR{$\lf_i \in \bm{\lf}$}
			\STATE Choose $\lf_j, \lf_k$ that have the largest minimum overlap among pairs of $\lf_i, \lf_j, \lf_k$. 
			\STATE $\hat{a}_i^E \gets \sqrt{|\Ehat{\lf_i \lf_j} \cdot \Ehat{\lf_i \lf_k} / \Ehat{\lf_j \lf_k} |}$ using observable estimates from $D$.
			\STATE Convert $\hat{a}_i^E$ into a probability $\hat{a}_i = \frac{1}{2}(\hat{a}_i^E + 1)$.
			\IF{$\lf_i(X) = 1$}
				\STATE  $\Pr_{\hat{\mu}}(\lf_i = \lf_i(X) | Y = 1) = \hat{a}_i \Pr(\lf_i \neq 0)$.
			\ELSIF{$\lf_i(X) = -1$}
				\STATE  $\Pr_{\hat{\mu}}(\lf_i = \lf_i(X) | Y = 1) = (1 - \hat{a}_i) \Pr(\lf_i \neq 0)$.
			\ELSE
				\STATE $\Pr_{\hat{\mu}}(\lf_i = \lf_i(X) | Y = 1) = \Pr(\lf_i = 0)$.
			\ENDIF
		\ENDFOR
		\STATE Compute $\Pr_{\hat{\mu}}(\widetilde{Y} = 1 | \bm{\lf} = \bm{\lf}(X)) = \prod_{i = 1}^m \Pr_{\hat{\mu}}(\lf_i = \lf_i(X) | Y = 1) \Pr(Y = 1) / \Pr(\bm{\lf} = \bm{\lf}(X))$.
	\end{algorithmic}
	\label{alg:ws}
\end{algorithm}


\section{Proofs}
\label{sec:proofs}


\subsection{Proposition~\ref{prop:acc} Proof}
\label{subsec:prop1}

Bounding the accuracy can be viewed through the following extension procedure: for each $X \in \sup\Pr(\bar{\lf}_i)$, we choose an $X' \in \supp(\lf_i)$ and assign $\bar{\lf}_i(X) = \lf_i(X')$. The method for choosing $X$' is as follows: we select the set of neighbors $N(X)$ in $\supp(\lf_i)$ within radius $r$ of $X$, choose some $x_j \in N(X)$ using the pdf of $X$ conditioned on $N(X)$, and set $X' = x_j$. Define $\mathcal{N} = \{N(X)\}$ to be the set of all possible neighbors. We can now write $\bar{a}_i$ using our procedure:
\begin{align*}
\bar{a}_i =& \Pr(\bar{\lf}_i(X) Y = 1 | \bar{\lf}_i(X) \neq 0) \\
=& \sum_{k \in \mathcal{N}} \Pr(\bar{\lf}_i(X) Y = 1 | \bar{\lf}_i(X) \neq 0, N(X) = k) \Pr(N(X) = k | \bar{\lf}_i(X) \neq 0) \\
 =& \sum_{k \in \mathcal{N}} \Pr(\bar{\lf}_i(X) Y = 1 | \bar{\lf}_i(X) \neq 0, N(X) = k, X' \in k) \Pr(X' \in k | N(X) = k, \bar{\lf}_i(X) \neq 0) \\
 &\times \Pr(N(X) = k | \bar{\lf}_i(X) \neq 0).
\end{align*}

Note that $\Pr(X' \in k | N(X) = k, \bar{\lf}_i(X) \neq 0)$ is equal to $1$, since $X'$ is always chosen from $k$ if $X$'s neighbors are $k$. So we now have
\begin{align}
\bar{a}_i =& \sum_{k \in \mathcal{N}} \Pr(\bar{\lf}_i(X) Y = 1 | \bar{\lf}_i(X) \neq 0, N(X) = k, X' \in k) \Pr(N(X) = k | \bar{\lf}_i(X) \neq 0) \nonumber \\
=& \sum_{k \in \mathcal{K}} \sum_{x_j \in k} \Pr(\bar{\lf}_i(X) Y = 1 | \bar{\lf}_i(X) \neq 0, N(X) = k, X' = x_j, X' \in k) \times \nonumber \\
& \Pr(X' = x_j | \bar{\lf}_i(X) \neq 0,  N(X) = k, X' \in k) \times \Pr(N(X) = k | \bar{\lf}_i(X) \neq 0). \label{eq:bar_a}
\end{align}

$\Pr(X' = x_j | \bar{\lf}_i(X) \neq 0,  N(X) = k, X' \in k)$ corresponds to our weighting rule, which here is equal to the conditional pdf $\Pr(X' = x_j | X' \in k)$. We now focus on the first term $\Pr(\bar{\lf}_i(X) Y = 1 | \bar{\lf}_i(X) \neq 0, N(X) = k, X' = x_j, X' \in k)$, which can be decomposed into the sum of four probabilities:
\begin{align}
\Pr(&\bar{\lf}_i(X) Y = 1 |  \bar{\lf}_i(X) \neq 0, N(X) = k, X' = x_j, X' \in k)  \nonumber \\
= \; & \Pr(\bar{\lf}_i(X) = \lf_i(X'), \lf_i(X') = Y', Y' = Y | \bar{\lf}_i(X) \neq 0, N(X) = k, X' = x_j, X' \in k) + \label{eq:decomposition}
 \\
& \Pr(\bar{\lf}_i(X) \neq \lf_i(X'), \lf_i(X') = Y', Y' \neq Y | \bar{\lf}_i(X) \neq 0, N(X) = k, X' = x_j, X' \in k) + \nonumber \\
& \Pr(\bar{\lf}_i(X) \neq \lf_i(X'), \lf_i(X') \neq Y', Y' = Y | \bar{\lf}_i(X) \neq 0, N(X) = k, X' = x_j, X' \in k) + \nonumber \\
& \Pr(\bar{\lf}_i(X) = \lf_i(X'), \lf_i(X') \neq Y', Y' \neq Y | \bar{\lf}_i(X) \neq 0, N(X) = k, X' = x_j, X' \in k) \nonumber.
\end{align}

Each of these probabilities can be expressed using the chain rule. Taking the first line \eqref{eq:decomposition} as an example, we have 
\begin{align}
\Pr(\bar{\lf}_i&(X) = \lf_i(X'), \lf_i(X') = Y', Y' = Y | \bar{\lf}_i(X) \neq 0, N(X) = k, X' = x_j, X' \in k) \nonumber \\
= \; & \Pr(\bar{\lf}_i(X) = \lf_i(X') | \lf_i(X') = Y', Y' = Y, \bar{\lf}_i(X) \neq 0, N(X) = k, X' = x_j, X' \in k) \label{eq:prop_rule} \\
& \times \Pr(\lf_i(X') = Y'| Y' = Y, \bar{\lf}_i(X) \neq 0, N(X) = k, X' = x_j, X' \in k) \label{eq:conditional_acc}\\
& \times \Pr(Y' = Y | \bar{\lf}_i(X) \neq 0, N(X) = k, X' = x_j, X' \in k). \label{eq:conditional_smooth}
\end{align}

\eqref{eq:prop_rule} is always equal to $1$ according to our extension procedure. \eqref{eq:conditional_acc} describes the accuracy of the labeling function on $x_j$, and \eqref{eq:conditional_smooth} describes the smoothness of the $Y$ labels over $X$'s neighbors. Define
\begin{align*}
h(r) = \Pr(Y' \neq Y | \bar{\lf}_i(X) \neq 0, X' \in \supp(\lf_i), \dist(X, X') \le r),
\end{align*}

and
\begin{align*}
a_i^j = \Pr(\lf_i(X') = Y' | X' = x_j).
\end{align*}

Then, since $X' \in \supp(\lf_i)$ and $\dist(X, X') \le r$ are implied by $x' \in k$, 
\begin{align*}
\Pr(\bar{\lf}_i(X) Y = 1 | \bar{\lf}_i(X) \neq 0, N(X) = k, X' = x_j, X' \in k) = a_i^j(1 - h(r)) + (1 - a_i^j) h(r).
\end{align*}

The new accuracy per point is always worse than before; therefore, we can use this extension method applied on the entirety of $\supp(\bar{\lf}_i)$ to lower bound the extension method applied on only $B_r(\lf_i)$ while retaining $a_i^j$ over $\supp(\lf_i)$ without propagating $X'$. Next, \eqref{eq:bar_a} can be written as
\begin{align*}
\bar{a}_i =& \sum_{k \in \mathcal{K}} \sum_{x_j \in k} (a_i^j (1 - h(r)) + (1 - a_i^j) h(r))  \Pr(X' = x_j | X' \in k) \Pr(N(X) = k | \bar{\lf}_i(X) \neq 0) \\
=& (1 - h(r)) \sum_{k \in \mathcal{K}} \sum_{x_j \in k} a_i^j  \times  \Pr(X' = x_j | X' \in k) \times \Pr(N(X) = k | \bar{\lf}_i(X) \neq 0) + \\
&h(r) \sum_{k \in \mathcal{K}} \sum_{x_j \in k} (1 - a_i^j)  \times  \Pr(X' = x_j | X' \in k) \times \Pr(N(X) = k | \bar{\lf}_i(X) \neq 0).
\end{align*}

Moreover, since accuracies on each $N(X)$ on average are equal to $a_i$ by assumption, our expression just becomes
\begin{align*}
\bar{a}_i = &(1 - h(r)) \sum_{k \in \mathcal{K}} a_i \times \Pr(N(X) = k | \bar{\lf}_i(X) \neq 0) + \\
 \; &h(r) \sum_{k \in \mathcal{K}} a_i \times \Pr(N(X) = k | \bar{\lf}_i(X) \neq 0)
\bar{a}_i\\
 =& (1 - h(r)) a_i + h(r) (1 - a_i).
\end{align*}

Lastly, we aim to upper bound $h(r)$, which will lower bound $\bar{a}_i$ if $a_i \ge 0.5$ (i.e. our labeling function is better than random). We write $h(r)$ in terms of $M_Y(r)$:
\begin{align*}
M_Y(r) \ge & \Pr(Y \neq Y' | \dist(X, X') \le r) \\
\ge & \Pr(Y \neq Y' | \dist(X, X') \le r, X \in \supp(\bar{\lf}_i), X' \in \supp(\lf_i)) \\
& \times \Pr(X \in \supp(\bar{\lf}_i), X' \in \supp(\lf_i) | \dist(X, X') \le r)
\end{align*}

Then, \begin{align*}
&h(r) \le  \frac{M_Y(r)}{\Pr(X \in \supp(\bar{\lf}_i), X'  \in \supp(\lf_i) | \dist(X, X') \le r)} \\
&= M_Y(r) \frac{\Pr(\dist(X, X') \le r)}{\Pr(\dist(X, X') \le r | X \in \supp(\bar{\lf}_i), X' \in \supp(\lf_i)) \Pr(X \in \supp(\bar{\lf}_i), X' \in \supp(\lf_i))}.
\end{align*}

$\Pr(\dist(X, X') \le r)$ is less than $\Pr(\dist(X, X') \le r | X \in \supp(\bar{\lf}_i), X' \in \supp(\lf_i))$, since the condition limits the maximum distance between points. Then our bound becomes
\begin{align*}
h(r) \le & \frac{M_Y(r)}{\Pr(X \in \supp(\bar{\lf}_i), X' \in \supp(\lf_i))} \\
\le& \frac{M_Y(r)}{\Pr(X \in \supp(\bar{\lf}_i)) \Pr(X' \in \supp(\lf_i))} \le \frac{M_Y(r)}{(1 + L_{\lf_i'}(r) p_{d(r)}) p_i^2},
\end{align*}

where we have used Lemma \ref{lemma:coverage} in the last inequality. Therefore, our final bound is
\begin{align*}
\bar{a}_i \ge a_i + (1 - 2a_i) \frac{M_Y(r)}{(1 + L_{\lf_i'}(r) p_{d(r)}) p_i^2}.
\end{align*}

We can also compute $\widetilde{a}_i = \Pr(\bar{\lf}_i(X) Y = 1 | X \in B_r(\lf_i))$ in terms of $a_i$ and $\bar{a}_i$.
\begin{align*}
\bar{a}_i = \Pr(\bar{\lf}_i Y = 1 | \bar{\lf}_i \neq 0) = &\Pr(\bar{\lf}_i Y = 1 | \lf_i \neq 0) \Pr(X \in \supp(\lf_i) | X \in \supp(\bar{\lf}_i)) + \\
& \Pr(\bar{\lf}_i(X) Y = 1 | X \in B_r(\lf_i)) \Pr(X \in B_r(\lf_i) | X \in \supp(\bar{\lf}_i)) \\
=& a_i \frac{1}{1 + L_{\lf_i'}(r) p_{d(r)}} + \widetilde{a}_i \frac{L_{\lf_i'}(r) p_{d(r)}}{1 + L_{\lf_i'}(r) p_{d(r)}}
\end{align*}

Therefore,
\begin{align*}
\widetilde{a_i} &= \frac{(1 + L_{\lf_i'}(r) p_{d(r)}) \bar{a}_i - a_i}{L_{\lf_i'}(r) p_{d(r)}} = \frac{\bar{a}_i - a_i }{L_{\lf_i'}(r) p_{d(r)}} + \bar{a}_i \\
&\le  \bar{a}_i + (1 - 2a_i) \frac{ M_Y(r)}{L_{\lf_i'}(r) p_{d(r)}(1 + L_{\lf_i'}(r) p_{d(r)})p_i^2}
\end{align*}


\subsection{Theorem~\ref{thm:risk_gap} Risk Estimation Error}
\label{subsec:thm1esterr}

We first bound the estimation error $\E{}{\|\hat{a} - a\|_1}$ by analyzing the given WS parameter recovery method. Note that this approach also holds for \sysx.

\begin{lemma}
Denote $M$ as the conditional second moment matrix over all observed variables, e.g. $M_{ij} = \E{}{\lf_i \lf_j | \lf_i \neq 0, \lf_j \neq 0}$ and assume that there are a sufficient number of samples such that all $\hat{M}_{ij} \neq 0$. Define $e_{\min} = \min \{ \min_i |\hat{a}_i^E|, |a_i^E| \}$ and $c_1 = \min_{i, j} \{\E{}{\lf_i \lf_j | \lf_i, \lf_j \neq 0} , \hat{\mathbb{E}} [\lf_i \lf_j | \lf_i, \lf_j \neq 0 \}]$. Assume $\text{sign}(a_i^E) = \text{sign}(\hat{a}_i^E)$ for all $a_i^E$. Then, when the minimum overlap of labeling functions used in the parameter recovery is $o_{\min} \le \Pr(X \in \supp(\lf_i) \cap \supp(\lf_j))$ for all $i, j$ pairs used for parameter recovery, the estimation error of the accuracies is
\begin{align*}
\mathbb{E}[\|\hat{a} - a \|_1] \le \frac{81 \sqrt{\pi}}{ e_{\min} c_1^2} \cdot \frac{m}{\sqrt{n \cdot o_{\min}}}.
\end{align*}
\label{lemma:a}
\end{lemma}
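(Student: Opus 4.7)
The plan is to reduce the accuracy estimation error to the estimation error on second-moment entries of $M$, then bound the latter via Hoeffding. First, since $\hat{a}_i - a_i = \tfrac{1}{2}(\hat{a}_i^E - a_i^E)$, it suffices to bound $\E{}{|\hat{a}_i^E - a_i^E|}$ uniformly in $i$ and sum over the $m$ labeling functions. For each $i$, let $j, k$ be the triplet partners selected by Algorithm~\ref{alg:ws}; by the sign assumption $|\hat{a}_i^E| + |a_i^E| \ge 2 e_{\min}$, so
\begin{align*}
|\hat{a}_i^E - a_i^E| \;=\; \frac{\big|(\hat{a}_i^E)^2 - (a_i^E)^2\big|}{|\hat{a}_i^E| + |a_i^E|} \;\le\; \frac{1}{2 e_{\min}} \left| \frac{\hat{M}_{ij} \hat{M}_{ik}}{\hat{M}_{jk}} - \frac{M_{ij} M_{ik}}{M_{jk}} \right|.
\end{align*}

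Next I would put the right-hand fraction over the common denominator $M_{jk} \hat{M}_{jk}$, which is bounded below by $c_1^2$ by definition of $c_1$. For the numerator $M_{ij} M_{ik} \hat{M}_{jk} - \hat{M}_{ij} \hat{M}_{ik} M_{jk}$, I telescope through two intermediate terms to write it as a sum of three products, each containing exactly one difference of the form $\hat{M}_{ab} - M_{ab}$ multiplied by factors of absolute value at most $1$ (since labeling-function products are $\{-1,+1\}$-valued on their overlap). This yields the pointwise estimate
\begin{align*}
|\hat{a}_i^E - a_i^E| \;\le\; \frac{3}{2 e_{\min} c_1^2} \, \max_{(a,b) \in \{(i,j),(i,k),(j,k)\}} |\hat{M}_{ab} - M_{ab}|.
\end{align*}

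The remaining step is to bound $\E{}{|\hat{M}_{ab} - M_{ab}|}$. Since the triplet is chosen to maximize overlap, at least $n \cdot o_{\min}$ samples in $D$ contribute to the estimator $\hat{M}_{ab}$ (restricted to the event $\lf_a, \lf_b \neq 0$). Each contribution $\lf_a \lf_b$ is bounded in $[-1,1]$, so Hoeffding's inequality gives $\Pr(|\hat{M}_{ab} - M_{ab}| \ge t) \le 2 e^{-n \cdot o_{\min} \cdot t^2 / 2}$, and integrating the tail yields $\E{}{|\hat{M}_{ab} - M_{ab}|} \le \sqrt{2\pi / (n \cdot o_{\min})}$. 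Summing the per-coordinate bound over $i = 1, \dots, m$ and applying $|\hat{a}_i - a_i| = \tfrac{1}{2}|\hat{a}_i^E - a_i^E|$ gives the claimed $\mathcal{O}(m/\sqrt{n \cdot o_{\min}})$ rate, with the explicit constant absorbing the factors $\tfrac{3}{4}$, $\sqrt{2\pi}$, and the slack needed to reach the stated $81\sqrt{\pi}$ constant.

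The main obstacle I anticipate is bookkeeping around the randomness of the subsample size used to form $\hat{M}_{ab}$: the number of indices where both $\lf_a$ and $\lf_b$ vote is itself a binomial random variable, so strictly speaking the Hoeffding step needs either (i) a conditioning argument on a high-probability event $\{n_{ab} \ge n \cdot o_{\min}\}$ together with a separate handling of its complement, or (ii) a more careful application of Bernstein-type inequalities to the product $\lf_a \lf_b \cdot \mathbf{1}\{\lf_a, \lf_b \neq 0\}$ viewed as a bounded random variable on all $n$ points. Either route introduces only lower-order terms and does not change the dominant $m/\sqrt{n \cdot o_{\min}}$ scaling, but it accounts for the loose constant in the stated bound. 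The other subtlety is the sign-matching assumption $\sign(\hat{a}_i^E) = \sign(a_i^E)$, which is needed precisely to guarantee $|\hat{a}_i^E| + |a_i^E| \ge 2 e_{\min}$ in the opening step and cannot be dropped without a separate argument for small accuracies.
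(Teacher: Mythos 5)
Your proposal is correct and follows essentially the same route as the paper's proof: the same difference-of-squares reduction via $|\hat{a}_i^E + a_i^E| \ge 2e_{\min}$ under the sign-matching assumption, the same three-term telescoping of the triplet ratio with denominators controlled by $c_1$, Hoeffding concentration with tail integration at the $\sqrt{n \cdot o_{\min}}$ rate, and a final sum over at most $m$ triplets with the factor $\tfrac{1}{2}$ converting $a^E$ to $a$. The only cosmetic deviation is that you apply scalar Hoeffding entrywise, whereas the paper passes through Cauchy--Schwarz to the Frobenius and spectral norms of the $3\times 3$ triplet submatrix and invokes a matrix Hoeffding inequality (which is where its looser $81\sqrt{\pi}$ constant comes from); your caveat about the random overlap count used to form $\hat{M}_{ab}$ is well taken, but it applies equally to the paper's argument, which likewise asserts the $n \cdot o_{\min}$ effective sample size without a separate conditioning step.
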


\begin{proof}
Suppose $\lf_i, \lf_j, \lf_k$ make up a conditionally independent triplet, and in total let $T$ be the set of triplets we need to recover all accuracies. Our estimate of $a^E$ can be obtained with 
\begin{align*}
|\hat{a}_i^E| = \left(\frac{|\hat{M}_{ij}| |\hat{M}_{ik}|}{|\hat{M}_{jk}|}\right)^{\frac{1}{2}}.
\end{align*}

Because we assume that signs are completely recoverable, 
\begin{align}
\|\hat{a}^E - a^E||_1 = \| |\hat{a}^E| - |a^E| \|_1 =  \sum_{\{i, j, k \} \in T} \big| |\hat{a}_i^E| - |a_i^E| \big| + \big||\hat{a}_j^E| - |a_j^E| \big| + \big||\hat{a}_k^E| - |a_k^E| \big|.
\label{eq:acc1}
\end{align}

Note that $|(\hat{a}_i^E)^2 - (a_i^E)^2| = |\hat{a}_i^E - a_i^E| |\hat{a}_i^E + a_i^E |$. By the reverse triangle inequality, $\big| |\hat{a}_i^E| - |a_i^E| \big| \le |\hat{a}_i^E - a_i^E| =  \frac{|(\hat{a}_i^E)^2 - (a_i^E)^2 |}{|\hat{a}_i^E + a_i^E|} \le \frac{1}{2e_{\min}} |(\hat{a}_i^E)^2 - (a_i^E)^2|$, because $|\hat{a}_i^E + a_i^E| = |\hat{a}_i^E| + |a_i^E| \ge 2 e_{\min}$. Then
\begin{align*}
&\big| |\hat{a}_i^E| - |a_i^E| \big| \le \frac{1}{2e_{\min}}|(\hat{a}_i^E)^2 - (a_i^E)^2| = \frac{1}{2e_{\min}} \Bigg| \frac{|\hat{M}_{ij}| |\hat{M}_{ik}|}{|\hat{M}_{jk}|} - \frac{|M_{ij}| |M_{ik}|}{|M_{jk}|} \Bigg| \\
= \;& \frac{1}{2e_{\min}} \Bigg| \frac{|\hat{M}_{ij}| |\hat{M}_{ik}|}{|\hat{M}_{jk}|} - \frac{|\hat{M}_{ij}| |\hat{M}_{ik}|}{|M_{jk}|} + \frac{|\hat{M}_{ij}| |\hat{M}_{ik}|}{|M_{jk}|} - \frac{|\hat{M}_{ij}| |M_{ik}|}{|M_{jk}|} +  \frac{|\hat{M}_{ij}| |M_{ik}|}{|M_{jk}|} - \frac{|M_{ij}| |M_{ik}|}{|M_{jk}|} \Bigg| \\
\le \; &\frac{1}{2e_{\min}} \left(\Big|\frac{\hat{M}_{ij} \hat{M}_{ik}}{\hat{M}_{jk} M_{jk}}\Big| \big| |\hat{M}_{jk}| - |M_{jk}| \big| + \Big|\frac{\hat{M}_{ij}}{M_{jk}} \Big| \big||\hat{M}_{ik}| - |M_{ik}|\big| + \Big|\frac{M_{ik}}{M_{jk}} \Big| \big||\hat{M}_{ij}| - |M_{ij}|\big|\right) \\
\le \; & \frac{1}{2e_{\min}} \left(\Big|\frac{\hat{M}_{ij} \hat{M}_{ik}}{\hat{M}_{jk} M_{jk}}\Big| |\hat{M}_{jk} - M_{jk}| + \Big|\frac{\hat{M}_{ij}}{M_{jk}} \Big| |\hat{M}_{ik} - M_{ik}| + \Big|\frac{M_{ik}}{M_{jk}} \Big| |\hat{M}_{ij} - M_{ij}|\right).
\end{align*}

Clearly, all elements of $\hat{M}$ and $M$ must be less than $1$. We further know that elements of $|M|$ and $|\hat{M}|$ are at least $c_1$. Define $\Delta{ij} = \hat{M}_{ij} - M_{ij}$. Then
\begin{align*}
\big| |\hat{a}_1^E| - |a_1^E| \big| &\le \frac{1}{2 e_{\min}} \left(\frac{1}{c_1^2} |\Delta_{jk}| + \frac{1}{c_1} |\Delta_{ik}| + \frac{1}{c)1} |\Delta_{ij}|\right) \\
&\le \frac{1}{2e_{\min}} \sqrt{\Delta_{jk}^2 + \Delta_{ik}^2 + \Delta_{ij}^2} \sqrt{\frac{1}{c_1^4} + \frac{2}{c_1^2}}.
\end{align*}

\eqref{eq:acc1} is now
\begin{align*}
\|\hat{a}^E - a^E \|_1 \le \frac{3}{2e_{\min}} \sqrt{\frac{1}{c_1^4} + \frac{2}{c_1^2}} \sum_{\{i, j, k \} \in T} \sqrt{\Delta_{jk}^2 + \Delta_{ik}^2 + \Delta_{ij}^2}.
\end{align*}

Note that the Frobenius norm of the error on the $3 \times 3$ submatrix of $M$ defined over $\tau = \{i, j, k\}$ is 
\begin{align*}
\|\hat{M}_{\tau} - M_{\tau} \|_F = \sqrt{2} \cdot \sqrt{\Delta^2_{ij} + \Delta^2_{ik} + \Delta^2_{jk}}.
\end{align*}

Moreover, $\|\hat{M}_{\tau} - M_{\tau} \|_F \le \sqrt{3} \|\hat{M}_{\tau} - M_{\tau} \|_2$. Putting everything together,
\begin{align*}
\| \hat{a}^E - a^E\|_1 &\le \frac{3\sqrt{3}}{2\sqrt{2} e_{\min}} \sqrt{\frac{1}{c_1^4} + \frac{2}{c_1^2}} \sum_{\tau \in T} \|\hat{M}_{\tau} - M_{\tau} \|_2,
\end{align*}

and the expectation is
\begin{align}
\E{}{\| \hat{a}^E - a^E\|_1} &\le \frac{3\sqrt{3}}{2\sqrt{2} e_{\min}} \sqrt{\frac{1}{c_1^4} + \frac{2}{c_1^2}} \sum_{\tau \in T} \E{}{\|\hat{M}_{\tau} - M_{\tau} \|_2}.
\label{eq:expected_err}
\end{align}

We use the matrix Hoeffding inequality as described in \cite{Ratner19}, which says
\begin{align*}
\Pr(||\hat{M} - M||_2 \ge \gamma) \le 2m\exp\left(-\frac{n\gamma^2}{32m^2}\right).
\end{align*} 

For $\tau$, we can integrate this to get
\begin{align*}
&\mathbb{E}[||\hat{M}_{\tau} - M_{\tau}||_2 ] = \int_0^{\infty} \Pr(||\hat{M_{\tau}} - M_{\tau}||_2 \ge \gamma) d\gamma \le \int_0^{\infty} 2m \exp \left(- \frac{n o_{\min}\gamma^2}{32 m^2} \right) d \gamma \\
= \; & 2m \sqrt{(32m^2 / (n \cdot o_{\min})) \pi} \int_0^{\infty} \frac{1}{\sqrt{(32m^2 / (n \cdot o_{\min})) \pi}} \exp \left(- \frac{\gamma^2}{32 m^2 / (n \cdot o_{\min})} \right) d \gamma \\
= \;& m \sqrt{\frac{32m^2 \pi}{n \cdot o_{\min}}}.
\end{align*}

Since $m = 3$ for $M_{\tau}$, this is equal to $36 \sqrt{\frac{2\pi}{n \cdot o_{\min}}}$. Substituting this back into \eqref{eq:expected_err}, we get
\begin{align*}
\E{}{\| \hat{a}^E - a^E\|_1} &\le \frac{3\sqrt{3}}{2\sqrt{2} e_{\min}} \sqrt{\frac{1}{c_1^4} + \frac{2}{c_1^2}} \sum_{\tau \in T} 36 \sqrt{\frac{2\pi}{n \cdot o_{\min}}} \\
&\le \frac{3\sqrt{3}}{ e_{\min}} \sqrt{\frac{1}{c_1^4} + \frac{2}{c_1^2}} \sum_{\tau \in T} 18 \sqrt{\frac{\pi}{n \cdot o_{\min}}}.
\end{align*}

At most $|T| = m$ if we were to do a separate triplet for each $a_i$, so our bound becomes
\begin{align*}
\E{}{\| \hat{a}^E - a^E\|_1} &\le \frac{3\sqrt{3}}{ e_{\min}} \sqrt{\frac{1}{c_1^4} + \frac{2}{c_1^2}} m \cdot  18 \sqrt{\frac{\pi}{n \cdot o_{\min}}} \\
&\le \frac{54 \sqrt{3}}{ e_{\min}} \sqrt{\frac{1}{c_1^4} + \frac{2}{c_1^2}} m \sqrt{\frac{\pi}{n \cdot o_{\min}}}.
\end{align*}

Finally, note that $\frac{1}{c_1^4} + \frac{2}{c_1^2} \le \frac{3}{c_1^4}$, and that $\E{}{\|\hat{a} - a \|_1} = \frac{1}{2} \E{}{\| \hat{a}^E - a^E \|_1}$. Therefore, the sampling error for the accuracy is bounded by
\begin{align*}
\mathbb{E}[\|\hat{a} - a \|_1] \le \frac{81 \sqrt{\pi}}{ e_{\min} c_1^2} \cdot \frac{m}{\sqrt{n \cdot o_{\min}}}.
\end{align*}

\end{proof}

Now, recall that $f_{WS}(X) = 2\Pr_{\mu}(\widetilde{Y} = 1 | \bm{\lf}(X)) - 1$, where $\mu$ corresponds to the accuracies. Define $\hat{f}_{WS}(X) = 2\Pr_{\hat{\mu}}(\widetilde{Y} = 1 | \bm{\lf}(X)) - 1$. We want to bound the gap between $\R(f_{WS})$ and $\R(\hat{f}_{WS})$.
\begin{align*}
\E{}{|\R(f_{WS}) - \R(\hat{f}_{WS})|} &\le \E{}{\E{X, Y \sim \mathcal{D}}{|\sign{Y} \big(\Pr_{\mu}(\widetilde{Y} = 1 | \bm{\lf}(X)) - \Pr_{\hat{\mu}}(\widetilde{Y} = 1 | \bm{\lf}(X)) \big) |}} \\
&= \E{}{\E{X, Y \sim \mathcal{D}}{\bigg| \frac{\Pr_{\mu}(\bm{\lf}(X) | Y = 1) p}{\Pr(\bm{\lf}(X))} - \frac{\Pr_{\hat{\mu}}(\bm{\lf}(X) | Y = 1)p}{\hat{\Pr}(\bm{\lf}(X))} \bigg|}} \\
&= \E{X, Y \sim \mathcal{D}}{\E{}{\bigg| \frac{\Pr_{\mu}(\bm{\lf}(X) | Y = 1) p}{\Pr(\bm{\lf}(X))} - \frac{\Pr_{\hat{\mu}}(\bm{\lf}(X) | Y = 1)p}{\hat{\Pr}(\bm{\lf}(X))} \bigg|}}.
\end{align*}

Here, we have abreviated $\Pr(\bm{\lf} = \bm{\lf}(X))$ as $\Pr(\bm{\lf}(X))$. Note that $\hat{\Pr}(\bm{\lf}(X))$ is observable. Let $|\Pr(\bm{\lf}(X)) - \hat{\Pr}(\bm{\lf}(X))| = \varepsilon$. Let $I_+$ be the event that $\frac{\Pr_{\mu}(\bm{\lf}(X) | Y = 1)}{\Pr(\bm{\lf}(X))} \ge \frac{\Pr_{\hat{\mu}}(\bm{\lf}(X) | Y = 1)}{\hat{\Pr}(\bm{\lf}(X))}$ and similarly $I_-$. Then, the risk gap becomes
\begin{align}
\mathbb{E}_{X, Y \sim \mathcal{D}} \Bigg[&\E{I_+}{\frac{\Pr_{\mu}(\bm{\lf}(X) | Y = 1) p}{\Pr(\bm{\lf}(X))} - \frac{\Pr_{\hat{\mu}}(\bm{\lf}(X) | Y = 1)p}{\hat{\Pr}(\bm{\lf}(X))}}\Pr(I_+) + \nonumber \\
&\E{I_-}{ \frac{\Pr_{\hat{\mu}}(\bm{\lf}(X) | Y = 1)p}{\hat{\Pr}(\bm{\lf}(X))} - \frac{\Pr_{\mu}(\bm{\lf}(X) | Y = 1) p}{\Pr(\bm{\lf}(X))}}\Pr(I_-)\Bigg]. \label{eq:conditioned_gap}
\end{align}

We can then bound $\hat{\Pr}(\bm{\lf}(X))$ with $\Pr(\bm{\lf}(X)) + \varepsilon$ in the case of $I_+$ and with $\Pr(\bm{\lf}(X)) - \varepsilon$ in the case of $I_-$. We can write \eqref{eq:conditioned_gap} as being less than 
\begin{align*}
\mathbb{E}_{\mathcal{D}} \Bigg[&\E{I_+}{\frac{\Pr_{\mu}(\bm{\lf}(X) | Y = 1) p}{\Pr(\bm{\lf}(X))} - \frac{\Pr_{\hat{\mu}}(\bm{\lf}(X) | Y = 1)p}{\Pr(\bm{\lf}(X)) + \varepsilon}}\Pr(I_+) + \\
& \E{I_-}{ \frac{\Pr_{\hat{\mu}}(\bm{\lf}(X) | Y = 1)p}{\Pr(\bm{\lf}(X)) - \varepsilon} - \frac{\Pr_{\mu}(\bm{\lf}(X) | Y = 1) p}{\Pr(\bm{\lf}(X))}}\Pr(I_-)\Bigg] \\
= \mathbb{E}_{\mathcal{D}} \Bigg[ &\E{I_+}{\frac{\Pr(\bm{\lf}(X)) (\Pr_{\mu}(\bm{\lf}(X) | Y = 1) - \Pr_{\hat{\mu}}(\bm{\lf}(X) | Y = 1)) + \varepsilon \Pr_{\mu}(\bm{\lf}(X) | Y = 1)}{\Pr(\bm{\lf}(X)) (\Pr(\bm{\lf}(X)) + \varepsilon)}} p \Pr(I_+) + \\
&\E{I_-}{\frac{\Pr(\bm{\lf}(X)) (\Pr_{\hat{\mu}}(\bm{\lf}(X) | Y = 1) - \Pr_{\mu}(\bm{\lf}(X) | Y = 1)) + \varepsilon \Pr_{\mu}(\bm{\lf}(X) | Y = 1)}{\Pr(\bm{\lf}(X)) (\Pr(\bm{\lf}(X)) - \varepsilon)}} p \Pr(I_-) \Bigg] \\
\le \mathbb{E}_{\mathcal{D}} \Bigg[&\E{}{\frac{\Pr(\bm{\lf}(X)) \big| \Pr_{\mu}(\bm{\lf}(X) | Y = 1) - \Pr_{\hat{\mu}}(\bm{\lf}(X) | Y = 1)\big| + \varepsilon \Pr_{\mu}(\bm{\lf}(X) | Y = 1)}{\Pr(\bm{\lf}(X)) (\Pr(\bm{\lf}(X)) - \varepsilon)}}\Bigg] \cdot p \\
=  \mathbb{E}_{\mathcal{D}}\Bigg[ & \E{}{\frac{\big| \Pr_{\mu}(\bm{\lf}(X) | Y = 1) - \Pr_{\hat{\mu}}(\bm{\lf}(X) | Y = 1)\big|}{\Pr(\bm{\lf}(X)) - \varepsilon}} +  \frac{\Pr_{\mu}(\bm{\lf}(X) | Y = 1)}{\Pr(\bm{\lf}(X))}\E{}{\frac{ \varepsilon}{\Pr(\bm{\lf}(X)) - \varepsilon}}\Bigg] \cdot p.
\end{align*}

We can apply Hoeffding's inequality on $\hat{\Pr}(\bm{\lf}(X)) - \Pr(\bm{\lf}(X))$. For any $\bm{\lf}(X)$, we have that $\Pr(| \hat{\Pr}(\bm{\lf}(X)) - \Pr(\bm{\lf}(X)) | \ge \gamma) \le 2\exp \left(-2n \gamma^2 \right)$. Therefore, with probability at least $1 - \delta$, $| \hat{\Pr}(\bm{\lf}(X)) - \Pr(\bm{\lf}(X)) | \le \sqrt{\frac{\log(2/\delta)}{2n}}$. Denote $\varepsilon_n = \sqrt{\frac{\log(2/\delta)}{2n}}$. Then with high probability, 
\begin{align*}
&\E{}{\frac{| \Pr_{\mu}(\bm{\lf}(X) | Y = 1) - \Pr_{\hat{\mu}}(\bm{\lf}(X) | Y = 1) |}{\Pr(\bm{\lf}(X)) - \varepsilon}} 
\le \; & \frac{\E{}{| \Pr_{\mu}(\bm{\lf}(X) | Y = 1) - \Pr_{\hat{\mu}}(\bm{\lf}(X) | Y = 1) |}}{\Pr(\bm{\lf}(X)) - \varepsilon_n}.
\end{align*}

Similarly, $\E{}{\frac{ \varepsilon}{\Pr(\bm{\lf}(X)) - \varepsilon}}$ can be written as $\E{}{\frac{\Pr(\bm{\lf}(X))}{\Pr(\bm{\lf}(X)) - \varepsilon}} - 1$. Under the same event that $\varepsilon \le \varepsilon_n$, we have that $\E{}{\frac{ \varepsilon}{\Pr(\bm{\lf}(X)) - \varepsilon}}$ is less than $\frac{\Pr(\bm{\lf}(X))}{\Pr(\bm{\lf}(X)) - \varepsilon_n} - 1 = \frac{\varepsilon_n}{\Pr(\bm{\lf}(X)) - \varepsilon_n}$. Therefore, with high probability our expression is bounded by
\begin{align}
 \mathbb{E}_{\mathcal{D}}\Bigg[ & \frac{\E{}{\big| \Pr_{\mu}(\bm{\lf}(X) | Y = 1) - \Pr_{\hat{\mu}}(\bm{\lf}(X) | Y = 1)\big|}}{\Pr(\bm{\lf}(X)) - \varepsilon_n)} +  \frac{\Pr_{\mu}(\bm{\lf}(X) | Y = 1)}{\Pr(\bm{\lf}(X))} \frac{\varepsilon_n}{\Pr(\bm{\lf}(X)) - \varepsilon_n}\Bigg] \cdot p
 \label{eq:no_delta}
\end{align}

We now focus on the value of $\big| \Pr_{\mu}(\bm{\lf}(X) | Y = 1) - \Pr_{\hat{\mu}}(\bm{\lf}(X) | Y = 1)\big|$. Regardless of what $X$ is, $\Pr_{\mu}(\bm{\lf}(X) | Y = 1)$ can be written as a product $\prod_{i = 1}^m b_i$, where $b_i \in \{a_i, 1 - a_i\}$, and similarly, $\Pr_{\hat{\mu}}(\bm{\lf}(X) | Y = 1) = \prod_{i = 1}^m \hat{b}_i$, where $\hat{b}_i \in \{\hat{a}_i, 1 - \hat{a}_i \}$. Then this difference of probabilities can be written as
\begin{align*}
&\big| \Pr_{\mu}(\bm{\lf}(X) | Y = 1) - \Pr_{\hat{\mu}}(\bm{\lf}(X) | Y = 1)\big| = \Big|\prod_{i = 1}^m b_i - \prod_{i = 1}^m \hat{b}_i \Big| \\
= \; &\Big| b_1 \prod_{i = 2}^m b_i - \hat{b}_1 \prod_{i = 2}^m b_i + \hat{b}_1 \prod_{i = 2}^m b_i -  \hat{b}_i \prod_{i = 2}^m \hat{b}_i \Big| \\
= \; & \Big|(b_1 - \hat{b}_1) \prod_{i = 2}^m b_i + \hat{b}_i \Big(\prod_{i = 2}^m b_i - \prod_{i = 2}^m \hat{b_i} \Big)  \Big| \\
\le \; & |b_1 - \hat{b}_1| \prod_{i = 2}^m b_i  +  \hat{b}_i \Big|\prod_{i = 2}^m b_i - \prod_{i = 2}^m \hat{b_i}  \Big| \\
\le \; & |b_1 - \hat{b}_1| + \Big|\prod_{i = 2}^m b_i - \prod_{i = 2}^m \hat{b_i}  \Big|.
\end{align*}

Therefore, $\big| \Pr_{\mu}(\bm{\lf}(X) | Y = 1) - \Pr_{\hat{\mu}}(\bm{\lf}(X) | Y = 1)\big|  \le \sum_{i = 1}^m |b_i - \hat{b}_i| = \sum_{i = 1}^m |a_i - \hat{a}_i|$. \eqref{eq:no_delta} now becomes
\begin{align*}
 \mathbb{E}_{X, Y \sim \mathcal{D}}\Bigg[ & \frac{\E{}{ \|a - \hat{a} \|_1} p}{\Pr(\bm{\lf}(X)) - \varepsilon_n)} \Bigg]+  \E{X, Y \sim \mathcal{D}}{\Pr(Y = 1 | \bm{\lf}(X)) \cdot \frac{\varepsilon_n}{\Pr(\bm{\lf}(X)) - \varepsilon_n}}.
\end{align*}

Define $c_p$ as the minimum value of $\Pr(\bm{\lf}(X))$, e.g. the smallest ``region'' and define $c_2 = \E{X, Y \sim D}{\Pr(Y = 1 | \bm{\lf}(X))}$. Then our risk gap is less than 
\begin{align*}
\frac{\E{}{ \|a - \hat{a} \|_1} p}{c_p - \varepsilon_n} +   \frac{\varepsilon_n \cdot c_2}{c_p - \varepsilon_n}.
\end{align*}

Using the accuracy estimation error result from Lemma \ref{lemma:a}, we have that with high probability,
\begin{align*}
\E{}{|\R(f_{WS}) - \R(\hat{f}_{WS})|} \le \frac{81 \sqrt{\pi} p}{ e_{\min} c_1^2 (c_p - \varepsilon_n)} \cdot \frac{m}{\sqrt{n \cdot o_{\min}}} + \frac{\varepsilon_n \cdot c_2}{c_p - \epsilon_n}.
\end{align*}

Since the parameter recovery method is the same for \sysx, we replace $n \cdot o_{min}$ with $n \cdot o_{min}(1 + L_{\min}(r_{\min}) \cdot p_{d(r_{\min})})$ using Lemma \ref{lemma:overlap}. Note that some of the constants will also be renamed. Our estimation error for \sysx is then
\begin{align*}
\E{}{|\R(f_{E}) - \R(\hat{f}_{E})|} \le \frac{81 \sqrt{\pi} p}{ \bar{e}_{\min} \bar{c}_1^2 (\bar{c}_p - \varepsilon_n)} \cdot \frac{m}{\sqrt{n \cdot o_{\min} (1 + L_{\min}(r_{\min}) \cdot p_{d(r_{\min})})}} + \frac{\varepsilon_n \cdot \bar{c}_2}{\bar{c}_p - \varepsilon_n}.
\end{align*}


\subsection{Theorem~\ref{thm:risk_gap} Generalization Lift}
\label{subsec:thm1lift}

We now present proof of the asymptotic generalization lift result in Theorem \ref{thm:risk_gap}, for which we compare label models parametrized by $\mu$ and $\bar{\mu}$.

\begin{align}
&\mathcal{R}(f_{WS}) - \mathcal{R}(f_E) = \E{(X, Y) \sim \mathcal{D}}{\ell(f_{WS}(X), Y) - \ell(f_{E}(X), Y)} \nonumber \\
&= \frac{1}{2} \mathbb{E}_{(X, Y) \sim \mathcal{D}}\big[|Y - 2\Pr_{\mu}(\widetilde{Y} = 1 | \bm{\lf} = \bm{\lf}(X)) + 1 |  - |Y - 2\Pr_{\bar{\mu}}(\widetilde{Y} = 1 | \bar{\lf}_i = \bm{\bar{\lf}}(X)) + 1 |\big] \nonumber \\
&= \E{X, Y \sim \mathcal{D}}{ \sign{Y} \big(\Pr_{\bar{\mu}}(\widetilde{Y} = 1 | \bm{\bar{\lf}} = \bar{\lf}_i(X), \lf_{-i}(X)) - \Pr_{\mu}(\widetilde{Y} = 1 | \bm{\lf} = \lf_i(X), \lf_{-i}(X))\big)} \nonumber \\
&= \E{X, Y \sim \mathcal{D}}{ \sign{Y} \Big(\frac{\Pr_{\bar{\mu}}(\widetilde{Y} = 1, \bm{\bar{\lf}} = \bar{\lf}_i(X), \lf_{-i}(X))}{\Pr_{\bar{\mu}}(\bm{\bar{\lf}} = \bar{\lf}_i(X), \lf_{-i}(X))} 
- \frac{\Pr_{\mu}(\widetilde{Y} = 1, \bm{\lf} = \lf_i(X), \lf_{-i}(X))}{\hat{\Pr}(\bm{\lf} = \lf_i(X), \lf_{-i}(X))}\Big)}. 
\label{eq:risk_abs}
\end{align}


We rewrite the probabilities:
\begin{align}
&\mathbb{E}_{X, Y \sim \mathcal{D}} \bigg[ \sign{Y} \Big(\frac{\Pr_{\bar{\mu}}(\bm{\bar{\lf}} = \bar{\lf}_i(X), \lf_{-i}(X) | Y = 1) p}{\Pr_{\bar{\mu}}(\bm{\bar{\lf}} = \bar{\lf}_i(X), \lf_{-i}(X) | Y= 1) p + \Pr_{\bar{\mu}}(\bm{\bar{\lf}} = \bar{\lf}_i(X), \lf_{-i}(X) |  Y= -1) (1-p)} \nonumber \\
&- \frac{\Pr_{\mu}(\bm{\lf} = \lf_i(X), \lf_{-i}(X) | Y = 1) p}{\Pr_{\mu}(\bm{\lf} = \lf_i(X), \lf_{-i}(X) | Y = 1) p + \Pr_{\mu}(\bm{\lf} = \lf_i(X), \lf_{-i}(X) | Y = -1) (1-p)}\Big)\bigg].
\label{eq:risk_joints}
\end{align}

Since all labeling functions are conditionally independent, we have that $\Pr_{\bar{\mu}}(\bm{\bar{\lf}} = \bar{\lf}_i(X), \lf_{-i}(X) | Y = 1)  =  \Pr_{\bar{\mu}}(\bar{\lf}_i = \bar{\lf}_i(X) | Y = 1) \times \prod_{j \neq i} \Pr(\bar{\lf}_{j} =  \lf_{j}(X) |Y = 1)$. 
Define
\begin{align*}
p(X) &= \Pr(Y = 1 | \lf_{-i} = \lf_{-i}(X)) \\
&= \frac{\prod_{j \neq i} \Pr(\bar{\lf}_{j} =  \lf_{j}(X) |Y = 1)p}{\prod_{j \neq i} \Pr(\bar{\lf}_{j} =  \lf_{j}(X) |Y = 1)p + \prod_{j \neq i} \Pr(\bar{\lf}_{j} =  \lf_{j}(X) |Y = -1)(1-p)}.
\end{align*}

Then \eqref{eq:risk_joints} can be written as
\begin{align*}
&\mathbb{E}_{} \bigg[ \sign{Y} \Big(\frac{\Pr_{\bar{\mu}}(\bar{\lf}_i = \bar{\lf}_i(X) | Y = 1) p(X)}{\Pr_{\bar{\mu}}(\bar{\lf}_i = \bar{\lf}_i(X) | Y = 1)
p(X) + \Pr_{\bar{\mu}}(\bar{\lf}_i = \bar{\lf}_i(X) | Y = -1)(1 - p(X))} \nonumber \\
&- \frac{\Pr_{\mu}(\lf_i = \lf_i(X) | Y = 1) p(X)}{\Pr_{\mu}(\lf_i = \lf_i(X) | Y = 1) p(X) + \Pr_{\mu}(\lf_i = \lf_i(X) | Y = -1) (1 - p(X))}\Big)\bigg] .
\end{align*}

Now we look at the lift over three regions: $\supp(\lf_i)$, $B_r(\lf_i)$, and $(\supp(\lf_i)\cup B_r(\lf_i))^C$. When $X \in \supp(\lf_i)$, we choose to use the same label model parameters and votes as before, and therefore there is no improvement in the generalization error over this region. Similarly, when both $\lf_i$ and $\bar{\lf}_i$ abstain, there is no improvement. Lastly, when $X \in B_r(\lf_i)$, the original labeling function would have abstained, in which case $\Pr_{\mu}(\lf_i = \lf_i(X) | Y = 1) = \Pr_{\mu}(\lf_i = 0 | Y = 1) =  \Pr(\lf_i = 0)$, but on the other hand the extended labeling function no longer abstains. Therefore, the lift comes from increased prediction accuracy over $B_r(\lf_i)$, and we can write 
\begin{align}
&\mathcal{R}(f_{WS}) - \mathcal{R}(f_E) = \Pr(X \in B_r(\lf_i)) \times \nonumber \\
&\mathbb{E}_{\ X \in B_r(\lf_i)} \bigg[ \sign{Y} \bigg(\frac{\Pr_{\bar{\mu}}(\bar{\lf}_i(X) | Y = 1) p(X)}{\Pr_{\bar{\mu}}(\bar{\lf}_i(X) | Y = 1) p(X) + \Pr_{\bar{\mu}}(\bar{\lf}_i(X) | Y = -1) (1 - p(X))} - p(X) \bigg) \bigg].
\label{eq:delta_br}
\end{align}

Now, we look at when $Y = 1$ and $Y = -1$ separately and use Lemma \ref{lemma:fs_symmetry} to write the probabilities inside the expectation in terms of accuracies.
\begin{enumerate}
\item $X \in B_r(\lf_i), Y = 1$:

This region contributes 
\begin{align*}
&\mathbb{E}_{Y = 1} \bigg[ \frac{\bar{a}_i p(X)}{\bar{a}_i p(X) + (1 - \bar{a}_i) (1 - p(X))} - p(X) \bigg]\Pr(X \in B_r(\lf_i), Y = 1, \bar{\lf}_i(X) = 1) + \\
&\mathbb{E}_{Y = 1} \bigg[\frac{(1 - \bar{a}_i) p(X)}{(1 - \bar{a}_i) p(X) + \bar{a}_i (1 - p(X))} - p(X) \bigg] \Pr(X \in B_r(\lf_i), Y = 1, \bar{\lf}_i(X) =-1).
\end{align*}

We can write the probabilities in terms of accuracy:
\begin{align*}
\Pr(X \in &B_r(\lf_i), Y = 1, \bar{\lf}_i(X) = 1) \\
=\; &\Pr(\bar{\lf}_i(X) = 1 | Y = 1, X \in B_r(\lf_i)) \Pr(Y = 1, X \in B_r(\lf_i)) \\
=\; & \widetilde{a}_i \Pr(Y = 1, X \in B_r(\lf_i)) \\
\Pr(X \in &B_r(\lf_i), Y = 1, \bar{\lf}_i(X) = -1) \\
=\; & \Pr(\bar{\lf}_i(X) = -1 | Y = 1, X \in B_r(\lf_i)) \Pr(Y = 1, X \in B_r(\lf_i)) \\
= \; & (1 - \widetilde{a}_i) \Pr(Y = 1, X \in B_r(\lf_i)),
\end{align*}

where $\widetilde{a}_i = \Pr(\bar{\lf}_i(X) Y = 1 | X \in B_r(\lf_i))$. It might not be immediately obvious that $\Pr(\bar{\lf}_i(X) = 1 | Y = 1, X \in B_r(\lf_i)) = \widetilde{a}_i$, since we only assume that the pre-extension and post-extension labeling functions can be described using the binary Ising model. However, we can write $\Pr(\bar{\lf}_i = 1 | Y = 1, \bar{\lf}_i \neq 0)$ as $\Pr(\bar{\lf}_i(X) = 1 | Y = 1, X \in \supp(\lf_i)) \Pr(X \in \supp(\lf_i) | \bar{\lf}_i \neq 0) + \Pr(\bar{\lf}_i(X) = 1 | Y = 1, X \in B_r(\lf_i)) \Pr(X \in B_r(\lf_i) | \bar{\lf}_i \neq 0)$. Therefore, $\Pr(\bar{\lf}_i(X) = 1 | Y = 1, X \in B_r(\lf_i)) = \frac{\bar{a}_i \Pr(\bar{\lf}_i \neq 0)  - a_i \Pr(X \in \supp(\lf_i))}{\Pr(X \in B_r(\lf_i)}$, which is the definition of $\widetilde{a}_i$. 

Since conditioning on the value of $\bar{\lf}_i(X)$ can be ignored if the expectation is already conditioned on $Y$, we are able to combine the sum  into one expectation to get
\begin{align}
&\Pr(Y = 1, X \in B_r(\lf_i)) \cdot \mathbb{E}_{X \in B_r(\lf_i), Y = 1} \bigg[\frac{\tilde{a}_i \bar{a}_i p(X)}{\bar{a}_i p(X) + (1 - \bar{a}_i) (1 - p(X))} \nonumber \\
&+ \frac{(1 - \tilde{a}_i)(1 - \bar{a}_i) p(X)}{(1 - \bar{a}_i) p(X) + \bar{a}_i (1 - p(X))} - p(X) \bigg]. \label{eq:pos_part}
\end{align}

\item $X \in B_r(\lf_i), Y = -1$:

This region contributes
\begin{align*}
&\mathbb{E}_{Y = -1} \bigg[p(X) - \frac{\bar{a}_i p(X)}{\bar{a}_i p(X) + (1 - \bar{a}_i) (1 - p(X))} \bigg]  \Pr(X \in B_r(\lf_i), Y =-1, \bar{\lf}_i(X) = 1) + \\
& \mathbb{E}_{Y = -1} \bigg[ p(X) - \frac{(1 - \bar{a}_i) p(X)}{(1 - \bar{a}_i) p(X) + \bar{a}_i (1 - p(X))} \bigg]  \Pr(X \in B_r(\lf_i), Y =-1, \bar{\lf}_i(X) =-1).
\end{align*}

We can write the probabilities as $(1 - \widetilde{a}_i) \Pr(X \in B_r(\lf_i), Y = -1) $ and $\widetilde{a}_i \Pr(X \in B_r(\lf_i), Y = -1)$ respectively. Similarly, we note that the expectations can be combined since the behavior of $\lf_{-i}$ conditioned on $Y$ does not depend on $\bar{\lf}_i$. Therefore, the lift in this region is equal to 
\begin{align*}
& \Pr(Y = -1, X \in B_r(\lf_i)) \times \\
&\mathbb{E}_{Y = -1} \bigg[p(X) - \frac{(1 - \widetilde{a}_i) \bar{a}_i p(X)}{\bar{a}_i p(X) + (1 - \bar{a}_i) (1 - p(X))} - \frac{\widetilde{a}_i (1 - \bar{a}_i) p(X)}{(1 - \bar{a}_i) p(X) + \bar{a}_i (1 - p(X))} \bigg].
\end{align*}

We can convert this expectation to an expectation conditioned on $Y = 1$. Define $S = \{-1, 1\}^{m - 1}$. Then,
\begin{align*}
&\mathbb{E}_{Y = -1} \bigg[p(X) - \frac{(1 - \widetilde{a}_i) \bar{a}_i p(X)}{\bar{a}_i p(X) + (1 - \bar{a}_i) (1 - p(X))} - \frac{\widetilde{a}_i (1 - \bar{a}_i) p(X)}{(1 - \bar{a}_i) p(X) + \bar{a}_i (1 - p(X))} \bigg] =\\
& \sum_{s \in S} \Pr(\lf_{-i} = s | Y = -1, X \in B_r(\lf_i)) \bigg(\Pr(Y = 1 | \lf_{-i} = s) - \\
& \frac{(1 - \widetilde{a}_i) \bar{a}_i \Pr(Y = 1 | \lf_{-i} = s)}{\bar{a}_i \Pr(Y = 1 | \lf_{-i} = s) + (1 - \bar{a}_i) \Pr(Y =- 1 | \lf_{-i} = s)} - \\
&\frac{\widetilde{a}_i (1 - \bar{a}_i) \Pr(Y = 1 | \lf_{-i} = s)}{(1 - \bar{a}_i) \Pr(Y = 1 | \lf_{-i} = s) + \bar{a}_i \Pr(Y =- 1 | \lf_{-i} = s)} \bigg).
\end{align*}

Note that $\Pr(\lf_{-i} = s | Y = -1, X \in B_r(\lf_i)) = \prod_{j \neq i} \Pr(\lf_j Y = -s_j | X \in B_r(\lf_i)) = \Pr(\lf_{-i} = -s | Y = 1, X \in B_r(\lf_i))$ by Lemma \ref{lemma:fs_symmetry}. Then we flip the sign of $s$ in the above expression to get
\begin{align*}
& \sum_{s \in -S} \Pr(\lf_{-i} = s | Y = 1, X \in B_r(\lf_i))  \bigg(\Pr(Y = 1 | \lf_{-i} = -s) - \\
& \frac{(1 - \widetilde{a}_i) \bar{a}_i \Pr(Y = 1 | \lf_{-i} = -s)}{\bar{a}_i \Pr(Y = 1 | \lf_{-i} = -s) + (1 - \bar{a}_i) \Pr(Y =- 1 | \lf_{-i} = -s)} -\\
& \frac{\widetilde{a}_i (1 - \bar{a}_i) \Pr(Y = 1 | \lf_{-i} = -s)}{(1 - \bar{a}_i) \Pr(Y = 1 | \lf_{-i} = -s) + \bar{a}_i \Pr(Y =- 1 | \lf_{-i} = -s)} \bigg).
\end{align*}

Define $p(-X) = \Pr(Y = 1 | \lf_{-i} = -\lf_{-i}(X))$. Then the above becomes
\begin{align}
 \mathbb{E}_{X \in B_r(\lf_i), Y = 1} \bigg[&p(-X) - \frac{(1 - \widetilde{a}_i) \bar{a}_i p(-X)}{\bar{a}_i p(-X) + (1 - \bar{a}_i) (1 - p(-X))} - \nonumber \\
&\frac{\widetilde{a}_i (1 - \bar{a}_i) p(-X)}{(1 - \bar{a}_i) p(-X) + \bar{a}_i (1 - p(-X))} \bigg].
\label{eq:convert_to_pos_y}
\end{align}

Recall that $p = 0.5$. We claim that $p(-X) = 1 - p(X)$. To see this, we can write $p(-X)$ as
\begin{align*}
p(-X) &= \frac{\Pr(Y = 1, \lf_{-i} = -\lf_{-i}(X))}{\Pr(\lf_{-i} = - \lf_{-i}(X))} \\
&= \frac{\Pr(\lf_{-i} = -\lf_{-i}(X) | Y = 1)}{\Pr(\lf_{-i} = - \lf_{-i}(X) | Y = 1) + \Pr(\lf_{-i} = -\lf_{-i}(X) | Y = -1)}.
\end{align*}

From Lemma \ref{lemma:fs_symmetry}, this can be written as 
\begin{align*}
&\frac{\Pr(\lf_{-i} = \lf_{-i}(X) | Y = -1)}{\Pr(\lf_{-i} =  \lf_{-i}(X) | Y = -1) + \Pr(\lf_{-i} = \lf_{-i}(X) | Y = 1)} \\
=& 1 - \frac{\Pr(\lf_{-i} = \lf_{-i}(X) | Y = 1)}{\Pr(\lf_{-i} =  \lf_{-i}(X) | Y = -1) + \Pr(\lf_{-i} = \lf_{-i}(X) | Y = 1)} \\
=& 1 - \frac{\Pr(\lf_{-i} = \lf_{-i}(X) | Y = 1) p}{\Pr(\lf_{-i} =  \lf_{-i}(X))} = 1 - p(X).
\end{align*}

Plugging this back into \eqref{eq:convert_to_pos_y}, we have
\begin{align}
 \mathbb{E}_{X \in B_r(\lf_i), Y = 1} \bigg[&(1 - p(X)) - \frac{(1 - \widetilde{a}_i) \bar{a}_i (1 - p(X))}{\bar{a}_i (1 - p(X)) + (1 - \bar{a}_i) p(X)} - \nonumber \\
&\frac{\widetilde{a}_i (1 - \bar{a}_i) (1 - p(X))}{(1 - \bar{a}_i) (1 - p(X)) + \bar{a}_i p(X)} \bigg]. \label{eq:neg_part}
\end{align}

We now show that \eqref{eq:neg_part} and the expectation in \eqref{eq:pos_part} are equal: 
\begin{align*}
&(1 - p(X)) - \frac{(1 - \widetilde{a}_i) \bar{a}_i (1 - p(X))}{\bar{a}_i (1 - p(X)) + (1 - \bar{a}_i) p(X)} - \frac{\widetilde{a}_i (1 - \bar{a}_i) (1 - p(X))}{(1 - \bar{a}_i) (1 - p(X)) + \bar{a}_i p(X)} \\
=& (1 - \widetilde{a}_i) + \widetilde{a}_i - p(X) - \frac{(1 - \widetilde{a}_i) \bar{a}_i (1 - p(X))}{\bar{a}_i (1 - p(X)) + (1 - \bar{a}_i) p(X)} - \\
&\frac{\widetilde{a}_i (1 - \bar{a}_i) (1 - p(X))}{(1 - \bar{a}_i) (1 - p(X)) + \bar{a}_i p(X)} \\
=& (1 - \widetilde{a}_i) \left(1 - \frac{ \bar{a}_i (1 - p(X))}{\bar{a}_i (1 - p(X)) + (1 - \bar{a}_i) p(X)} \right) + \\
&\widetilde{a}_i \left(1 - \frac{(1 - \bar{a}_i) (1 - p(X))}{(1 - \bar{a}_i) (1 - p(X)) + \bar{a}_i p(X)} \right) - p(X) \\
=& (1 - \widetilde{a}_i) \left(\frac{ (1 - \bar{a}_i) p(X)}{\bar{a}_i (1 - p(X)) + (1 - \bar{a}_i) p(X)} \right) + \\
&\widetilde{a}_i \left(\frac{\bar{a}_i p(X)}{(1 - \bar{a}_i) (1 - p(X)) + \bar{a}_i p(X)} \right) - p(X)
\end{align*}

\end{enumerate}

Therefore, combining \eqref{eq:pos_part} and \eqref{eq:neg_part} gives us
\begin{align*}
&\mathcal{R}(f_{WS}) - \mathcal{R}(f_E) = \Pr(X \in B_r(\lf_i)) \mathbb{E}_{X \in B_r(\lf_i), Y = 1}\bigg[\frac{\tilde{a}_i \bar{a}_i p(X)}{\bar{a}_i p(X) + (1 - \bar{a}_i) (1 - p(X))} \nonumber \\
&+ \frac{(1 - \tilde{a}_i)(1 - \bar{a}_i) p(X)}{(1 - \bar{a}_i) p(X) + \bar{a}_i (1 - p(X))} - p(X) \bigg].
\end{align*}

The denominator of both fractions above is at most $\max \{p(X), 1 - p(X) \}$, so we can write
\begin{align*}
&\mathcal{R}(f_{WS}) - \mathcal{R}(f_E) \ge \Pr(X \in B_r(\lf_i)) \mathbb{E}_{X \in B_r(\lf_i), Y = 1}\bigg[\frac{\tilde{a}_i \bar{a}_i p(X)}{\max\{p(X), 1 - p(X) \}} \nonumber \\
&+ \frac{(1 - \tilde{a}_i)(1 - \bar{a}_i) p(X)}{\max\{p(X), 1 - p(X) \}} - p(X) \bigg].
\end{align*}

We can split this expectation based on if $p(X) \ge 0.5$. Furthermore, the condition that $X \in B_r(\lf_i)$ is no longer relevant when also conditioning on $Y = 1$, so the above expectation becomes
\begin{align}
 \mathbb{E}_{Y = 1}\bigg[&\frac{\tilde{a}_i \bar{a}_i p(X)}{\max\{p(X), 1 - p(X) \}} + \frac{(1 - \tilde{a}_i)(1 - \bar{a}_i) p(X)}{\max\{p(X), 1 - p(X) \}} - p(X) \bigg] \nonumber \\
 =& (\widetilde{a}_i \bar{a}_i + (1 - \widetilde{a}) (1 - \bar{a}_i)) \cdot \Pr(p(X) \ge 0.5 | Y = 1) - \E{Y = 1}{p(X)} + \nonumber \\
 &\E{Y = 1}{\frac{p(X)}{1 - p(X)} (\widetilde{a}_i \bar{a}_i + (1 - \widetilde{a}_i)(1 - \bar{a}_i)) \; \Big| \; p(X) < 0.5} \Pr(p(X) < 0.5 | Y = 1) \nonumber \\
 \ge & (\widetilde{a}_i \bar{a}_i + (1 - \widetilde{a}) (1 - \bar{a}_i)) \cdot \Pr(p(X) \ge 0.5 | Y = 1) - \E{Y = 1}{p(X)} + \nonumber \\
 &\E{Y = 1}{p(X) | p(X) < 0.5} (\widetilde{a}_i \bar{a}_i + (1 - \widetilde{a}_i)(1 - \bar{a}_i)) \cdot  \Pr(p(X) < 0.5 | Y = 1) \nonumber \\
 =&  \left( \Pr(p(X) \ge 0.5 | Y = 1)  + \E{Y = 1}{p(X) | p(X) < 0.5} \Pr(p(X) < 0.5 | Y = 1) \right) \times \nonumber  \\
 & (\widetilde{a}_i \bar{a}_i + (1 - \widetilde{a}) (1 - \bar{a}_i)) - \E{Y = 1}{p(X)}.
 \label{eq:condition_half}
\end{align}

For convenience, denote $p_{0.5} = \Pr(p(X) \ge 0.5 | Y = 1)$. We can write $\E{Y = 1}{p(X)}$ as
\begin{align*}
\E{Y = 1}{p(X)} &= \E{Y = 1}{p(X) | p(X) < 0.5} (1 - p_{0.5}) + \E{Y = 1}{p(X) | p(X) \ge 0.5} p_{0.5}.
\end{align*}

We can use this to substitute $\E{Y = 1}{p(X) | p(X) < 0.5} (1 - p_{0.5})$ into \eqref{eq:condition_half} to get
\begin{align*}
\left( p_{0.5}  + \E{Y = 1}{p(X)} - \E{Y = 1}{p(X) | p(X) \ge 0.5} p_{0.5} \right) \cdot  (\widetilde{a}_i \bar{a}_i + (1 - \widetilde{a}) (1 - \bar{a}_i)) - \E{Y = 1}{p(X)}.
\end{align*}

Because $\widetilde{a}_i \bar{a}_i + (1 - \widetilde{a}) (1 - \bar{a}_i)$ is at most $1$, we can lower bound the above expression by replacing $\E{Y = 1}{p(X)}$ with $C :=\E{Y = 1}{p(X) | p(X) \ge 0.5}$. The expression becomes
\begin{align}
\left( p_{0.5}  + C - C \cdot p_{0.5} \right) \cdot  (\widetilde{a}_i \bar{a}_i + (1 - \widetilde{a}) (1 - \bar{a}_i)) - C.
\label{eq:p_half_gap}
\end{align}

The last step is to show that $p_{0.5} \ge 0.5$. Define $\mathcal{S} = \{-1, 0, 1\}^{m - 1}$ to be all possible sets of votes for the remaining labeling functions. Then we can write
\begin{align}
p_{0.5} =& \Pr(\Pr(Y = 1 | \lf_{-i} = \lf_{-i}(X)) \ge 0.5 | Y = 1) \nonumber \\
=& \sum_{s \in \mathcal{S}} \Pr( \Pr(Y = 1 | \lf_{-i} = s) \ge 0.5 | Y = 1, \lf_{-i}(X) = s) \Pr(\lf_{-i}(X) = s | Y = 1).
\label{eq:p_half}
\end{align} 

Using the fact that $p = 0.5$, the event that $\Pr(Y = 1 | \lf_{-i} = s) \ge 0.5$ is equivalent to 
\begin{align*}
&\frac{\Pr(\lf_{-i} = s | Y = 1) p}{\Pr(\lf_{-i} = s)} \ge 0.5 \\
\Rightarrow \; &\Pr(\lf_{-i} = s | Y = 1) \ge 0.5 (\Pr(\lf_{-i} = s | Y = 1) + \Pr(\lf_{-i} = s | Y = -1)) \\
\Rightarrow \; &\Pr(\lf_{-i} = s | Y = 1 ) \ge \Pr(\lf_{-i} = s | Y = -1).
\end{align*}

For a given $s$, denote $s^+ = \{i: s_i = 1 \}$ and $s^- = \{i: s_i = -1 \}$. Then, since abstaining labeling functions cancel out, we can write the above condition as
\begin{align*}
\prod_{s^+} a_i \prod_{s^-} (1 - a_i) \ge \prod_{s^+}(1 - a_i) \prod_{s^-} a_i.
\end{align*}

This is a condition we can check for each $s$; let the set of $s$ that satisfy this be $\mathcal{S}^+ = \{s \in \mathcal{S}: \prod_{s^+} a_i \prod_{s^-} (1 - a_i) \ge \prod_{s^+}(1 - a_i) \prod_{s^-} a_i \}$. We can now write \eqref{eq:p_half} as
\begin{align}
p_{0.5} = \sum_{s \in \mathcal{S}^+} \Pr(\lf_{-i}(X) = s | Y = 1) = \sum_{s \in \mathcal{S}^+} \prod_{s^+} a_i \prod_{s^-} (1 - a_i).
\label{eq:p_half_final}
\end{align}

In order to show that this value is greater than $0.5$, we equivalently show that it is greater than $\Pr(p(X) < 0.5 | Y = 1)$. Using the same approach as before, this can be written as
\begin{align*}
& \sum_{s \in \mathcal{S}} \Pr(\Pr(Y = 1 | \lf_{-i} = -s) < 0.5 | Y = 1, \lf_{-i}(X) = -s) \Pr(\lf_{-i}(X) = -s | Y = 1) \\
= &\sum_{s \in \mathcal{S}} \Pr(\Pr(\lf_{-i} = -s | Y = 1) < \Pr(\lf_{-i} = -s | Y = -1) | Y = 1, \lf_{-i}(X) = -s) \times \\
&\Pr(\lf_{-i}(X) = -s | Y = 1).
\end{align*}

Using symmetry of our graphical model from Lemma \ref{lemma:fs_symmetry}, this becomes
\begin{align*}
&\sum_{s \in \mathcal{S}} \Pr(\Pr(\lf_{-i} = s | Y = -1) < \Pr(\lf_{-i} = s | Y = 1) | Y = 1, \lf_{-i}(X) = -s) \prod_{s^+}(1 - a_i) \prod_{s^-} a_i \\
=& \sum_{s \in \mathcal{S}^+}  \prod_{s^+}(1 - a_i) \prod_{s^-} a_i.
\end{align*}

We see this is clearly less than \eqref{eq:p_half_final} by definition of $\mathcal{S}^+$. Therefore $p_{0.5} \ge 0.5$. We plug this back into \eqref{eq:p_half_gap} and use Lemma \ref{lemma:coverage} in multiplying by $\Pr(X \in B_r(\lf_i))$ to get
\begin{align*}
\R(f_{WS}) - \R(f_E) \ge L_{\lf_i'}(r) p_{d(r)} p_i \left(\frac{1}{2}(C + 1) (\widetilde{a}_i \bar{a}_i + (1 - \widetilde{a}) (1 - \bar{a}_i)) - C\right).
\end{align*}

\paragraph{General cases} This concludes our proof of Theorem \ref{thm:risk_gap}. However, it is worth examining what the exact quantity is for the generalization lift when all labeling functions are extended. For each labeling function, we split up the embedding space into three regions by use of a \textit{participation function} $v: \mathcal{X} \rightarrow \{0, 1, 2\}^m$ such that
\begin{align*}
v_i(X) = \begin{cases}
0 & \lf_i(X) = \bar{\lf}_i(X) = 0 \\
1 & \lf_i(X) = \bar{\lf}_i(X) \neq 0 \\
2 & \lf_i(X) = 0, \bar{\lf}_i(X) \neq 0
\end{cases}.
\end{align*}

That is, the $0$ case indicates that $X$ will not be labeled by $\lf_i$ even after extension, the $1$ case indicates $X \in \supp(\lf_i)$, and the $2$ case indicates $X \in B_r(\lf_i)$. We can now write $\R(f_{WS}) - \R(f_E)$ using this idea of participation in various regions. Let $V = \{0, 1, 2\}^m$. 
\begin{align}
\R(f_{WS}) &- \R(f_E) = \E{}{\sign{Y} (\Pr_{\bar{\mu}}(Y = 1 | \bm{\bar{\lf}} = \bm{\bar{\lf}}(X)) - \Pr_{\mu}(Y = 1 | \bm{\lf} = \bm{\lf}(X)))} \nonumber \\
=& \sum_{v \in V} \E{v(X) = v}{\sign{Y} (\Pr_{\bar{\mu}}(Y = 1 | \bm{\bar{\lf}} = \bm{\bar{\lf}}(X)) - \Pr_{\mu}(Y = 1 | \bm{\lf} = \bm{\lf}(X)))} \times \nonumber \\
&\Pr(v(X) = v).
\label{eq:lift_all}
\end{align}

We can write $\Pr_{\bar{\mu}}(Y = 1 | \bm{\bar{\lf}} = \bm{\bar{\lf}}(X))$ as $\frac{\prod_{i = 1}^m \Pr(\bar{\lf}_i = \bar{\lf}_i(X) | Y = 1)p}{\prod_{i = 1}^m \Pr(\bar{\lf}_i = \bar{\lf}_i(X) | Y = 1)p + \prod_{i = 1}^m \Pr(\bar{\lf}_i = \bar{\lf}_i(X) | Y = -1)(1 - p)}$ and a similar expression for $\Pr_{\mu}(Y = 1 | \bm{\lf} = \bm{\lf}(X))$. Now for $X$ such that $v(X) = v$, we split the participation function's output vector into three disjoint sets of indices $\{V_0, V_1, V_2\}$ where $V_j = \{i: v_i(X) = j \}$ for $j = 0, 1, 2$. Then the product of conditional probabilities across the labeling functions can be grouped in this way. For $\Pr_{\mu}(Y = 1 | \bm{\lf} = \bm{\lf}(X))$, $\Pr(\lf_i = \lf_i(X) | Y = 1)$ is equal to $\Pr(\lf_i = 0)$ when $i \in V_0, V_1$. Therefore, we can denote $p_1(X)$ as
\begin{align*}
p_1(X) =& \Pr_{\mu}(Y = 1 | \lf_{V_1} = \lf_{V_1}(X)) \\
=& \frac{\prod_{i \in V_1} \Pr(\lf_i = \lf_i(X) | Y = 1)p}{\prod_{i \in V_1} \Pr(\lf_i = \lf_i(X) | Y = 1)p + \prod_{i \in V_1}\Pr(\lf_i = \lf_i(X) | Y = -1)(1 - p)}.
\end{align*}

For $\Pr_{\bar{\mu}}(Y = 1 | \bm{\bar{\lf}} = \bm{\bar{\lf}}(X))$, $\Pr(\bar{\lf}_i = \bar{\lf}_i(X) | Y = 1)$ becomes $\Pr(\bar{\lf}_i = 0)$ for $i \in V_0$, becomes $\Pr(\lf_i = \lf_i(X) | Y = 1)$ for $i \in V_1$, and stays the same for $i \in V_2$. Then we can write $\Pr_{\bar{\mu}}(Y = 1 | \bm{\bar{\lf}} = \bm{\bar{\lf}}(X))$ in terms of $p_1(X)$, and \eqref{eq:lift_all} becomes
\begin{align*}
&\sum_{v \in V}\Pr(v(X) = v) \mathbb{E}_{v(X) = v}\bigg[\sign{Y} \times \\
&\bigg(\frac{\prod_{i \in V_2} \Pr(\bar{\lf}_i = \bar{\lf}_i(X) | Y = 1) p_1(X)}{\prod_{i \in V_2} \Pr(\bar{\lf}_i = \bar{\lf}_i(X) | Y = 1) p_1(X) + \prod_{i \in V_2} \Pr(\bar{\lf}_i = \bar{\lf}_i(X) | Y = -1) (1 - p_1(X))} - p_1(X) \bigg)\bigg].
\end{align*}

We can further write each $\Pr(\bar{\lf}_i = \bar{\lf}_i(X) | Y = 1)$ as $\bar{a}_i$ or $1 - \bar{a}_i$ according to Lemma \ref{lemma:fs_symmetry}. Let $V_2$ be partitioned into $V_2^+ = \{i \in V_2: \bar{\lf}_i(X) = 1 \}$ and $V_2^- = \{i \in V_2: \bar{\lf}_i(X) = -1 \}$. Then our expression for generalization lift is now
\begin{align*}
&\sum_{v \in V} \sum_{V_2^+, V_2^-} \Pr(v(X) = v, \bar{\lf}_{V_2^+}(X) = 1, \bar{\lf}_{V_2^-}(X) = -1) \mathbb{E}_{v(X) = v}\bigg[\sign{Y} \times \\
&\bigg(\frac{\prod_{i \in V_2^+} \bar{a}_i \prod_{j \in V_2^-} (1 - \bar{a}_j) p_1(X)}{\prod_{i \in V_2^+} \bar{a}_i \prod_{j \in V_2^-} (1 - \bar{a}_j) p_1(X) + \prod_{i \in V_2^+} (1 - \bar{a}_i) \prod_{j \in V_2^-} \bar{a}_j (1 - p_1(X))} - p_1(X) \bigg)\bigg].
\end{align*}

We now condition on $Y = 1$ and $Y = -1$ separately. Suppose $Y = 1$. Then the probability to evaluate is $\Pr(v(X) = v, \bar{\lf}_{V_2^+}(X) = 1, \bar{\lf}_{V_2^-}(X) = -1, Y = 1)$. This is equivalent to
\begin{align}
&\Pr(X \in \supp(\lf_{V_1}), X \in B_r(\lf_{V_2}), \bar{\lf}_{V_0}(X) = 0, \bar{\lf}_{V_2^+}(X) = 1, \bar{\lf}_{V_2^-}(X) = -1, Y = 1) \nonumber \\
=& \Pr(\bar{\lf}_{V_2+}(X) = 1, \bar{\lf}_{V_2^-}(X) = -1 | Y = 1, X \in \supp(\lf_{V_1}), X \in B_r(\lf_{V_2}), \bar{\lf}_{V_0}(X) = 0) \times \nonumber \\
& \Pr(Y = 1, X \in \supp(\lf_{V_1}), X \in B_r(\lf_{V_2}), \bar{\lf}_{V_0}(X) = 0).
\label{eq:prob_full}
\end{align}

In the first probability, the fact that $X \in \supp(\lf_{V_1}), \bar{\lf}_{V_0}(X) = 0$ does not matter since the probability is conditioned on $Y$. Then \eqref{eq:prob_full} becomes
\begin{align*}
& \Pr(\bar{\lf}_{V_2+}(X) = 1, \bar{\lf}_{V_2^-}(X) = -1 | Y = 1, X \in B_r(\lf_{V_2})) \times \nonumber \\
& \Pr(X \in \supp(\lf_{V_1}), X \in B_r(\lf_{V_2}), \bar{\lf}_{V_0}(X) = 0 | Y = 1) p \\
=& \prod_{i \in V_2^+} \tilde{a}_i \prod_{j \in V_2^-} (1 - \tilde{a}_j)  \prod_{i \in V_1} \Pr(X \in \supp(\lf_{i})) \prod_{i \in V_2} \Pr(X \in B_r(\lf_{i}) \prod_{i \in V_0} \Pr( \bar{\lf}_i(X) = 0) p.
\end{align*}

Define $\bar{A}(V_a, V_b) = \prod_{i \in V_a} \bar{a}_i \prod_{j \in V_b}(1 - \bar{a}_j)$, and similarly, $\widetilde{A}(V_a, V_b) =  \prod_{i \in V_a} \widetilde{a}_i \prod_{j \in V_b}(1 - \widetilde{a}_j)$. The amount of lift that the $Y = 1$ case contributes is then
\begin{align}
&\sum_{v \in V} \sum_{V_2^+, V_2^-} \bar{A}(V_2^+, V_2^-) \prod_{i \in V_1} \Pr(X \in \supp(\lf_{i})) \prod_{i \in V_2} \Pr(X \in B_r(\lf_{i}) \prod_{i \in V_0} \Pr( \bar{\lf}_i(X) = 0) p \times \nonumber \\
&\mathbb{E}_{v(X) = v, Y = 1} \bigg[\frac{\bar{A}(V_2^+, V_2^-) p_1(X)}{\bar{A}(V_2^+, V_2^-) p_1(X) + \bar{A}(V_2^-, V_2^+) (1 - p_1(X))} - p_1(X) \bigg].
\label{eq:lift_all_pos}
\end{align}

Next, suppose $Y = -1$. The probability to evaluate, using the same approach as before, is just
\begin{align*}
&\Pr(v(X) = v, \bar{\lf}_{V_2^+}(X) = 1, \bar{\lf}_{V_2^-}(X) = -1, Y = -1) \\
=& \prod_{i \in V_2^+} (1 - \widetilde{a}_i) \prod_{j \in V_2^-} \widetilde{a}_j  \prod_{i \in V_1} \Pr(X \in \supp(\lf_{i})) \prod_{i \in V_2} \Pr(X \in B_r(\lf_{i}) \prod_{i \in V_0} \Pr( \bar{\lf}_i(X) = 0) (1 - p),
\end{align*}

and the amount of lift that the $Y = -1$ case contributes is
\begin{align}
&\sum_{v \in V} \sum_{V_2^+, V_2^-} \widetilde{A}(V_2^-, V_2^+) \prod_{i \in V_1} \Pr(X \in \supp(\lf_{i})) \prod_{i \in V_2} \Pr(X \in B_r(\lf_{i}) \prod_{i \in V_0} \Pr( \bar{\lf}_i(X) = 0) (1 - p) \times \nonumber \\
& \mathbb{E}_{v(X) = v, Y = -1} \bigg[ p_1(X) - \frac{\bar{A}(V_2^+, V_2^-) p_1(X)}{\bar{A}(V_2^+, V_2^-) p_1(X) + \bar{A}(V_2^-, V_2^+) (1 - p_1(X))} \bigg].
\label{eq:lift_all_neg}
\end{align}

Combining \eqref{eq:lift_all_pos} and \eqref{eq:lift_all_neg} gives an exact value for $\R(f_{WS}) - \R(f_E)$. Intuitively, in both quantities the terms $\prod_{i \in V_1} \Pr(X \in \supp(\lf_{i})) \prod_{i \in V_2} \Pr(X \in B_r(\lf_{i}) \prod_{i \in V_0} \Pr( \bar{\lf}_i(X) = 0)$ represent the size of the region corresponding to $v$. Then, the expectation represents the particular lift over that region depending on if $Y = 1$ or $Y = -1$; we see that when $\bar{a}_i$ and $\tilde{a}_i$ are equal to $0.5$, the value inside the expectation becomes $0$, and there is no improvement in performance due to \sysx. Otherwise, the lift for $Y = 1$ is positive when $\bar{A}(V_2^+, V_2^-) \ge \bar{A}(V_2^-, V_2^+)$, e.g. $\prod_{i \in V_2^+} \frac{\bar{a}_i}{1 - \bar{a}_i} \ge \prod_{j \in V_2^-} \frac{\bar{a}_j}{1 - \bar{a}_j}$, and vice versa when $Y = -1$. These positive and negative expectations are then weighted by $\widetilde{A}(V_2^+, V_2^-)$ and $\widetilde{A}(V_2^-, V_2^+)$, which allows for an overall positive lift.

\subsection{Proof of Proposition~\ref{thm:prop2}} Next, we prove the result connecting risk to embedding quality.
\begin{proof}
First, consider
\[ \E{(X',Y')\sim \D}{\Pr(f(X) = Y | d(X,X') \leq r)} .\]
We can rewrite this as 
\[ \E{(X',Y')\sim \D}{\E{}{\ind{f(X) = Y} | d(X,X') \leq r}} .\]
From the tower law of expectation, we have that
\begin{align}
 &\E{(X',Y')\sim \D}{\Pr(f(X) = Y | d(X,X') \leq r)} \nonumber \\
 &\qquad=  \E{(X',Y')\sim \D}{\E{}{\ind{f(X) = Y} | d(X,X') \leq r}} \nonumber \\
&\qquad=\E{}{\ind{f(X) = Y}} \nonumber \\
&\qquad= \Pr(f(X) = Y) \nonumber \\
&\qquad= \R(f_z).
\label{eq:r1}
\end{align}

We similarly have that
\begin{align}
\E{(X',Y')\sim \D}{\Pr(f(X') = Y' | d(X,X') \leq r)} = \R(f_z) .
\label{eq:r2}
\end{align}

Recall that 
\begin{align} 
\E{(X',Y')\sim \D}{\Pr(f(X) = f(X') | d(X,X') \leq r)}  \leq M(r).
\label{eq:m1}
\end{align}

Now, consider 
\[ \E{(X',Y')\sim \D}{\Pr(f(X) = f(X') = Y = Y' | d(X,X') \leq r)} .\]

We have that
\begin{align}
& \E{(X',Y')\sim \D}{\Pr(f(X) = f(X') = Y = Y' | d(X,X') \leq r)} \nonumber \\
 &\qquad \geq  1 - \E{(X',Y')\sim \D}{\Pr(f(X) \neq f(X') | d(X,X') \leq r)} \nonumber \\
 &\qquad \qquad - \E{(X',Y')\sim \D}{\Pr(f(X) \neq Y | d(X,X') \leq r)} \nonumber\\
 &\qquad  \qquad - \E{(X',Y')\sim \D}{\Pr(f(X') \neq Y' | d(X,X') \leq r)} \nonumber  \\
& \qquad = 1 - (M(r) + 2\R(f)).  
 \end{align}
To see this, first, we used the fact that $\Pr(A=B=C=D) \geq 1-\Pr(A \neq B) - \Pr(A \neq C) - \Pr(B \neq D)$. Then, using this decomposition, we applied \eqref{eq:r1}, \eqref{eq:m1}, and \eqref{eq:r2}. 
 
Finally, note that $\Pr(f(X) = f(X') = Y = Y' | d(X,X') \leq r) \leq \Pr(Y = Y' | d(X,X') \leq r)$. Then, we have that
\begin{align*}
 \E{(X',Y')\sim \D}{\Pr(Y = Y' | d(X,X') \leq r)} \geq 1 - (M(r) + 2\R(f_z)),
\end{align*}
or,
\begin{align}
 \E{(X',Y')\sim \D}{\Pr(Y \neq Y' | d(X,X') \leq r)} \leq M(r) + 2\R(f_z),
 \label{eq:final}
 \end{align}

Next, we can write 
\begin{align*}
\E{(X',Y')\sim \D}{\Pr(Y \neq Y' | d(X,X') \leq r)} &= \int \Pr(X' = x) \Pr(X = x | d(x,x') \leq r) \ind{Y \neq Y')} \\
&= \Pr(Y \neq Y' | d(X,X') \leq r),
\end{align*}
as desired.

\end{proof}


\subsection{Theorem~\ref{thm:transfer_learning} Proof}
\label{subsec:thm2}

The proof proceeds in the following steps.
\begin{enumerate}
\item Show that risk is higher if nonoverlapping votes are used.
\item Bound the risk that results from nonoverlapping supports in terms of individual accuracies.
\item Use Proposition $2$ to bound the risk in terms of $R(f_i)$.
\end{enumerate}

We first show that given $\{\supp(\lf_1), \dots \supp(\lf_m) \}$, there is a way to create nonoverlapping $\{\supp'(\lf_1), \dots \supp'(\lf_m)\}$ such that the risk is always higher.

\begin{lemma}
Suppose there exists a region formed by the intersection of some $k$ labeling functions, $\lf_1, \dots, \lf_k$, such that $S_k = \bigcap_{i = 1}^k \supp(\lf_i)$. Suppose that $a_1 \ge a_2 \ge \dots \ge a_k$. Then the risk of the label model over $S_k$ increases if $\lf_1$ decides to abstain in $S_k$.
\end{lemma}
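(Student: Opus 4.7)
The plan is to express the conditional risk on $S_k$ as an expectation of a concave function of the label model's posterior, and then invoke Jensen's inequality together with the tower property to conclude that marginalizing out any single source can only increase this expectation. Write $P_k(X) := \Pr(Y = 1 \mid \lf_1(X), \dots, \lf_k(X))$ for the posterior under the full label model and $P_{k-1}(X) := \Pr(Y = 1 \mid \lf_2(X), \dots, \lf_k(X))$ for the posterior after $\lf_1$ is silenced; by Lemma~\ref{lemma:abstain}, $\lf_1 = 0$ carries no information about $Y$, so $P_{k-1}(X)$ is exactly what the label model outputs on $X \in S_k$ when $\lf_1$ is forced to abstain.

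The first step is a reduction of the risk to an expectation of $\phi(P) := P(1-P)$. Using $\ell(2P-1, Y) = (1-P)\ind{Y=1} + P\,\ind{Y=-1}$ and conditioning on $\bm{\lf}(X) = s$, one obtains $\E{Y \mid \bm{\lf}=s}{\ell(2P(s)-1, Y)} = 2\,P(s)(1-P(s))$, so
\begin{align*}
\R(S_k) \;=\; \E{(X,Y) \sim \D,\; X \in S_k}{\ell(f_E(X),Y)} \;=\; 2\,\E{X \in S_k}{\phi(P_k(X))}.
\end{align*}
An analogous identity holds for $0/1$ loss with the concave integrand $\min(P, 1-P)$, so the argument is loss-agnostic. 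This reduction depends on $\bm{\lf}$ being the sufficient statistic through which the label model acts, which is automatic from the junction-tree factorization \eqref{eq:junctiontree_single} and the accuracy symmetry in Lemma~\ref{lemma:fs_symmetry}.

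The main step is a short Jensen-plus-tower argument. Within $S_k$, the tower property gives
\begin{align*}
P_{k-1}(X) \;=\; \E{\lf_1(X) \,\mid\, \lf_2(X), \dots, \lf_k(X),\; X \in S_k}{P_k(X)},
\end{align*}
since conditioning on $\lf_2, \dots, \lf_k$ (and on $X \in S_k$, which by Lemma~\ref{lemma:abstain} supplies no extra information about $Y$ beyond what the non-abstaining votes themselves encode) is exactly integration of $\lf_1$ out of the full posterior. Concavity of $\phi$ and pointwise Jensen then yield $\phi(P_{k-1}(X)) \geq \E{\lf_1}{\phi(P_k(X))}$; taking the outer expectation over $X \in S_k$ produces $\E{X \in S_k}{\phi(P_{k-1})} \geq \E{X \in S_k}{\phi(P_k)}$, i.e.\ the risk weakly increases when $\lf_1$ abstains. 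The ordering $a_1 \geq \dots \geq a_k$ combined with the standing WS convention that sources vote better than random gives $a_1 \geq 1/2$, and the inequality is strict whenever $a_1 > 1/2$, i.e.\ whenever $\lf_1$ carries genuine signal.

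The step I expect to be the main obstacle is the first reduction: it requires that the label model's output really is the Bayes-optimal posterior given $\bm{\lf}$ (rather than some approximation arising from the method-of-moments parameter recovery) and that the abstention event $\lf_1 = 0$ is genuinely informationally inert. Once these identifications are made, the rest of the proof is a two-line application of concavity and iterated expectation, and no further appeal to the specific numerical values of the accuracies is needed beyond the ordering that selects $\lf_1$ as the most accurate source.
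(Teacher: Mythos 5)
Your proof is correct, and it takes a genuinely different route from the paper's. The paper proves the lemma by direct computation: it conditions on $\lf_1(X) \in \{\pm 1\}$ and $Y \in \{\pm 1\}$, uses the symmetry property (Lemma~\ref{lemma:fs_symmetry}) to write both risks in terms of $a_1$ and $p(X) = \Pr(Y = 1 \mid \lf_{2:k} = \lf_{2:k}(X))$, and reduces the claim to the pointwise algebraic inequality $\frac{x^2}{xp + (1-x)(1-p)} + \frac{(1-x)^2}{(1-x)p + x(1-p)} \ge 1$, verified via convexity and monotonicity of $f(x) = x/(xp + (1-x)(1-p))$. Your argument packages the same content conceptually: the conditional Bayes risk of the well-specified label model given votes $s$ is $2P(s)(1 - P(s))$, a concave function of the posterior, and silencing $\lf_1$ garbles the statistic, so tower-plus-Jensen forces the coarsened posterior's risk to dominate; indeed the paper's rational inequality is exactly your Jensen step specialized to $P_k$ written through $(a_1, p(X))$, so the two proofs coincide at their core. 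What your route buys: it is loss-agnostic (any loss whose conditional Bayes risk is concave in the posterior, e.g.\ $0/1$), it silences any subset of sources in one shot --- convenient since the ensuing corollary abstains $k-1$ sources per overlap region, which under the paper's lemma requires iterating --- and it makes transparent that the hypothesis $a_1 \ge \cdots \ge a_k$ is immaterial (the paper's proof in fact never uses it either, since its inequality holds for all $x \in [0,1]$). What the paper's computation buys: explicit posterior formulas in terms of accuracies and $p(X)$ that are reused verbatim in the rest of the Theorem~\ref{thm:transfer_learning} proof, with the bookkeeping (Lemmas~\ref{lemma:fs_symmetry} and~\ref{lemma:abstain}) verified in exactly the form needed downstream. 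The two identifications you flag as the potential obstacle are not gaps relative to the paper: the lemma lives in the population-risk, true-parameter regime where $\Pr_{\mu}(\widetilde{Y} = 1 \mid \bm{\lf})$ is the exact posterior of the generating Ising model (the paper's own proof writes $\Pr(\lf_1(X) = 1, Y = 1 \mid S_k) = a_1 p$, i.e.\ assumes the same well-specification), and abstention-inertness holds because $\lf_1 \neq 0$ is independent of $(Y, \lf_{2:k})$ in that model, giving $\Pr(Y \mid \lf_{2:k}, \lf_1 \neq 0) = \Pr(Y \mid \lf_{2:k})$, which also legitimizes your tower step on $S_k$. One nit: your strictness remark additionally needs non-degeneracy, i.e.\ $0 < p(X) < 1$ with positive probability on $S_k$; if $p(X) \in \{0,1\}$ almost surely, the inequality is an equality even when $a_1 > 1/2$.
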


\begin{proof}
The risk of the label model is $\R(f_{WS}) = \E{X, Y \sim \mathcal{D}}{\frac{1}{2} |Y - 2\Pr(\widetilde{Y} = 1 | \bm{\lf} = \bm{\lf}(X))+ 1 |}$ (this also holds for $R(f_E)$ with $\bm{\bar{\lf}}$, but for simplicity we work with $\bm{\lf}$). Then the risk over the region $S_k$ is $\mathcal{R}_{S_k} (f_{WS}) = \E{X, Y \in S_k}{\frac{1}{2} |Y - 2\Pr(\widetilde{Y} = 1 | \bm{\lf} = \bm{\lf}(X))+ 1 |}$. This can be written as
\begin{align*}
\mathcal{R}_{S_k}(f_{WS}) = \E{S_k, Y = 1}{1 - \Pr(\widetilde{Y} = 1 | \bm{\lf} = \bm{\lf}(X))} p + \E{S_k, Y = -1}{\Pr(\widetilde{Y} = 1 | \bm{\lf} = \bm{\lf}(X))} (1 - p),
\end{align*}

where we've used that $\Pr(Y = 1 | S_k) = \Pr(Y = 1)$ by Lemma \ref{lemma:abstain}. Recall that labeling functions are conditionally independent given $Y$, so the probabilities can be decomposed into the product of individual $\Pr(\lf_i = \lf_i(X) | Y = 1)$. Moreover, for $i > k$, this probability is equivalent to $\Pr(\lf_i = 0 | Y = 1)$, which is just $\Pr(\lf_i = 0)$. After canceling terms out, we have
\begin{align}
&\E{S_k, Y = 1}{1 - \frac{\prod_{i = 1}^k \Pr(\lf_i = \lf_i(X) | Y = 1)p}{\prod_{i = 1}^k \Pr(\lf_i = \lf_i(X) | Y = 1)p + \prod_{i = 1}^k \Pr(\lf_i = \lf_i(X) | Y = -1)(1- p)}} p \; + \nonumber \\
& \E{S_k, Y = -1}{ \frac{\prod_{i = 1}^k \Pr(\lf_i = \lf_i(X) | Y = 1)p}{\prod_{i = 1}^k \Pr(\lf_i = \lf_i(X) | Y = 1)p + \prod_{i = 1}^k \Pr(\lf_i = \lf_i(X) | Y = -1)(1- p)}} (1 - p). \label{eq:overall_risk}
\end{align}

Following Lemma \eqref{lemma:fs_symmetry}, we condition on the value of $\lf_1(X)$ to write the risk using the accuracy $a_1$. For ease of notation, define the following expression:
\begin{align*}
p(X) &= \frac{\prod_{i = 2}^k \Pr(\lf_i = \lf_i(X) | Y = 1)p}{\prod_{i = 2}^k \Pr(\lf_i = \lf_i(X) | Y = 1)p + \prod_{i = 2}^k \Pr(\lf_i = \lf_i(X) | Y = -1)(1 - p)} \\
&= \Pr(Y = 1 | \lf_{2:k} = \lf_2(X), \dots \lf_k(X)).
\end{align*}

\eqref{eq:overall_risk} can now be written as
\begin{align}
&\E{S_k, \lf_1(X) = 1, Y = 1}{1 - \frac{a_1 p(X)}{a_1 p(X) + (1 - a_1)(1 - p(X))}} \Pr(\lf_1(X) = 1, Y = 1 | S_k) \; + \nonumber \\
 &\E{S_k, \lf_1(X) = -1, Y = 1}{1 - \frac{(1- a_1) p(X)}{(1 - a_1) p(X) +  a_1 (1 - p(X))}} \Pr(\lf_1(X) = -1, Y = 1 | S_k) \; + \nonumber \\
 &\E{S_k, \lf_1(X) = 1, Y =-1}{\frac{a_1 p(X)}{a_1 p(X) +  (1 - a_1) (1 - p(X))}} \Pr(\lf_1(X) = 1, Y = -1 | S_k) \; + \nonumber \\
 &\E{S_k, \lf_1(X) = -1, Y =-1}{\frac{(1-a_1) p(X)}{(1-a_1) p(X) + a_1 (1 - p(X))}} \Pr(\lf_1(X) = -1, Y = -1 | S_k).
\label{eq:split_acc}
\end{align}

We now describe what we want to show. We want to upper bound this expression by $\mathcal{R}_{S_k'}(f_{WS})$, which is the risk over $S_k$ when $\lf_1$ abstains. For this risk, we can consider abstaining as removing the $a_1$ term from \eqref{eq:split_acc} (equivalently, considering $a_1 = 0.5$) to get
\begin{align}
\mathcal{R}_{S_k'}(f_{WS}) = &\E{S_k, Y = 1}{1 - p(X)} p  +  \E{S_k, Y = -1}{ p(X)} (1 - p). \label{eq:abstained_risk}
\end{align}

It is thus sufficient to compare across expectations over $Y = 1$, and apply a symmetry argument to $Y = -1$. Before we do that, note that $\Pr(\lf_1(X) = 1, Y = 1 | S_k) = \Pr(\lf_1(X) = 1 | Y = 1, S_k) \Pr(Y = 1 | S_k) = a_i \cdot p$, and we have similar expressions for the other probabilities in \eqref{eq:split_acc}. The inequality we want to show is thus
\begin{align}
&\E{S_k, Y = 1}{1 - \frac{a_1 p(X)}{a_1 p(X) + (1 - a_1)(1 - p(X))}} a_1 p \nonumber \\
&+ \E{S_k, Y = 1}{1 - \frac{(1 - a_1) p(X)}{(1 - a_1) p(X) + a_1(1 - p(X))}} (1 - a_1)  p \le \E{S_k, Y = 1}{1 - p(X)} p.
\label{eq:y_ineq}
\end{align}

Combining terms, this is equal to showing
\begin{align*}
&\E{S_k, Y = 1}{\frac{a_1^2 p(X)}{a_1 p(X) + (1 - a_1)(1 - p(X))} + \frac{(1 - a_1)^2 p(X)}{(1 - a_1) p(X) + a_1(1 - p(X))}}   \ge \E{S_k, Y = 1}{p(X)}.
\end{align*}

We now claim that $\frac{a_1^2 p(X)}{a_1 p(X) + (1 - a_1)(1 - p(X))} + \frac{(1 - a_1)^2 p(X)}{(1 - a_1) p(X) + a_1(1 - p(X))} \ge p(X)$. If we rename $x := a_1$ and $p = p(X)$, this becomes equivalent to showing that
\begin{align*}
\frac{x^2}{xp + (1 - x)(1 - p)} + \frac{(1 - x)^2}{(1-x)p + x(1- p)} \ge 1.
\end{align*}

Define a function $f(x) = \frac{x}{xp + (1 - x)(1 -p)}$. Then we want to show that
\begin{align}
x f(x) + (1 - x) f(1 - x) \ge 1.
\label{eq:cvx}
\end{align}

Note that $f(x)$ is convex and increasing on $x \in [0, 1]$. Then we know that $f(x^2 + (1 - x)^2) \le x f(x) + (1 - x)f(1 - x)$. The smallest value of $x^2 + (1 - x)^2$ is $\frac{1}{2}$ when $x = \frac{1}{2}$, so we know that any $x f(x) + (1 - x)f(1 - x) \ge \frac{1/2}{p/2 + (1 - p)/2} = 1$. Therefore, we have shown that \eqref{eq:cvx} is true, and hence \eqref{eq:y_ineq} is also true. Furthermore, \eqref{eq:cvx} is sufficient to show that the expectations over $Y = -1$ also satisfy the same condition. We conclude that $\mathcal{R}_{S_k}(f_{WS}) \le \mathcal{R}_{S_k'}(f_{WS})$.

\end{proof}

An immediate result follows:
\begin{corollary}
We set $\supp'(\lf_1), \dots, \supp'(\lf_m)$ such that for each $S_k$ overlapping region, only one labeling function votes on it, and denote the resulting risk from this transformation as $\mathcal{R}_{\text{disjoint}}(f_{WS})$. Then $\mathcal{R}(f_{WS}) \le \mathcal{R}_{\text{disjoint}}(f_{WS})$.
\end{corollary}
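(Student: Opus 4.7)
The plan is to obtain the corollary by iterating the preceding lemma. First, I would partition $\mathcal{X}$ into atomic regions $R_S$ indexed by subsets $S \subseteq \{1,\dots,m\}$, where $R_S = \bigcap_{i \in S}\supp(\lf_i) \cap \bigcap_{j \notin S} \supp(\lf_j)^c$ is the set of points on which exactly the labeling functions in $S$ vote. Because the labeling functions are conditionally independent given $Y$ and abstaining functions factor trivially (Lemma~\ref{lemma:abstain}), the label-model prediction at any $X \in R_S$ depends only on the values $\{\lf_i(X)\}_{i \in S}$. Consequently the global risk admits the additive decomposition $\mathcal{R}(f_{WS}) = \sum_S \Pr(X \in R_S) \cdot \mathcal{R}_{R_S}(f_{WS})$, and analogously for $\mathcal{R}_{\text{disjoint}}(f_{WS})$.

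Next, I would focus on the multi-vote regions, i.e. those with $|S| \ge 2$. On such an $R_S$, the transformation to disjoint supports replaces the joint vote of $\{\lf_i\}_{i \in S}$ by that of a single chosen function $\lf_{i^\star}$, which is equivalent to forcing every $\lf_j$ with $j \in S \setminus \{i^\star\}$ to abstain on $R_S$. I would realize this as a sequence of single-source silencings, one $\lf_j$ at a time. Each individual step is exactly an instance of the preceding lemma: the regional risk weakly increases when any one of the $k$ voting labeling functions abstains on the overlap. A quick inspection of the lemma's proof shows that the key convexity inequality $x f(x) + (1-x) f(1-x) \ge 1$ does not actually depend on $\lf_1$ being the most accurate source, so the lemma applies uniformly to silencing any single labeling function in the region.

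Finally, I would sum the inequalities across all overlapping regions. Single-vote regions and the fully-abstaining region are unchanged by the transformation, so they contribute identically to both risks; multi-vote regions contribute no more in the original configuration than in the disjoint one. Adding these up through the decomposition yields $\mathcal{R}(f_{WS}) \le \mathcal{R}_{\text{disjoint}}(f_{WS})$.

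The main obstacle I anticipate is purely one of bookkeeping: making explicit that altering the vote pattern on $R_S$ does not perturb the regional risks on $R_{S'}$ for $S' \ne S$. This locality is guaranteed by the conditional-independence structure of the PGM in Section~\ref{subsec:pgm} together with Lemma~\ref{lemma:abstain}, since the junction-tree factorization of $\Pr(Y \mid \bm{\lf}(X))$ depends only on which functions vote at $X$ and their values. Once locality is stated cleanly, the iteration across regions is immediate.
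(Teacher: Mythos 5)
Your proof is correct and takes essentially the same route as the paper: the paper presents this corollary as an immediate consequence of the preceding lemma, obtained by exactly the region-by-region iteration you spell out, with locality guaranteed because the accuracies are held fixed and the prediction at $X$ depends only on $\bm{\lf}(X)$. Your side observation is also right---the lemma's ordering hypothesis $a_1 \ge a_2 \ge \dots \ge a_k$ is never used in its proof (the key inequality $x f(x) + (1-x) f(1-x) \ge 1$ holds for any $x \in [0,1]$), which is what licenses silencing the overlapping sources one at a time in arbitrary order.
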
 

We stress that $\R(f_{WS})$ and $\R_{\text{disjoint}}(f_{WS})$ use the same set of accuracies - the only change is that at the inference step, $\R_{\text{disjoint}}(f_{WS})$ involves more abstains. For convenience, denote $\X_i = \supp'(\lf_i) \subseteq \supp(\lf_i)$ and $\X_0 = \{X: \lf_i(X) = 0 \; \forall \; i\}$, such that the distribution over $\X \backslash \X_0$ can be partitioned into $\X_1, \dots \X_m$. We can bound the risk of \sysx with $\R_{\text{disjoint}}(f_E)$, which simply replaces $\bm{\lf}$ with $\bm{\bar{\lf}}$, combined with the risk on $\X_0$. We have that
\begin{align}
\R(f_E) =& \E{X, Y \sim D}{\frac{1}{2}|Y - 2\Pr_{\bar{\mu}}(\widetilde{Y} = 1 | \bm{\bar{\lf}} = \bm{\bar{\lf}}(X)) + 1 |} \nonumber \\
\le &\sum_{i = 1}^m \E{\X_i}{\frac{1}{2}| Y - 2 \Pr_{\bar{\mu}}(\widetilde{Y} = 1 | \bm{\bar{\lf}} = \bm{\bar{\lf}}(X)) + 1 | } \Pr(X \in \X_i) + \R_0 \nonumber \\
= &\sum_{i = 1}^m \Big(\E{\X_i, Y = 1}{1 - \Pr_{\bar{\mu}}(\widetilde{Y} = 1 | \bm{\bar{\lf}} = \bm{\bar{\lf}}(X))} \Pr(Y = 1, X \in \X_i) + \nonumber  \\
& \E{\X_i, Y = -1}{\Pr_{\bar{\mu}}(\widetilde{Y} = 1 | \bm{\bar{\lf}} = \bm{\bar{\lf}}(X))} \Pr(Y = -1, X \in \X_i) \Big) + \R_0,
\label{eq:nonoverlapping}
\end{align}

where $\R_0$ is the risk over $\X_0$. We can use the fact that only $\lf_i$ votes in $\X_i$ to simplify $\E{\X_i, Y = \pm 1}{\Pr_{\bar{\mu}}(\widetilde{Y} = 1 | \bm{\bar{\lf}} = \bm{\bar{\lf}}(X))}$. 
\begin{align*}
&\E{\X_i, Y = 1}{1 - \Pr_{\bar{\mu}}(\widetilde{Y} = 1 | \bm{\bar{\lf}} = \bm{\bar{\lf}}(X))} \\
= \; &\E{\X_i, Y = 1}{1 - \frac{\Pr_{\bar{\mu}}(\bm{\bar{\lf}} = \bm{\bar{\lf}}(X) | Y = 1) p}{\Pr_{\bar{\mu}}(\bm{\bar{\lf}} = \bm{\bar{\lf}}(X) | Y = 1)p + \Pr_{\bar{\mu}}(\bm{\bar{\lf}} = \bm{\bar{\lf}}(X) | Y = -1) (1 - p)}}.
\end{align*}

Again, we write $\Pr_{\bar{\mu}}(\bm{\bar{\lf}} = \bm{\bar{\lf}}(X) | Y = 1)$ as $\prod_{i = 1}^m \Pr(\bar{\lf}_i = \bar{\lf}_i(X) | Y = 1)$, and since only $\lf_i$ votes on $\X_i$, all other terms are equal to $\Pr(\bar{\lf}_j = 0)$ for $j \neq i$. After canceling terms in the numerator and denominator, we simply have that
\begin{align}
&\E{\X_i, Y = 1}{1 - \Pr_{\bar{\mu}}(\widetilde{Y} = 1 | \bm{\bar{\lf}} = \bm{\bar{\lf}}(X))} \nonumber \\
= \;& \E{\X_i, Y = 1}{1 - \frac{\Pr(\bar{\lf}_i = \bar{\lf}_i(X) | Y = 1) p}{\Pr(\bar{\lf}_i = \bar{\lf}_i(X) | Y = 1) p + \Pr(\bar{\lf}_i = \bar{\lf}_i(X) | Y = -1) (1 - p)}} \nonumber \\
= \; & \left(1 - \frac{\bar{a}_i p}{\bar{a}_i p + (1 - \bar{a}_i)(1 - p)} \right) \Pr(\bar{\lf}_i(X) = 1 | Y = 1, X \in \X_i) + \nonumber \\
&\left(1 - \frac{(1 - \bar{a}_i) p}{(1 - \bar{a}_i) p + \bar{a}_i(1 - p)} \right) \Pr(\bar{\lf}_i(X) = -1 | Y = 1, X \in \X_i).
\label{eq:pos_decomp}
\end{align}

Similarly,
\begin{align}
\E{\X_i, Y = -1}{\Pr_{\bar{\mu}}(\widetilde{Y} = 1 | \bm{\bar{\lf}} = \bm{\bar{\lf}}(X))} =& \frac{\bar{a}_i p}{\bar{a}_i p + (1 - \bar{a}_i)(1 - p)}  \Pr(\bar{\lf}_i(X) = 1 | Y = -1, X \in \X_i) + \nonumber \\
& \frac{(1 - \bar{a}_i) p}{(1 - \bar{a}_i) p + \bar{a}_i(1 - p)}  \Pr(\bar{\lf}_i(X) = -1 | Y = -1, X \in \X_i).
\label{eq:neg_decomp}
\end{align}

We now show that when $p \ge \frac{1}{2}$, we have that $1 - \frac{\bar{a}_i p}{\bar{a}_i p + (1 - \bar{a}_i)(1 - p)} \le \frac{(1 - \bar{a}_i)p}{(1 - \bar{a}_i) p + \bar{a}_i (1 - p)}$, and vice versa for when $p \le \frac{1}{2}$. To see this, note that both $\frac{\bar{a}_i p}{\bar{a}_i p + (1 - \bar{a}_i)(1 - p)}$ and $\frac{(1 - \bar{a}_i)p}{(1 - \bar{a}_i) p + \bar{a}_i (1 - p)}$ are increasing in $p$. Then for $p \ge \frac{1}{2}$, we have that $1 - \frac{\bar{a}_i p}{\bar{a}_i p + (1 - \bar{a}_i)(1 - p)} \le 1 - \frac{\bar{a}_i/2}{\bar{a}_i /2 + (1 - \bar{a}_i)/2} = 1 - \bar{a}_i$, and $\frac{(1 - \bar{a}_i)p}{(1 - \bar{a}_i) p + \bar{a}_i (1 - p)} \ge \frac{(1 - \bar{a}_i)/2}{(1 - \bar{a}_i)/2 + \bar{a}_i/2} = 1 - \bar{a}_i$. Note that the opposite sequence of inequalities holds for $p \le \frac{1}{2}$. Combining this observation with \eqref{eq:pos_decomp} and \eqref{eq:neg_decomp}, we can write \eqref{eq:nonoverlapping} as:
\begin{align*}
\R(f_E) \le \sum_{i = 1}^m &\frac{(1 - \bar{a}_i)p}{(1 - \bar{a}_i) p + \bar{a}_i (1 - p)} \bar{a}_i \Pr(Y = 1, X \in \X_i) + \\
& \frac{\bar{a}_i p}{\bar{a}_i p + (1 - \bar{a}_i)(1 - p)} (1 - \bar{a}_i) \Pr(Y = 1, X \in \X_i) + \\
& \frac{\bar{a}_i p}{\bar{a}_i p + (1 - \bar{a}_i)(1 - p)} (1 - \bar{a}_i) \Pr(Y = -1, X \in \X_i) + \nonumber \\
& \frac{(1 - \bar{a}_i) p}{(1 - \bar{a}_i) p + \bar{a}_i(1 - p)} \bar{a}_i \Pr(Y = -1, X \in \X_i)  + \R_0,
\end{align*}

where we have used that $\Pr(\bar{\lf}_i(X) = 1 | Y = 1, X \in \X_i) = \Pr(\bar{\lf}_i(X) = 1 | Y = 1, \bar{\lf}(X_i) \neq 0) = \bar{a}_i$. This is equivalent to
\begin{align*}
\R(f_E) &\le \sum_{i = 1}^m \frac{(1 - \bar{a}_i) \bar{a}_i p}{(1 - \bar{a}_i) p + \bar{a}_i (1 - p)} \Pr(X \in \X_i) + \frac{\bar{a}_i (1 - \bar{a}_i)p}{\bar{a}_i p + (1 - \bar{a}_i)(1 - p)}  \Pr(X \in \X_i) + \R_0 \\
&= \sum_{i = 1}^m \bar{a}_i (1 - \bar{a}_i) p  \Pr(X \in \X_i) \left(\frac{1}{(1 - \bar{a}_i) p + \bar{a}_i (1 - p)} + \frac{1}{\bar{a}_i p + (1 - \bar{a}_i)(1 - p)} \right) + \R_0.
\end{align*}

The sum of fractions can be upper bounded by $\frac{2}{1-p}$, so our bound is now
\begin{align*}
\sum_{i = 1}^m \bar{a}_i (1 - \bar{a}_i)  \Pr(X \in \X_i) \frac{2p}{1 - p} + \R_0 \le \sum_{i = 1}^m (1 - \bar{a}_i)  \Pr(X \in \X_i) \frac{2p}{1 - p} + \R_0.
\end{align*}

We use Proposition \ref{thm:prop2} on the specialized models $f_i$ on $\X_i$ to get that 
\begin{align*}
\R(f_E) &\le \sum_{i = 1}^m \left( 1 - a_i + (2a_i - 1) \frac{M_{f_i}(r_i) + 2\R(f_i)}{ p_i^2 (1 + L_{\lf_i'}(r_i) p_{d(r_i)})} \right) \frac{2p}{1 - p} \Pr(X \in \X_i) + \R_0 \\
&= \frac{2p}{1-p} \E{\mathcal{D}_i}{1 - a_i + (2a_i - 1) \frac{M_{f_i}(r_i) + 2\R(f_i)}{p_i^2 (1 + L_{\lf_i'}(r_i) p_{d(r_i)})}}  + \R_0.
\end{align*}

Lastly, we compute $\R_0$. This is just $\Pr(X \in \X_0) \E{\X_0}{\frac{1}{2} |Y - 2 \Pr_{\bar{\mu}}(\widetilde{Y} = 1 | \bm{\bar{\lf}} = 0) + 1 |} = \Pr(X \in \X_0) \E{\X_0}{\frac{1}{2}|Y - 2p + 1|} = \Pr(X \in X_0) 2p(1 - p)$. Finally, we note that this entire derivation also holds for $p \le \frac{1}{2}$, so we define the max odds to be $b = \max \{\frac{p}{1 - p}, \frac{1- p}{p} \}$. Then our final bound becomes
\begin{align*}
\R(f_E) \le 2b \cdot \E{\mathcal{D}_i}{1 - a_i + (2a_i - 1)\frac{M_{f_i}(r_i) + 2\R(f_i)}{p_i^2 (1 + L_{\lf_i'}(r_i) p_{d(r_i)})} } + 2 \Pr(X \in \X_0) p(1 - p).
\end{align*}

\subsection{Controlling the minimum increases in support and overlaps from extensions}

We want to lower bound $\Pr(X \in B_r(\lf_i))$ and $\Pr(X \in \supp(\bar{\lf}_i \cap \bar{\lf}_j))$ using $L_{\lf_i'}(r)$ from our definition of probabilistic Lipschitzness.

\begin{lemma}
For a labeling function $\lf_i$ with threshold radius $r$,
\begin{align*}
\Pr(X \in B_r(\lf_i)) \ge L_{\lf_i'}(r) p_{d(r)} p_i,
\end{align*}
where $p_{d(r)} = \Pr(\dist(X, X') \le r)$ and $p_i = \Pr(X \in \supp(\lf_i))$.
\label{lemma:coverage}
\end{lemma}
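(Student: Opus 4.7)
The plan is to bound $\Pr(X \in B_r(\lf_i))$ from below by coupling two independent draws $X_1, X_2 \sim \mathcal{D}$ and invoking the $(L_{\lf_i'}, M_{\lf_i'})$-Lipschitzness of the abstention indicator $\lf_i'$. Using the glossary convention that $L_{\lf_i'}(r) \le \Pr(\lf_i(X_1) = 0,\, \lf_i(X_2) \neq 0 \mid \dist(X_1, X_2) \le r)$, I will multiply by $p_{d(r)} = \Pr(\dist(X_1, X_2) \le r)$ to obtain the unconditional lower bound
\begin{align*}
L_{\lf_i'}(r)\, p_{d(r)} \le \Pr\bigl(\lf_i(X_1) = 0,\; \lf_i(X_2) \neq 0,\; \dist(X_1, X_2) \le r\bigr).
\end{align*}

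Next I will identify the event on the right with a subset of a product event involving $B_r(\lf_i)$. Whenever it holds, $X_1 \notin \supp(\lf_i)$ is witnessed by the specific neighbor $X_2 \in \supp(\lf_i)$ within radius $r$, so by definition $X_1 \in B_r(\lf_i)$. Thus the event is contained in $\{X_1 \in B_r(\lf_i)\} \cap \{X_2 \in \supp(\lf_i)\}$, and because the two draws are independent with $\Pr(X_2 \in \supp(\lf_i)) = p_i$, the probability of this product event factors as $\Pr(X \in B_r(\lf_i)) \cdot p_i$.

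Chaining these gives $L_{\lf_i'}(r)\, p_{d(r)} \le \Pr(X \in B_r(\lf_i)) \cdot p_i$, i.e.\ $\Pr(X \in B_r(\lf_i)) \ge L_{\lf_i'}(r)\, p_{d(r)} / p_i$. Since $p_i \le 1$ implies $1/p_i \ge p_i$, the claimed bound $\Pr(X \in B_r(\lf_i)) \ge L_{\lf_i'}(r)\, p_{d(r)}\, p_i$ follows a fortiori (indeed this yields the slightly stronger bound $L_{\lf_i'}(r) p_{d(r)}/p_i$, but only the weaker form is needed downstream in the proof of Proposition~\ref{prop:acc}, which invokes $\Pr(X \in \supp(\bar{\lf}_i)) \ge (1 + L_{\lf_i'}(r) p_{d(r)}) p_i$). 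The only real subtlety is the set-containment in the second step: recognizing that a single voting neighbor within radius $r$ already puts $X_1$ into $B_r(\lf_i)$ is what converts a conditional Lipschitz statement about pairs into a marginal statement about the extension region. The remaining ingredients---independence of i.i.d.\ draws and a trivial monotone bound---are routine.
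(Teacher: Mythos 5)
Your proof is correct, and it takes a genuinely different route at the key factorization step. The paper conditions: it lower-bounds $\Pr(X \in B_r(\lf_i))$ by $\Pr(X \in B_r(\lf_i) \mid \dist(X,X') \le r,\, X' \in \supp(\lf_i)) \cdot \Pr(\dist(X,X') \le r,\, X' \in \supp(\lf_i))$, identifies the conditional probability with the one-sided abstention-boundary event (exactly your observation that a single voting neighbor within radius $r$ places $X$ in $B_r(\lf_i)$, which both proofs need), bounds it below by $L_{\lf_i'}(r)$ via the argument that adding the condition $X' \in \supp(\lf_i)$ only shrinks the denominator, and then---crucially---factors $\Pr(\dist(X,X') \le r,\, X' \in \supp(\lf_i)) = p_{d(r)}\, p_i$ by \emph{explicitly assuming} that the distribution of $X$ in $\mathcal{Z}$ is independent of the labeling function's support. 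You avoid that extra assumption entirely: instead of factoring the joint event $\{\dist(X_1,X_2) \le r\} \cap \{X_2 \in \supp(\lf_i)\}$ (which does not factor in general), you upper-bound the boundary event by the product event $\{X_1 \in B_r(\lf_i)\} \cap \{X_2 \in \supp(\lf_i)\}$, which factors exactly because the two events depend on independent draws and $B_r(\lf_i)$, $\supp(\lf_i)$ are fixed sets. This buys two things: your argument needs only the i.i.d.\ structure, and it yields the strictly stronger bound $L_{\lf_i'}(r)\, p_{d(r)} / p_i$, which you correctly weaken to the stated $L_{\lf_i'}(r)\, p_{d(r)}\, p_i$ via $1/p_i \ge p_i$ (with the degenerate case $p_i = 0$ trivial since the right-hand side vanishes). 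What the paper's route buys in exchange is that the same conditioning-plus-independence template is reused verbatim in the companion overlap bound (Lemma~\ref{lemma:overlap}), where the multiplicative form $p_{d(r_{\min})}\, o_{\min}$ arises the same way; your product-event trick would need to be adapted there since the witnessing structure is more involved. You also correctly flag and adopt the glossary's one-sided convention for $L_{\lf_i'}(r)$, which is the same convention the paper's proof relies on, so the two arguments rest on identical hypotheses.
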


\begin{proof}
By conditioning on the event that $X' \in \supp(\lf_i)$, we have that
\begin{align}
\Pr(X \in B_r(\lf_i)) \ge &\Pr(X \in B_r(\lf_i) | \dist(X, X') \le r, X' \in \supp(\lf_i))  \times \nonumber \\
&\Pr(\dist(X, X') \le r, X' \in \supp(\lf_i)).
\label{eq:B_r}
\end{align}

We focus on the first probability and aim to write it in terms of $L_{\lf_i'}(r)$. By definition of $B_r(\lf_i)$,
\begin{align*}
\Pr(X \in B_r(\lf_i) &| \dist(X, X') \le r, X' \in \supp(\lf_i)) \\
&=\Pr(\bar{\lf}_i(X) \neq 0, \lf_i(X) = 0 | \dist(X, X') \le r, X' \in \supp(\lf_i)) \\
&=\Pr(\lf_i(X) = 0 | \dist(X, X') \le r, X' \in \supp(\lf_i)) \\
&=\Pr(\lf_i(X) = 0, \lf_i(X') \neq 0 | \dist(X, X') \le r, X' \in \supp(\lf_i)).
\end{align*}

Notice that $\Pr(\lf_i(X) = 0, \lf_i(X') \neq 0 | \dist(X, X') \le r)  \le \Pr(\lf_i(X) = 0, \lf_i(X') \neq 0 | \dist(X, X') \le r, X \in \supp(\lf_i))$. This is because $\Pr(\lf_i(X) = 0, \lf_i(X') \neq 0, \dist(X, X') \le r)$ is equivalent to $\Pr(\lf_i(X) = 0, \lf_i(X') \neq 0, \dist(X, X') \le r, X \in \supp(\lf_i))$, and $\Pr(\dist(X, X') \le r) \ge \Pr(\dist(X, X') \le r, X \in \supp(\lf_i))$. By the definition of $L_{\lf_i'}(r)$, \eqref{eq:B_r} becomes
\begin{align*}
\Pr(X \in B_r(\lf_i)) \ge L_{\lf_i'}(r) \Pr(\dist(X, X') \le r, X' \in \supp(\lf_i)).
\end{align*}

If we suppose that the distribution of $X$ in $\mathcal{Z}$ is independent of an individual labeling function's support, this gives us
\begin{align*}
\Pr(X \in B_r(\lf_i)) \ge L_{\lf_i'}(r) p_{d(r)} p_i.
\end{align*}

\end{proof}

\begin{lemma}
Suppose that $\lf_i$ is extended $r_i$ and $\lf_j$ is extended $r_j$, and $\lf_i, \lf_j$ are used to estimate accuracy parameters. Let $r_{\min} = \min\{r_i, r_j \}$, and $L_{\min} = \min\{L_{\lf_i'}(r_{\min}), L_{\lf_j'}(r_{\min})\}$. Then,
\begin{align*}
\Pr(X \in \supp(\bar{\lf}_i) & \cap \supp(\bar{\lf}_j)) \ge \big(1 + ( 2L_{\min} - L_{\min}^2) \times p_{d(r_{\min})} \big) o_{\min}.
\end{align*}
\label{lemma:overlap}
\end{lemma}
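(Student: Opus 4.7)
The plan is to decompose the extended overlap $\supp(\bar{\lf}_i) \cap \supp(\bar{\lf}_j)$ into the original overlap plus the ``new'' points brought in by the two extensions, and then to lower bound the probability of the new points via an anchoring argument in the spirit of Lemma~\ref{lemma:coverage}. Writing $U = \supp(\bar{\lf}_i) \cap \supp(\bar{\lf}_j)$ and $U_0 = \supp(\lf_i) \cap \supp(\lf_j)$, since $U \supseteq U_0$ we have $\Pr(X \in U) = \Pr(X \in U_0) + \Pr(X \in U \setminus U_0) \ge o_{\min} + \Pr(X \in U \setminus U_0)$, so it suffices to prove $\Pr(X \in U \setminus U_0) \ge (2L_{\min} - L_{\min}^2)\, p_{d(r_{\min})}\, o_{\min}$.

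Next I would introduce an independent draw $X' \sim \mathcal{D}$ as an anchor. The key observation is that whenever $X' \in U_0$ and $\dist(X, X') \le r_{\min}$, the point $X$ automatically belongs to $U$: for each $k \in \{i, j\}$, either $\lf_k(X) \ne 0$ (so $X \in \supp(\lf_k) \subseteq \supp(\bar{\lf}_k)$), or $\lf_k(X) = 0$, in which case $X$ is within $r_{\min} \le r_k$ of $X' \in \supp(\lf_k)$, hence $X \in B_{r_k}(\lf_k) \subseteq \supp(\bar{\lf}_k)$. On this anchor event, membership in $U \setminus U_0$ is equivalent to ``$\lf_i(X) = 0$ or $\lf_j(X) = 0$''. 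Under the same support-independence assumption used at the end of the proof of Lemma~\ref{lemma:coverage}, $\Pr(\dist(X, X') \le r_{\min},\, X' \in U_0) = p_{d(r_{\min})}\, \Pr(X' \in U_0) \ge p_{d(r_{\min})}\, o_{\min}$, so the problem reduces to lower bounding the conditional probability of $\{\lf_i(X) = 0\} \cup \{\lf_j(X) = 0\}$ given the anchor event.

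Let $A_k = \{\lf_k(X) = 0\}$. Conditioning on $X' \in U_0$ forces $\lf_k(X') \ne 0$, so under this conditioning $A_k$ coincides with the abstention-mismatch event $\{\lf_k(X) = 0,\, \lf_k(X') \ne 0\}$, which is precisely the piece of $\{\lf_k'(X) \ne \lf_k'(X')\}$ whose probability is bounded below by $L_{\lf_k'}$ in the definition of $(L, M)$-Lipschitzness. Transplanting the inclusion-based conditioning step from Lemma~\ref{lemma:coverage}'s proof (using $A_k \Rightarrow \{X' \in \supp(\lf_k)\}$) yields $\Pr(A_k \mid \dist(X, X') \le r_{\min},\, X' \in U_0) \ge L_{\lf_k'}(r_{\min}) \ge L_{\min}$ for each $k \in \{i, j\}$. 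Invoking the conditional independence of distinct labeling functions from the label model of Section~\ref{subsec:pgm}---together with Lemma~\ref{lemma:abstain}, which makes abstention events independent of $Y$ and hence of each other---the events $A_i$ and $A_j$ decouple under the anchor conditioning, giving $\Pr(A_i^c \cap A_j^c \mid \cdot) \le (1 - L_{\min})^2$, so
\[
\Pr\bigl(A_i \cup A_j \;\bigm|\; \dist(X, X') \le r_{\min},\; X' \in U_0\bigr) \ge 1 - (1 - L_{\min})^2 = 2L_{\min} - L_{\min}^2,
\]
which closes the argument.

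The hard part will be the last step: the factor $2L_{\min} - L_{\min}^2$ (rather than the trivial $L_{\min}$ one gets from a single LF) demands that $A_i$ and $A_j$ are not too positively correlated under the anchor conditioning, so I will need to push the conditional independence of $\lf_i$ and $\lf_j$ from the graphical model through the additional conditioning on $\dist(X, X') \le r_{\min}$ and $X' \in U_0$. If this transfer of independence holds only approximately (for example, because proximity in $\mathcal{Z}$ couples the two abstention indicators in a way the PGM does not capture), the clean product $(1 - L_{\min})^2$ must be replaced by a looser bound, and recovering the stated constant would require an explicit independence assumption paralleling the one invoked at the end of Lemma~\ref{lemma:coverage}.
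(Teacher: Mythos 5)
Your proposal is correct and follows essentially the same route as the paper's proof: the paper likewise anchors an independent draw $X'$ in the original overlap $\supp(\lf_i)\cap\supp(\lf_j)$ within distance $r_{\min}$, observes that $X$ then automatically lies in both extended supports so the new overlap event reduces to ``at least one of $\lf_i(X),\lf_j(X)$ abstains,'' and reaches $2L_{\min}-L_{\min}^2$ via the same implicit independence between the two labeling functions' support/abstention patterns that you flag in your final paragraph. Your union--complement form $1-(1-L_{\min})^2$ is merely a cleaner algebraic repackaging of the paper's three-case disjoint decomposition ($L_{\min}^2 + 2L_{\min}(1-L_{\min})$), and if anything it is slightly more robust, since it makes immediate the monotonicity that the paper's term-by-term bounding tacitly requires.
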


\begin{proof}

For notation, we use $\supp(\lf_i \cap \lf_j)$ to refer to $\supp(\lf_i)\cap \supp(\lf_j)$. We want to look at how much the overlap between the labeling functions has increased after extending, e.g. $\supp(\bar{\lf}_i \cap \bar{\lf}_i) \backslash \supp(\lf_i \cap \lf_j)$.
\begin{align}
\Pr(X \in &\supp(\bar{\lf}_i \cap \bar{\lf}_j) \backslash \supp(\lf_i \cap \lf_j)) \nonumber \\
\ge &\Pr(X \in \supp(\bar{\lf}_i \cap \bar{\lf}_j) \backslash \supp(\lf_i \cap \lf_j) | \dist(X, X') \le r_{\min}, X' \in \supp(\lf_i \cap \lf_j)) \times \nonumber \\
&\Pr(\dist(X, X') \le r_{\min}, X' \in \supp(\lf_i \cap \lf_j)). \label{eq:overlap_increase}
\end{align}

We focus on the first term and split it into three cases:
\begin{align*}
& \Pr(X \in \supp(\bar{\lf}_i \cap \bar{\lf}_j) \backslash \supp(\lf_i \cap \lf_j) | \dist(X, X') \le r_{\min}, X' \in \supp(\lf_i \cap \lf_j)) \\
=& \Pr(\bar{\lf}_i(X) \neq 0, \lf_i(X) = 0, \bar{\lf}_j(X) \neq 0, \lf_j(X) = 0 | \dist(X, X') \le r_{\min}, X' \in \supp(\lf_i \cap \lf_j)) + \\
& \Pr(\bar{\lf}_i(X) \neq 0, \lf_i(X) \neq 0, \bar{\lf}_j(X) \neq 0, \lf_j(X) = 0 | \dist(X, X') \le r_{\min}, X' \in \supp(\lf_i \cap \lf_j)) + \\
& \Pr(\bar{\lf}_i(X) \neq 0, \lf_i(X) = 0, \bar{\lf}_j(X) \neq 0, \lf_j(X) \neq 0 | \dist(X, X') \le r_{\min}, X' \in \supp(\lf_i \cap \lf_j)) .
\end{align*}

The probability $\Pr(\bar{\lf}_i(X) \neq 0, \lf_i(X) = 0, \bar{\lf}_j(X) \neq 0, \lf_j(X) = 0 | \dist(X, X') \le r_{\min}, X' \in \supp(\lf_i \cap \lf_j))$ means that $X$ was previously in neither the support of $\lf_i$ or $\lf_j$ and now is in the support of both. This probability can be written as 
\begin{align*}
\Pr(\lf_i(X) = 0, \lf_j(X) = 0 | \dist(X, X') \le r_{\min}, X' \in \supp(\lf_i \cap \lf_j)),
\end{align*}

because $X$ being within $r_{\min}$ of the overlapping support automatically implies that $\bar{\lf}_i(X), \bar{\lf}_j(X) \neq 0$. Since $X' \in \supp(\lf_i \cap \lf_j)$ is equivalent to $\lf_i(X'), \lf_j(X') \neq 0$, we now have
\begin{align*}
\Pr(\lf_i(X) = 0, \lf_i(X') \neq 0, \lf_j(X) = 0, \lf_j(X') \neq 0 | \dist(X, X') \le r_{\min}, X' \in \supp(\lf_i \cap \lf_j)).
\end{align*}

By logic from Lemma \ref{lemma:coverage}, this is at least
\begin{align*}
&\Pr(\lf_i(X) = 0, \lf_i(X') \neq 0, \lf_j(X) = 0, \lf_j(X') \neq 0 | \dist(X, X') \le r_{\min}) \\
=&  \Pr(\lf_i(X) = 0, \lf_i(X') \neq 0 | \dist(X, X') \le r_{\min}) \Pr( \lf_j(X) = 0, \lf_j(X') \neq 0 | \dist(X, X') \le r_{\min}).
\end{align*}

Similarly, the other two probabilities can be written as
\begin{align*}
\Pr(\lf_i(X) \neq 0, \lf_i(X') \neq 0 | \dist(X, X') \le r_{\min}) \Pr( \lf_j(X) = 0, \lf_j(X') \neq 0 | \dist(X, X') \le r_{\min}), \\
\Pr(\lf_i(X) = 0, \lf_i(X') \neq 0 | \dist(X, X') \le r_{\min}) \Pr( \lf_j(X) \neq 0, \lf_j(X') \neq 0 | \dist(X, X') \le r_{\min}) .
\end{align*}

Adding these together and bounding by $L_{\min}$, \eqref{eq:overlap_increase} becomes
\begin{align*}
\Pr(X \in &\supp(\bar{\lf}_i \cap \bar{\lf}_j) \backslash \supp(\lf_i \cap \lf_j)) \\
\ge & \big( L_{\min}^2 + 2L_{\min}(1 - L_{\min}) \big) \times \Pr(\dist(X, X') \le r_{\min}, X' \in \supp(\lf_i \cap \lf_j)) \\
\ge & \big( 2L_{\min} - L_{\min}^2 \big) \times \Pr(\dist(X, X') \le r_{\min}, X' \in \supp(\lf_i \cap \lf_j)) \\
\ge &\big( 2L_{\min} - L_{\min}^2 \big) \times p_{d(r_{\min})} o_{\min}.
\end{align*}

Therefore, the new overlap is at least
\begin{align*}
\Pr(X \in \supp(\bar{\lf}_i) & \cap \supp(\bar{\lf}_j)) \ge \big(1 + ( 2L_{\min} - L_{\min}^2) \times p_{d(r_{\min})} \big) o_{\min}.
\end{align*}

\end{proof}

\section{Experimental Details}
\label{sec:supp_details}

We describe additional details about each task, 
including details about data sources (Section~\ref{sec:supp_details_dataset}),
end models (Section~\ref{sec:supp_details_end_model}),
supervision sources (Section~\ref{sec:supp_details_lfs}),
and setting extension thresholds (Section~\ref{sec:supp_details_thresholds}).

\subsection{Dataset Details}
\label{sec:supp_details_dataset}


\begin{table}[ht!]
    \centering
    \begin{tabular}{lccclccc}
        \toprule
        \textbf{Task} (Embedding) & $T$ & $m/T$ & \textbf{Prop} & \textbf{End Model} & $N_{train}$ & $N_{dev}$ & $N_{test}$   \\ \midrule
        \spam\ (BERT)             & 1   & 10    & 0.49          & BERT-Base          & 1,586       & 120       & 250          \\
        \weather\ (BERT)          & 1   & 103   & 0.53          & BERT-Base          & 187         & 50        & 50           \\
        \spouse\ (BERT)           & 1   & 9     & 0.07          & LSTM               & 22,254      & 2,811     & 2,701        \\
        \basketball\ (RN-101)     & 8   & 4     & 0.12          & ResNet-18          & 3,594       & 212       & 244          \\ 
        \commercial\ (RN-101)     & 6   & 4     & 0.32          & ResNet-50          & 64,130      & 9,479     & 7,496        \\
        \tennis\ (RN-101)         & 14  & 6     & 0.34          & ResNet-50          & 6,959       & 746       & 1,098        \\
        \basketball\ (BiT-M)      & 8   & 4     & 0.12          & BiT-M-R50x1        & 3,594       & 212       & 244          \\ 
        \commercial\ (BiT-M)      & 6   & 4     & 0.32          & BiT-M-R50x1        & 64,130      & 9,479     & 7,496        \\
        \tennis\ (BiT-M)          & 14  & 6     & 0.34          & BiT-M-R50x1        & 6,959       & 746       & 1,098        \\
        \bottomrule
    \end{tabular}
    \caption{
    Details for each dataset.
    $T$: the number of related elements modeled by the weak supervision label
    model.
    $m/T$: the number of supervision sources per element.
    \textbf{Prop}: The proportion of positive examples in each dataset.
    \textbf{End Model}: The end model used for fully-trained deep network
    baselines (TS-FT, WS-FT, and \sysx-FT).
    $N_{train}$: The size of the unlabeled training set.
    $N_{dev}$: The size of the labeled dev set.
    $N_{test}$: The size of the held-out test set.
    }
    \label{table:stats}
\end{table}

Table~\ref{table:stats} provides details on train/dev/test splits for each
dataset, as well as statistics about the positive class proportion and the
number of labeling functions.
Additional details about each dataset are provided below.

\paragraph{\spam}
We use the dataset as provided by
Snorkel\footnote{https://www.snorkel.org/use-cases/01-spam-tutorial} and those
train/dev/test splits.

\paragraph{\weather, \spouse}
These datasets are used in~\cite{Ratner18} and~\cite{fu2020fast} for
evaluation, and we use the train/dev/test splits from
those works (\weather\ is called \textbf{Crowd} in that work).

\paragraph{\basketball}
This dataset is a subset of ActivityNet and was used for evaluation
in~\cite{sala2019multiresws} and~\cite{fu2020fast}.
We use the train/dev/test splits from those works.

\paragraph{\commercial}
We use the dataset from~\cite{fu2019rekall} and~\cite{fu2020fast} and the
train/dev/test splits from those works.

\paragraph{\tennis}
We use the dataset from~\cite{fu2020fast} and the train/dev/test splits from
those works.

\subsection{Task-Specific End Models}
\label{sec:supp_details_end_model}

For \spam\ and \weather, we use BERT-Base for sequence classification as
implemented by HuggingFace~\cite{Wolf2019HuggingFacesTS}, and tune the learning
rate and number of epochs with grid search (ultimately using $0.00005, 4$ epochs
for \spam\ and $0.00004, 40$ epochs with early stopping for \weather).
For \spouse, we use an LSTM as used in~\cite{Ratner18} and~\cite{fu2020fast},
and use the hyperparameters from~\cite{fu2020fast}.
For \basketball, \commercial, and \tennis, we use different end models
depending on which embedding we are evaluating in order to provide a fair
comparison.
When using RN-101 pre-trained on ImageNet for embeddings, we use ResNet's
pre-trained on ImageNet, as in~\cite{fu2020fast} (RN-18 for \basketball,
RN-50 for \commercial\ and \tennis).
For these end models, we use hyperparameters from the previous
work~\cite{fu2020fast}.
When using BiT-M for the embeddings, we fine-tune BiT-M for the end model.
We use the default hyperparameters from BiT-M using the BiT ``hyperrule" as
detailed in~\cite{kolesnikov2019large}, but we resize frames to $256$ by $256$
and use batch size of $128$ for memory.
To adjust for the smaller batch size, we tune learning rate between $0.001$ and
$0.003$, as recommended by the official BiT implementation.
We end up using $0.003$ for \basketball, and $0.001$ for \commercial\ and
\tennis.

\subsection{Supervision Sources}
\label{sec:supp_details_lfs}

Supervision sources are expressed as short Python functions.
Each source relied on different information to assign noisy labels:

\paragraph{\spam, \weather, \spouse}
For these tasks, we used the same supervision sources as used in previous
work~\cite{Ratner18, fu2020fast}.
These are all text classification tasks, so they rely on text-based heuristics
such as the presence or absence of certain words, or particular regex patterns.

\paragraph{\basketball, \commercial, \tennis}
Again, we use sources from previous work~\cite{sala2019multiresws, fu2020fast}.
For \basketball, these sources rely on an off-the-shelf object detector to
detect balls or people, and use heuristics based on the average pixel of the
detected ball or distance between the ball and person to determine whether the
sport being played is basketball or not.
For \commercial, there is a strong signal for the presence or absence of
commercials in pixel histograms and the text; in particular, commercials are
book-ended on either side by sequences of black frames, and commercial segments
tend to have mixed-case or missing transcripts (whereas news segments are in
all caps).
For \tennis, we use an off-the-shelf pose detector to provide primitives for the
weak supervision sources.
The supervision sources are heuristics based on the number of people on court
and their positions.
Additional supervision sources use color histograms of the frames (i.e., 
how green the frame is, or whether there are enough white pixels for the court
markings to be shown).

\subsection{Setting $r_i$}
\label{sec:supp_details_thresholds}

We tune $r_i$ using the dev set in two steps.
First, we set all the $r_i$ to the same value $r$ and use grid search over $r$.
Then, we perform a series of small per-coordinate searches for a subset of the
labeling functions to optimize individual $r_i$ values.
For labeling functions with full coverage, we set the threshold to have no
extensions.

Now we report thresholds in terms of \textit{cosine similarities} (note that
this is a different presentation than in terms of distances).
For \spam, all thresholds are set to $0.85$, except for LF's $4$ and $7$, which
have thresholds $0.81$ and $0.88$.
For \weather, all thresholds are set to $0.87$, except for LF 0, which has
threshold $0.88$.
For \spouse, thresholds are set to $[0.955, 0.953, 0.95, 0.95, 0.951, 0.945, 0.95, 0.95, 0.95]$
for LF's $0-8$.
For \basketball, thresholds are set to $[1.0, 1.0, 0.1, 1.0]$ and
$[0.6, 1.0, 0.53, 1.0]$ for RN-101 and BiT-M embeddings, respectively.
For \commercial, thresholds are set to $[0.2, 1.0, 0.15, 0.15]$ and
$[0.63, 1.0, 0.4, 0.565]$ for RN-101 and BiT-M embeddings, respectively.
For \tennis, thresholds are set to $[1.0, 1.0, 1.0, 0.25, 1.0, 0.4]$ and
$[1.0, 1.0, 1.0, 0.8, 1.0, 0.9]$ for RN-101 and BiT-M embeddings, respectively.


\section{Additional Evaluation}
\label{sec:supp_eval}

We provide additional evaluation.
We present synthetic experiments validating our theoretical insights
(Section~\ref{sec:supp_eval_synthetics});
additional interactive-speed baseline
methods including other label models and other methods for transfer learning
without fine-tuning (Section~\ref{sec:supp_eval_baselines});
more detailed
metrics including standard deviations of training runs and precision/recall
breakdowns for label models (Section~\ref{sec:supp_eval_metrics});
measurements of how fine-tuning changes embeddings
(Section~\ref{sec:supp_eval_ft});
details and results for our ablation studies
(Section~\ref{sec:supp_eval_ablations});
and measurements of one-time preprocessing cost (Section~\ref{sec:supp_eval_preprocess}).

\subsection{Synthetics}
\label{sec:supp_eval_synthetics}

\begin{figure}[t]
  \centering
  \includegraphics[width=4in]{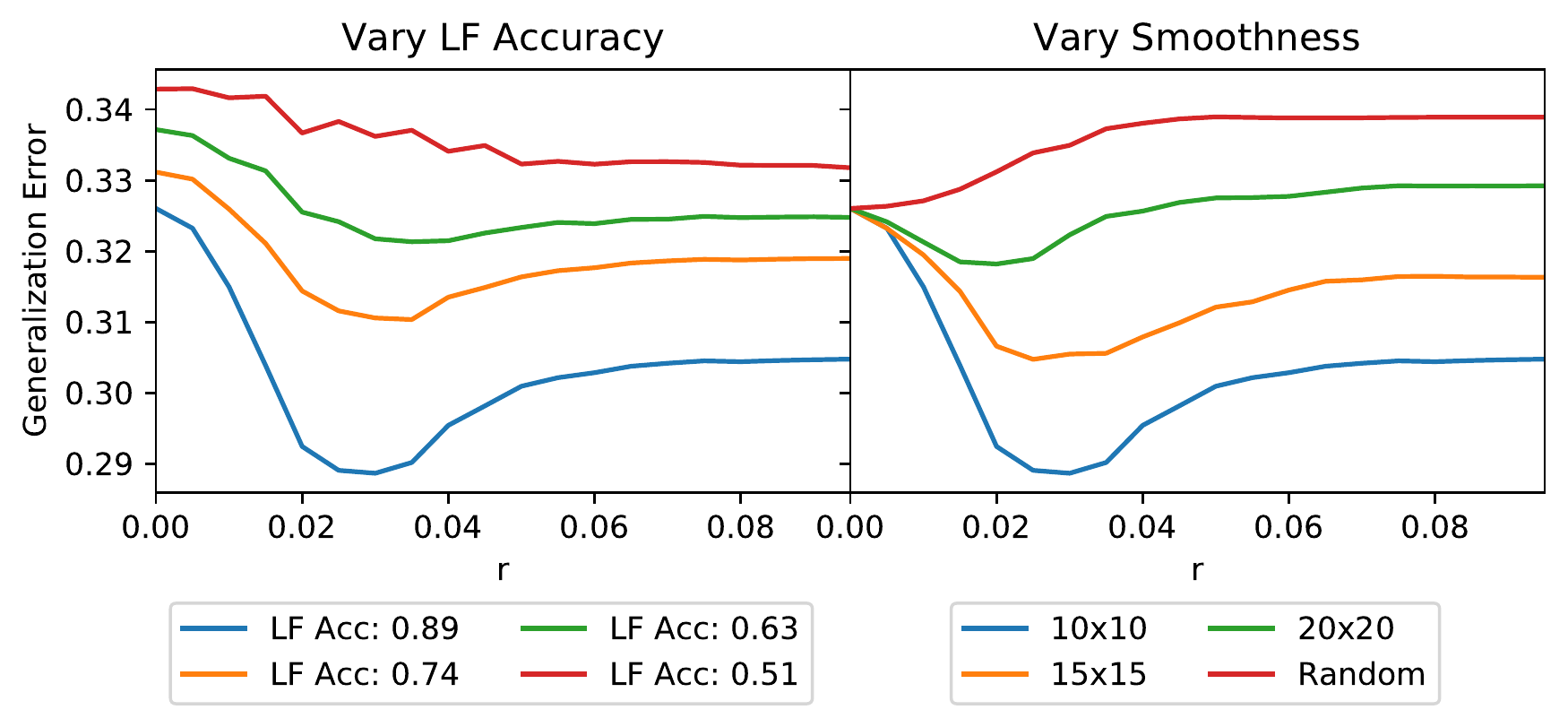}
  \caption{
    Reduction in generalization error from extending labeling functions of
    varying accuracies (left), and on
    embedding spaces of varying smoothness (right).
  }
  \label{fig:synthetics}
\end{figure}

We evaluate our method of extending one labeling function on synthetic data in two experiments to confirm the insights about accuracy, embedding smoothness, and choice of threshold radius from Theorem \ref{thm:risk_gap}.
We create an embedding space over $10000$ uniformly sampled points in $[0, 1]^2$ with a fixed class balance $\Pr(Y)$ and $m = 3$ labeling functions, where only $\lf_1$ is extended. To understand the impact of a labeling function's accuracy, we fix a task distribution by assigning $Y$ labels in a $10 \times 10$ ``checkerboard'' pattern and run our algorithm on four versions of $\lf_1$ with varying accuracy, keeping $\lf_1$'s support consistent. 
In Figure \ref{fig:synthetics} (left), we extend $\lf_1$ based on $r$ for each of the four versions of the labeling function.
This confirms that extending a highly accurate labeling function results in greater generalization lift. 
To understand the impact of Lipschitzness of the task distribution, we produce four distributions of $Y$ over the embedding space, three of which follow a checkerboard clustering pattern (such that more divisions mean less smoothness), and one that spatially distributes the values of $Y$ at random. In Figure \ref{fig:synthetics} (right), each curve represents performance of the same high-accuracy labeling function $(a_1 = 0.89$) over embeddings of varying Lipschitzness. This confirms that the greatest improvement due to an extension occurs for the smoothest embedding. Lastly, both of these graphs illustate the tradeoff in setting a threshold radius, confirming the theoretical insight that over-extending a labeling function can yield worse performance.

Note that our synthetic experiments reflect the observations from Theorem \ref{thm:risk_gap}. This suggests selecting a threshold radius by maximizing the performance of the lower bound $\R(f_{WS}) - \R(f_E)$ from the theorem. We sketch the process. All elements of the bound can be estimated or are already known. We can use $\hat{a}_i$ and $\hat{C}$ as estimates of the accuracies in standard WS, and $p_i$, $p_{d(r)}$, and $L_{\lf_i'}(r)$ are directly estimatable. The only other term left to approximate is $M_Y(r)$. There are two ways to do this: first, we can use the development set to directly compute the Lipschitzness upper bound. Alternatively, we can use Proposition \ref{thm:prop2} and estimate $M_{f_z}(r) + 2 \mathcal{R}(f_z)$ instead by choosing a simple model $f_z$ on the given embedding. Using these values, it is possible to select a threshold radius for $\lf_i$ to optimize the lower bound on the generalization lift.

\subsection{Additional Baselines}
\label{sec:supp_eval_baselines}

\begin{table*}[t]
    \centering
    \scriptsize
    \begin{tabular}{@{}rlccccccccccccc@{}}
    \toprule
    \multicolumn{3}{c}{}                           & \multicolumn{3}{c}{\textbf{Label Models}}             &\multicolumn{3}{c}{\textbf{NFT Baselines}}    \\
    \cmidrule(l){4-6} \cmidrule(l){7-10}                                                              
    & \textbf{Task} (Embedding) & \textbf{\sysx}   & \textbf{MV-LM} & \textbf{DP-LM} & \textbf{WS-LM}      & \textbf{TS-NFT} & \textbf{WS-NFT} & \textbf{TS-KNN} & \textbf{WS-KNN} \\
    \midrule                                                                                                            
    \parbox[t]{0mm}{\multirow{3}{*}{\rotatebox[origin=c]{90}{\textbf{NLP}}}}
    & \spam\ (BERT)             & \textbf{89.6}    & 84.0           & 86.0           & 83.6                & 78.5            & 87.2            & 49.2            & 48.4            \\
    & \weather\ (BERT)          & \textbf{82.0}    & 78.0           & 78.0           & 78.0                & 71.6            & 74.4            & 64.0            & 58.0            \\
    & \spouse\ (BERT)           & \textbf{48.4}    & 46.6           & 42.3           & 47.0                & 17.7            & 17.5            & 0.0             & 4.9             \\
    \midrule                                                                                                                                                     
    \parbox[t]{0mm}{\multirow{6}{*}{\rotatebox[origin=c]{90}{\textbf{Video}}}}
    & \basketball\ (RN-101)     & \textbf{31.3}    & 17.7           & 14.0           & 27.9                & 18.8            & 16.8            & 0.0             & 0.0             \\
    & \commercial\ (RN-101)     & \textbf{90.1}    & 83.7           & 83.7           & 88.4                & 73.6            & 75.5            & 30.7            & 30.3            \\
    & \tennis\ (RN-101)         & \textbf{82.9}    & 80.8           & 80.7           & 82.0                & 76.7            & 79.5            & 43.5            & 52.6            \\
    \cmidrule(l){2-10}
    & \basketball\ (BiT-M)      & \textbf{42.5}    & 17.7           & 14.0           & 27.9                & 22.8            & 23.2            & 5.6             & 10.2            \\
    & \commercial\ (BiT-M)      & \textbf{91.8}    & 83.7           & 83.7           & 88.4                & 71.7            & 73.8            & 29.3            & 45.3            \\
    & \tennis\ (BiT-M)          & \textbf{83.1}    & 80.8           & 80.7           & 82.0                & 75.5            & 79.0            & 50.1            & 52.1            \\
    \bottomrule
    \end{tabular}
    \caption{
    \sysx\ performance compared to additional label models and TL without
    fine-tuning baselines.
    MV-LM is majority vote over labeling function outputs; DP-LM is the label
    model from~\cite{Ratner18}; WS-LM is the label model from~\cite{fu2020fast}.
    TS-NFT and WS-NFT train a single fully-connected layer over pretrained
    embeddings using dev set labels or WS labels over the training set;
    TS-KNN and WS-KNN train a nearest neighbors classifier over pretrained
    embeddings using dev set labels or WS labels over the training set.
    }
    \vspace{-1em}
    \label{table:extra_baselines}
\end{table*}

Table~\ref{table:extra_baselines} shows additional interactive-speed baselines
that we compared against.
In addition to the label model from~\cite{fu2020fast}, which we compare against
in Section~\ref{sec:evaluation} (WS-LM), we also compare against the label
model from Snorkel~\cite{Ratner18} (DP-LM) and a majority vote baseline (MV-LM).
We also report results from another method of using pretrained embeddings
without fine-tuning---KNN search.
TS-KNN reports the performance of using the embeddings for a KNN classifier
trained with labels over the dev set, while WS-KNN reports the performance of a
KNN classifier trained over labels generated by WS-LM over the unlabeled
training set.

\subsection{Detailed Metrics}
\label{sec:supp_eval_metrics}


\begin{table*}[t]
    \centering
    \scriptsize
    \begin{tabular}{@{}rlccccccccccccc@{}}
    \toprule
    & \textbf{Task} (Embedding) & \textbf{TS-NFT}& \textbf{WS-NFT}&\textbf{TS-FT}&\textbf{WS-FT}&\textbf{\sysx-FT} \\
    \midrule                                                                                                            
    \parbox[t]{0mm}{\multirow{3}{*}{\rotatebox[origin=c]{90}{\textbf{NLP}}}}
    & \spam\ (BERT)             & 78.5 (10.8)    & 87.2 (4.2)     & 76.7 (17.1)& 90.0 (4.1)   & \textbf{94.1 (1.8)}\\
    & \weather\ (BERT)          & 71.6 (1.7)     & 74.4 (5.5)     & 71.2 (10.1)& 85.6 (2.2)   & \textbf{86.8 (3.0)}\\
    & \spouse\ (BERT)           & 17.7 (7.5)     & 17.5 (2.8)     & 20.4 (0.2) & 49.6 (2.4)   & \textbf{51.3 (0.9)}\\
    \midrule
    \parbox[t]{0mm}{\multirow{6}{*}{\rotatebox[origin=c]{90}{\textbf{Video}}}}
    & \basketball\ (RN-101)     & 18.8 (8.5)     & 16.8 (4.7)     & 26.8 (1.3) & 35.8 (1.5)   & \textbf{36.7 (1.3)}\\
    & \commercial\ (RN-101)     & 73.6 (2.6)     & 75.5 (1.3)     & 90.9 (1.0) & 92.5 (0.3)   & \textbf{93.0 (0.3)}\\
    & \tennis\ (RN-101)         & 76.7 (3.3)     & 79.5 (7.1)     & 57.6 (3.4) & 82.9 (1.0)   & \textbf{83.1 (0.5)}\\
    \cmidrule(l){2-7}                                             
    & \basketball\ (BiT-M)      & 22.8 (7.5)     & 23.2 (3.6)     & 29.1 (5.4) & 33.8 (7.3)   & \textbf{45.8 (5.2)}\\
    & \commercial\ (BiT-M)      & 71.7 (2.0)     & 73.8 (2.8)     & 93.2 (0.4) & 93.7 (0.3)   & \textbf{94.4 (0.2)}\\
    & \tennis\ (BiT-M)          & 75.5 (3.1)     & 79.0 (1.6)     & 47.5 (7.4) & 83.7 (0.1)   & \textbf{83.8 (0.4)}\\
    \bottomrule
    \end{tabular}
    \caption{
    Average and standard deviation in parentheses of methods trained with SGD,
    taken over five random seeds.
    }
    \vspace{-1em}
    \label{table:stddev}
\end{table*}

We provide more details about our experimental results.
First, we report measures of variance for non-deterministic methods.
Table~\ref{table:stddev} reports the means and standard deviations for the
results from Table~\ref{table:main_results} that required training layers of a
deep network using SGD.
We report results from runs with five random seeds.
All other methods (label models) are deterministic.


\begin{table*}[t]
    \centering
    \scriptsize
    \begin{tabular}{@{}rlccccccccccccc@{}}
    \toprule
    \textbf{Task} (Embedding)
    & \textbf{Metric} & \textbf{\sysx} & \textbf{WS-LM} \\
    \midrule
    \multirow{3}{*}{\spam\ (BERT)}
    & Precision       & \textbf{88.3}  & 86.7           \\
    & Recall          & \textbf{89.8}  & 77.1           \\
    & F1              & \textbf{89.1}  & 81.6           \\
    \midrule
    \multirow{3}{*}{\weather\ (BERT)}
    & Precision       & \textbf{91.3}  & 90.5            \\
    & Recall          & \textbf{75.0}  & 67.9            \\
    & F1              & \textbf{82.4}  & 77.6            \\
    \midrule
    \multirow{3}{*}{\spouse\ (BERT)}
    & Precision       & \textbf{41.6}  & 37.5           \\
    & Recall          & 57.8           & \textbf{58.3}  \\
    & F1              & \textbf{48.4}  & 45.7           \\
    \midrule
    \multirow{3}{*}{\basketball\ (RN-101)}
    & Precision       & 26.1           & \textbf{26.7}  \\
    & Recall          & \textbf{39.0}  & 29.3           \\
    & F1              & \textbf{31.3}  & 27.9           \\
    \midrule
    \multirow{3}{*}{\commercial\ (RN-101)}
    & Precision       & \textbf{91.1}  & 89.9           \\
    & Recall          & \textbf{89.0}  & 86.8           \\
    & F1              & \textbf{90.1}  & 88.3           \\
    \midrule
    \multirow{3}{*}{\tennis\ (RN-101)}
    & Precision       & 79.6           & \textbf{79.8}  \\
    & Recall          & \textbf{86.5}  & 85.7           \\
    & F1              & \textbf{82.9}  & 82.7           \\
    \midrule
    \multirow{3}{*}{\basketball\ (BiT-M)}
    & Precision       & \textbf{27.8}  & 26.7           \\
    & Recall          & \textbf{89.4}  & 29.3           \\
    & F1              & \textbf{42.5}  & 27.9           \\
    \midrule
    \multirow{3}{*}{\commercial\ (BiT-M)}
    & Precision       & \textbf{90.7}  & 89.9           \\
    & Recall          & \textbf{93.0}  & 86.8           \\
    & F1              & \textbf{91.8}  & 88.3           \\
    \midrule
    \multirow{3}{*}{\tennis\ (BiT-M)}
    & Precision       & \textbf{80.0}  & 79.8           \\
    & Recall          & \textbf{86.5}  & 85.7           \\
    & F1              & \textbf{83.1}  & 82.7           \\
    \bottomrule
    \end{tabular}
    \caption{
    Precision, Recall, and F1 scores for \sysx\ and WS-LM for all tasks.
    In most cases, the lift from \sysx\ is largely due to improvement in
    recall.
    }
    \vspace{-1em}
    \label{table:pre_rec_f1}
\end{table*}

Next, we discuss lift of \sysx\ over WS-LM in more detail, in terms of both
precision and recall.
As Table~\ref{table:embedding_utility} from Section~\ref{sec:evaluation}
suggests, \sysx\ primarily improves over WS-LM by improving coverage of the
labeling functions.
Table~\ref{table:pre_rec_f1} shows precision, recall, and F1 scores for all
tasks for \sysx\ and WS-LM.
In most tasks, the performance lift of \sysx\ over WS-LM is primarily reflected
in the increase in recall---exposing the mechanism through which increased
coverage results in improved performance.

\subsection{Measuring Changes from Fine-Tuning}
\label{sec:supp_eval_ft}


\begin{table*}[t]
    \centering
    \tiny
    \begin{tabular}{@{}rcccccccccccccc@{}}
        \toprule
        \multicolumn{1}{c}{} & \multicolumn{3}{c}{BERT} & \multicolumn{3}{c}{RN-101 ImageNet} & \multicolumn{3}{c}{BiT-M RN-50x1} \\
        \cmidrule(l){2-4} \cmidrule(l){5-7} \cmidrule(l){8-10}
        & \spam & \weather & \spouse & \basketball & \commercial & \tennis & \basketball & \commercial & \tennis \\
        \num{min. pre-pool similarity}
        & 0.313 & 0.293    & -0.061  & 0.481       & -0.023      & 0.323   & -0.061      & -0.092      & -0.069\\
        $1 - d(PT, DN)$
        & 0.735 & 0.677    & 0.248   & 0.562       & 0.284       & 0.113   & 0.907       & 0.559       & 0.408\\
        \cmidrule(l){2-10}
        \textbf{WS-FT vs. WS-NFT}
        & +2.8  & +11.2    & +32.1   & +19.0       & +17.0       & +3.4    & +10.6       & +19.9       & +4.7 \\
        \bottomrule
    \end{tabular}
    \caption{
    \num{min. pre-pool similarity} is the minimum similarity between pretrained
    and fine-tuned embeddings before the pooling
    step (tokens for text, spatial areas for visual).
    Fine-tuning can change the embeddings in ways that are not reflected in the
    overall embedding distance ($1 - d(PT-DN)$) but still move the
    features---resulting in a gap bettween WS-FT and WS-NFT.
    }
    \label{table:supp_extra_results}
\end{table*}

As discussed in Section~\ref{sec:evaluation}, the similarity between pretrained
and fine-tuned embeddings $1 - d(PT, DN)$ is a good predictor for how much lift
\sysx\ can provide over WS-LM.
However, it is not a good predictor of the relative performance of transfer
learning without fine-tuning.
We investigated this phenomenon, and found that a major cause of this gap is
that fine-tuning can change the embeddings in ways that are reflected in a
distance metric.

Table~\ref{table:supp_extra_results} demonstrates this phenomenon and reports
\textit{min. pre-pool similarity}, an additional metric demonstrating that
embeddings can be changed significantly during fine-tuning without the
differences being reflected in an overall similarity metric.
This metric looks at similarity one step before the final embeddings are
generated.
In both NLP and video applications, the pre-trained networks
generate many descriptors for each dataset; BERT generates an embedding for
each token in the sentence, while ResNet's and BiT-M generate embeddings over
convolutional windows of each frame.
Normally, these embeddings are pooled to generate a single embedding for the
data point.
min. pre-pool similarity reports the minimum similarity between matching
descriptors \textit{before} this pooling step.
For example, for a sentence, it compares embeddings of the first token
generated by the pre-trained and fine-tuned network, embeddings of the second
token, and so on, and takes the minimum similarity.
For the ResNet's, it compares matching convolutional windows for the same data
point.
For each of the datasets, this metric is \textit{low}, demonstrating that the
embeddings before the pooling step are changing significantly.
This helps explain why there can be a gap between fine-tuned and pre-trained
performance, even though the overall (pooled) similarity between fine-tuned and
pre-trained embeddings is high.

\subsection{Ablations}
\label{sec:supp_eval_ablations}


\begin{figure}
\begin{minipage}{0.36\textwidth}
    \centering
    \tiny
    \captionsetup{type=table}
    \begin{tabular}{@{}lcccccccccccccc@{}}
    \toprule
    \textbf{Task}         & \textbf{\sysx-fixed-$r$} & \textbf{\sysx}   \\
    \midrule                                                                                                           
    \spam\ (BERT)         & 88.0                     & \textbf{89.6} \\
    \weather\ (BERT)      & 80.0                     & \textbf{82.0} \\
    \spouse\ (BERT)       & 43.9                     & \textbf{48.4} \\
    \midrule                                                                                                             
    \basketball\ (RN-101) & 23.8                     & \textbf{31.3} \\
    \commercial\ (RN-101) & 89.1                     & \textbf{90.1} \\
    \tennis\ (RN-101)     & 82.6                     & \textbf{82.9} \\
    \midrule                                                                                                           
    \basketball\ (RN-101) & 34.5                     & \textbf{42.5} \\
    \commercial\ (RN-101) & 90.5                     & \textbf{91.8} \\
    \tennis\ (RN-101)     & 82.7                     & \textbf{83.1} \\
    \bottomrule \\
    \end{tabular}
    \caption{\sysx\ performance degrades when setting a fixed threshold for
    all labeling functions.
    }
    \label{table:ablations}
\end{minipage}
\begin{minipage}{0.01\textwidth}
~
\end{minipage}
\begin{minipage}{0.61\textwidth}
    \centering
    \includegraphics[width=3.4in]{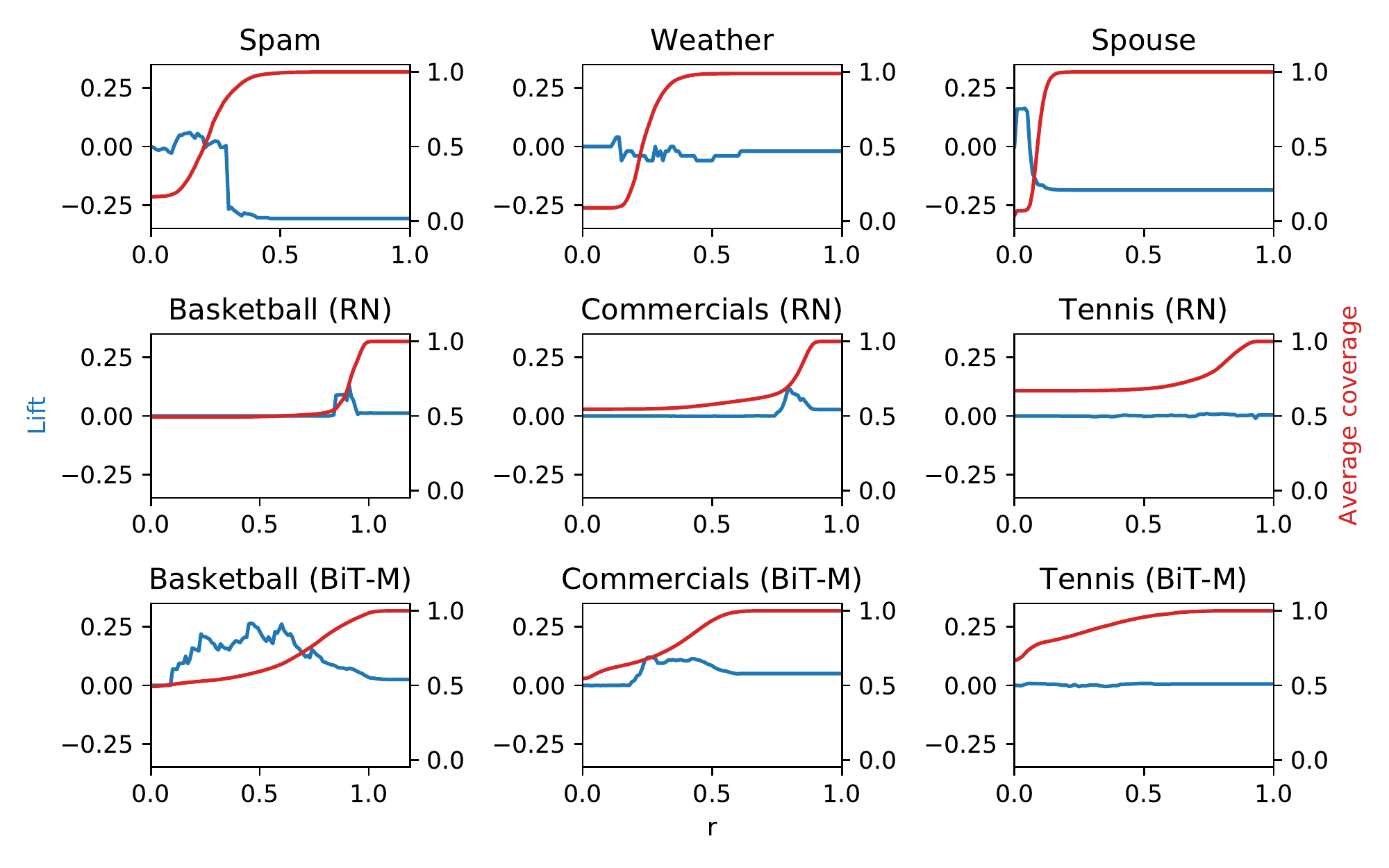} 
    \caption{Lift (blue, left Y axis) and coverage (red, right Y axis) for
    different thresholds (same threshold for all labeling sources).}
    \label{fig:threshold_ablations}
\end{minipage}
\end{figure}

We report the results of two ablation studies.
In the first, we use a single fixed threshold for extension for each labeling
function in a given task, instead of tuning different thresholds for different
labeling functions.
The results are shown in Table~\ref{table:ablations} in the \sysx-fixed-$r$
column.
Second, we report the effects of sweeping a fixed threshold for each task.
Figure~\ref{fig:threshold_ablations} shows relative lift and average coverage
of labeling functions for different values of a fixed threshold $r$.
For each task, there is a region where lift increases as $r$ increases
(corresponding to an increase in average coverage), but lift decreases if the
threshold is set too large (corresponding to the region where the extended
labeling functions become too inaccurate).

\subsection{Pre-Processing Time}
\label{sec:supp_eval_preprocess}


\begin{table}[ht!]
    \centering
    \scriptsize
    \begin{tabular}{lccclccc}
        \toprule
        \textbf{Task} (Embedding) & Embedding pre-processing time (s) \\ \midrule
        \spam\ (BERT)             & 21.3                              \\
        \weather\ (BERT)          & 6.1                               \\
        \spouse\ (BERT)           & 417.6                             \\
        \basketball\ (RN-101)     & 71.1                              \\ 
        \commercial\ (RN-101)     & 329.5                             \\
        \tennis\ (RN-101)         & 47.1                              \\
        \basketball\ (BiT-M)      & 85.6                              \\ 
        \commercial\ (BiT-M)      & 401.9                             \\
        \tennis\ (BiT-M)          & 91.5                              \\ \midrule
        Average                   & 163.5                             \\
        \bottomrule
    \end{tabular}
    \caption{
    One-time pre-processing cost in seconds to compute embeddings and pairwise
    distances for each dataset.
    }
    \label{table:preprocess}
\end{table}

Both \sysx\ and the baseline transfer learning without fine-tuning methods
require running inference over the datasets with pre-trained deep networks as a
pre-processing step.
Table~\ref{table:preprocess} reports this one-time pre-processing cost for each
dataset.
On average, pre-processing the data requires less than \num{three minutes}.

\end{document}